\newtheorem{theorem}{Theorem}
\newtheorem{proposition}{Proposition}
\newtheorem{corollary}{Corollary}
\newtheorem{lemma}{Lemma}
\newtheorem{definition}{Definition}
\newtheorem{assumption}{Assumption}
\newtheorem{remark}{Remark}
\title{ECPv2: Fast, Efficient, and Scalable Global Optimization of Lipschitz Functions}
\author {
    Fares Fourati\textsuperscript{\rm 1},
    Mohamed-Slim Alouini\textsuperscript{\rm 1},
    Vaneet Aggarwal\textsuperscript{\rm 2}
}
\begin{document}

\maketitle

\begin{abstract}
We propose ECPv2, a scalable and theoretically grounded algorithm for global optimization of Lipschitz-continuous functions with unknown Lipschitz constants. Building on Every Call is Precious (ECP) framework, which ensures that each accepted function evaluation is potentially informative, ECPv2 addresses key limitations of ECP, including high computational cost and overly conservative early behavior. ECPv2 introduces three innovations: (i) an adaptive lower bound to avoid vacuous acceptance regions, (ii) a Worst-$m$ memory mechanism that restricts comparisons to a fixed-size subset of past evaluations, and (iii) a fixed random projection to accelerate distance computations in high dimensions. We theoretically show that ECPv2 retains ECP’s no-regret guarantees with optimal finite-time bounds and expands the acceptance region with high probability. We further empirically validate these findings through extensive experiments and ablation studies. Using principled hyperparameter settings, we evaluate ECPv2 across a wide range of high-dimensional, non-convex optimization problems. Across benchmarks, ECPv2 consistently matches or outperforms state-of-the-art optimizers, while significantly reducing wall-clock time.
\end{abstract}

\begin{links}
\link{Code}{https://github.com/fouratifares/ECP}
\end{links}

\section{Introduction}
\label{sec:introduction}

Global optimization is a long-standing and fundamental challenge in optimization~\cite{torn1989global, horst2013handbook, floudas2014recent, zabinsky2013stochastic, stork2022new}. The goal is to identify the global maximum of a function that may be non-convex, non-smooth, and accessible only through black-box evaluations. Such settings render gradient-based and local methods insufficient, motivating algorithms that can efficiently explore the search space while maintaining broad coverage.

The challenge is further exacerbated in many real-world applications, where each function evaluation may be costly in terms of time, energy, or monetary resources, significantly limiting the number of allowable queries. Nevertheless, global optimization remains critical across a wide range of domains, including robotics~\cite{antonova2023rethinking, d2024achieving}, hyperparameter tuning for machine learning models~\cite{lindauer2022smac3}, and query optimization for black-box large language models~\cite{pmlr-v235-chen24e, pmlr-v235-lin24r, kharrat2024acing}. These demands underscore the need for methods that are not only accurate and robust, but also highly \emph{efficient}.

A natural structural assumption in global optimization is \emph{Lipschitz continuity}, which posits that the objective function varies at a bounded rate. While many real-world functions are Lipschitz continuous~\cite{torn1989global}, the true Lipschitz constant is typically unknown. Prior methods such as DIRECT~\cite{jones1993lipschitzian}, AdaLIPO~\cite{malherbe2017global}, and AdaLIPO+~\cite{serre2024lipo+} address this by either deterministically partitioning the search space or estimating the Lipschitz constant through uniform random sampling. However, these approaches often spend substantial evaluation budget on regions unlikely to contain the maximum, thereby reducing sample efficiency.

To address this, Every Call is Precious (ECP) algorithm~\cite{fourati25ecp} was proposed as a conservative and principled alternative to previous strategies. ECP evaluates a point only if it lies within an adaptively defined region of potential maximizers. This acceptance region is initialized conservatively, possibly even empty, and is progressively expanded until a viable candidate is found. By avoiding uniformly random exploration, ECP ensures that every function evaluation is justified and potentially informative. The algorithm enjoys no-regret guarantees and has demonstrated strong empirical performance across a wide range of synthetic and real-world tasks, consistently outperforming state-of-the-art methods from Lipschitz, Bayesian, evolutionary, and bandit optimization paradigms.

Despite its strong theoretical guarantees and empirical performance, ECP has several practical limitations. Its per-iteration cost scales linearly with both the number of previously evaluated points and the problem dimension, which can become prohibitive in long-horizon or high-dimensional settings. Moreover, its optimistic acceptance rule may overly constrain exploration in early iterations, leading to the premature rejection of potentially promising candidates. These limitations motivate the refinements introduced in this work.

\subsection*{Contributions}

\begin{enumerate}
    \item We propose ECPv2, a scalable global optimization algorithm that enhances the efficiency and practical applicability of ECP~\cite{fourati25ecp}, particularly in high-dimensional settings and under modest or large evaluation budgets. ECPv2 introduces three key innovations:
    \begin{itemize}
        \item An \emph{adaptive lower bound} $\varepsilon_t^\oslash$, theoretically derived from the acceptance rule, which avoids infeasible searches and reduces unnecessary early rejections;
        \item A \emph{worst-\( m \) memory mechanism}, which compares candidates only against the \( m \) worst previous evaluations, reducing both computational and memory costs;
        \item A \emph{fixed random projection} applied in the acceptance criterion, reducing the dimensionality $d$ of distance calculations from \( \mathcal{O}(d) \) to \( \mathcal{O}(\log(n)) \), where \( n \) denotes the maximum number of evaluations.
    \end{itemize}

    \item We provide theoretical guarantees for ECPv2. We show that the adaptive lower bound avoids unnecessary rejections; the combined mechanisms define an acceptance region that strictly contains that of ECP with high probability; and the algorithm retains no-regret performance with optimal finite-time bounds, while offering significantly improved computational efficiency.

    \item We empirically validate our theoretical findings through extensive experiments and ablations. We derive principled hyperparameter settings and evaluate ECPv2 on a broad suite of high-dimensional, non-convex optimization problems. Across benchmarks, ECPv2 matches or outperforms state-of-the-art methods, while reducing wall-clock time by a substantial margin.
\end{enumerate}


\section{Related Work}
\label{sec:related}

Global optimization has long been a foundational problem in optimization. Classical non-adaptive methods such as grid search \cite{zabinsky2013stochastic} or Pure Random Search (PRS) \cite{brooks1958discussion, zabinsky2013stochastic} offer simple baselines by exhaustively or randomly exploring the domain. However, these methods are sample-inefficient as they ignore prior evaluations and the structure of the objective function.

Adaptive strategies like evolutionary algorithms such as CMA-ES \cite{hansen1996adapting, hansen2006cma, hansen2019pycma}, simulated Annealing \cite{metropolis1953equation, kirkpatrick1983optimization}, and Dual Annealing \cite{xiang1997generalized, tsallis1988possible} improve over random baselines through adaptive exploration. However, they lack no-regret guarantees and often require substantial tuning and large numbers of evaluations.

Bayesian optimization (BO) \cite{frazier2018tutorial, balandat2020botorch} represents another prominent line of work, constructing a probabilistic model (typically Gaussian Processes) to guide the search via acquisition functions. BO performs well when model assumptions hold, but is sensitive to hyperparameter choices. Hyperparameter-free variants such as A-GP-UCB \cite{JMLR:v20:18-213} and recent approaches such as SMAC3 \cite{JMLR:v23:21-0888} have addressed these issues, but may still struggle in high-dimensional or low-budget regimes. Notably, ECP \cite{fourati25ecp} has been shown to match or even surpass the performance of these approaches on several problems.

To leverage confidence bounds, NeuralUCB \cite{zhou2020neural}, have been adapted to global optimization via repeated sampling and retraining \cite{fourati25ecp}. However, their reliance on data-hungry neural models limits practical efficiency.

To leverage smoothness, adaptive approaches have been proposed to improve efficiency. Tree-based algorithms such as Zooming \cite{kleinberg2008multi}, Hierarchical Optimistic Optimization (HOO) \cite{bubeck2011x}, and Deterministic Optimistic Optimization (DOO) \cite{munos2011optimistic} require known smoothness parameters, whereas Simultaneous Optimistic Optimization (SOO) \cite{munos2011optimistic, preux2014bandits, kawaguchi2016global} and SequOOL \cite{pmlr-v98-bartlett19a} remove this requirement by performing optimistic exploration across a hierarchy of scales. However, these algorithms often underperform in practice due to their discrete space partitioning.

For continuous optimization under unknown Lipschitz constants, the DIRECT algorithm \cite{jones1993lipschitzian, jones2021direct} employs a deterministic space-partitioning strategy, iteratively refining regions with the highest potential upper bounds. While parameter-free, DIRECT is computationally demanding in high dimensions and may become overly conservative.

To address these limitations, AdaLIPO \cite{malherbe2017global} introduced an adaptive stochastic strategy that estimates the Lipschitz constant online via random sampling. It uses this estimate to construct upper confidence bounds and filter potentially optimal points, offering no-regret guarantees. AdaLIPO+ \cite{serre2024lipo+} improves practical performance by reducing exploration over time. However, their reliance on uniform exploration can result in inefficient use of evaluations.

Recently, Every Call is Precious (ECP) algorithm \cite{fourati25ecp} was proposed as a more conservative alternative: it evaluates points only when they are consistent with being global maximizers under a Lipschitz assumption. It avoids redundant evaluations by enforcing a strict geometric acceptance condition, achieving no-regret guarantees while often outperforming existing methods. However, ECP suffers from overly strict rejection of potentially valuable candidates. Moreover, like AdaLIPO and AdaLIPO+, it faces scalability bottlenecks due to the computational cost of distance calculations, which grows linearly with both the problem dimensionality and the evaluation budget. These limitations motivate our proposed algorithm, ECPv2, which improves the scalability and efficiency of ECP while preserving its theoretical guarantees.

In summary, ECPv2 extends the conservative and theoretically grounded framework of ECP by addressing its key scalability challenges. It offers a practical, efficient alternative to state-of-the-art global optimization approaches, suitable for problems where evaluations are expensive.

\section{Global Optimization of Lipschitz Functions}
\label{sec:problem_statement}

We consider the problem of global optimization for a black-box, deterministic, non-convex function 
$
f: \mathcal{X} \rightarrow \mathbb{R}
$
that is expensive to evaluate. The function is defined over a compact, convex subset \(\mathcal{X} \subset \mathbb{R}^d\) with non-empty interior. 

In many practical applications, each function evaluation carries substantial computational, temporal, energetic, or monetary cost, making it essential to extract maximal value from every query. We assume access only to \emph{zeroth-order} oracle calls, that is, we may evaluate \( f(x) \) at chosen points \(x\), but no gradient or higher-order information is available. The objective is to identify a global maximizer
\[
    x^{\star} \in \arg\max_{x \in \mathcal{X}} f(x)
\]
within a finite budget of \(n\) function evaluations.

For the theoretical analysis, same as ECP \cite{fourati25ecp}, we assume a single and minimal assumption:
\begin{assumption}
We assume that \(f\) is Lipschitz-continuous with an \textbf{unknown} finite Lipschitz constant \(k \geq 0\), i.e.,
\[
\forall x,x' \in \mathcal{X}, \quad |f(x) - f(x')| \leq k \cdot \|x - x'\|_2.
\]
\end{assumption}

Let $\text{Lip}(k)$ denote the class of Lipschitz-continious functions with a Lipschitz constant $k$, and let $\text{Lip} := \bigcup_{k \geq 0} \text{Lip}(k)$ be the space of all Lipschitz functions.

Starting from an initial query \(x_1 \in \mathcal{X}\) and observing its evaluation \(f(x_1)\), an adaptive global optimization algorithm proposes a subsequent query \(x_2\) based on this first evaluation. In general, at each step \(i \geq 2\), it suggests the next point \(x_i\) to evaluate using all previous evaluations. After \(n\) steps, it outputs an index \(\hat{\imath} \in \{1, \dots, n\}\) corresponding to the best evaluation observed:
\[
\hat{\imath} \in \arg\max_{i=1,\dots,n} f(x_i).
\]

To evaluate the performance of a global optimization algorithm \( A \in \mathcal{G} \), where \( \mathcal{G} \) is the class of global optimization algorithms, over a function \( f \), we consider its final \emph{regret} after \( n \) iterations:
\begin{equation}
\label{eq:regret}
\mathcal{R}_{A,f}(n) := \max_{x \in \mathcal{X}} f(x) - \max_{i=1,\dots,n} f(x_i),
\end{equation}
which measures the suboptimality of the best point queried so far relative to the unknown global maximum.

We are interested in algorithms that achieve \emph{no-regret} over the \text{Lip} space, i.e., whose regret converges to zero in probability as the number of evaluations increases.
\begin{definition}[No Regret]
\label{def:no-regret}
An algorithm \( A \in \mathcal{G} \) is said to be no-regret over a class of functions \( \mathcal{F} \) if:
\[
\forall f \in \mathcal{F}, \quad \mathcal{R}_{A,f}(n) \xrightarrow{p} 0 \quad \text{as } n \to \infty,
\]
where \( \mathcal{R}_{A,f}(n) \) denotes the regret of algorithm \( A \) with respect to function \( f \) after \( n \) rounds, and \( \xrightarrow{p} \) denotes convergence in probability.
\end{definition}

We also define the \emph{radius} of the domain,
\[
\operatorname{rad}(\mathcal{X})
:= \max \bigl\{ r > 0 : \exists\, x \in \mathcal{X} \text{ such that } B(x,r) \subseteq \mathcal{X} \bigr\},
\]
where \(B(x,r) = \{x' \in \mathbb{R}^d : \|x - x'\|_2 \le r\}\) is the Euclidean ball, and the \emph{diameter}
\[
\operatorname{diam}(\mathcal{X})
:= \sup_{x,x' \in \mathcal{X}} \|x - x'\|_2.
\]

As established in prior work~\cite{bull2011convergence}, there is a fundamental limit on the best achievable regret when optimizing Lipschitz functions, even when the Lipschitz constant is known. From Theorem~1 in~\cite{bull2011convergence}:
\begin{proposition}[Minimax Regret Lower Bound \cite{bull2011convergence}]
\label{prop:minimax}
For any $f \in \text{Lip}(k)$, any $k \geq 0$, and any $n \in \mathbb{N}^{\star}$,
\[
\inf_{A \in \mathcal{G}} \sup_{f \in \text{Lip}(k)} \mathbb{E}[\mathcal{R}_{A,f}(n)] \geq \Omega (\operatorname{rad}(\mathcal{X}) \cdot k \cdot n^{-1/d}).
\]
where $\operatorname{rad}(\mathcal{X})$ denotes the radius of the domain $\mathcal{X}$ and the expectation is taken over the sampling distribution induced by algorithm $\mathcal{A}$ when optimizing $f$.
\end{proposition}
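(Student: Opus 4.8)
The plan is to establish this minimax lower bound through the classical \emph{packing plus needle-in-a-haystack} construction, which reduces global optimization to a hypothesis-testing problem that no algorithm can solve within $n$ queries. Since the claim is an $\Omega$ statement, I only need to exhibit a hard finite subfamily of $\text{Lip}(k)$ on which every algorithm incurs large expected regret; no matching upper bound is required.

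First I would build the packing. By definition of $\operatorname{rad}(\mathcal{X})$, there is a point $x_0$ with $B(x_0, \operatorname{rad}(\mathcal{X})) \subseteq \mathcal{X}$. Inside this ball I place $N$ pairwise-disjoint Euclidean balls $B(c_1, r), \dots, B(c_N, r)$ of a common radius $r$ to be fixed later; a standard volumetric packing argument guarantees $N = \Theta\big((\operatorname{rad}(\mathcal{X})/r)^d\big)$ such balls. To each center I associate a \emph{bump}
\[
g_j(x) := \max\!\big(0,\; k\,(r - \|x - c_j\|_2)\big),
\]
together with the baseline $g_0 \equiv 0$. Each $g_j$ peaks at $g_j(c_j) = k r$, vanishes outside $B(c_j, r)$, and is $k$-Lipschitz because $x \mapsto \|x - c_j\|_2$ is $1$-Lipschitz and taking a maximum with the constant $0$ preserves the Lipschitz constant; hence every $g_j \in \text{Lip}(k)$.

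Next comes the indistinguishability argument, the crux of the proof. Because the bumps have disjoint supports and all agree with $g_0$ outside their own ball, any query $x$ that does not fall inside $B(c_j, r)$ returns the identical value $0$ under both $g_0$ and $g_j$. Running a possibly adaptive algorithm on $g_0$ produces at most $n$ query points, which can intersect at most $n$ of the $N$ balls. Choosing $r$ so that $N > n$ forces an index $j^\star$ whose ball is never entered; arguing query by query, the algorithm then generates the \emph{same} trajectory on $g_{j^\star}$ as on $g_0$, never observes a nonzero value, and reports a best value of $0$, so its regret on $g_{j^\star}$ is at least $g_{j^\star}(c_{j^\star}) = k r$. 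For randomized algorithms I would instead draw $j^\star$ uniformly from $\{1,\dots,N\}$, so the expected number of hit balls is at most $n$ and a random bump is missed with probability at least $1 - n/N$; taking $N \ge 2n$ yields expected regret at least $\tfrac{1}{2} k r$.

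Finally I would optimize $r$. The constraint $N = \Theta\big((\operatorname{rad}(\mathcal{X})/r)^d\big) \ge 2n$ is met by setting $r = c\,\operatorname{rad}(\mathcal{X})\,n^{-1/d}$ for a dimension-dependent constant $c$, which turns the regret bound $k r$ into $\Omega\big(\operatorname{rad}(\mathcal{X}) \cdot k \cdot n^{-1/d}\big)$, as claimed. The main obstacle I anticipate is not any single estimate but the care needed to make the adaptive indistinguishability step fully rigorous: one must formally verify that the query-and-response transcripts under $g_0$ and $g_{j^\star}$ coincide, and for randomized algorithms convert this into an averaging (Le Cam- or Fano-type) bound, all while tracking the packing constant so that the final dependence on $\operatorname{rad}(\mathcal{X})$, $k$, and $n^{-1/d}$ is exactly as stated.
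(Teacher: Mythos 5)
The paper itself offers no proof of this proposition: it is imported as a citation of Theorem~1 of \cite{bull2011convergence}, so there is no internal argument to compare yours against. Judged on its own merits, your packing-plus-indistinguishability construction is the standard and correct way to prove this kind of minimax bound, and it delivers exactly the stated rate. The bump functions $g_j(x)=\max\bigl(0,\,k(r-\|x-c_j\|_2)\bigr)$ are indeed $k$-Lipschitz with disjoint supports; a deterministic adaptive algorithm run on $g_0\equiv 0$ receives only zero responses, so its transcript is query-by-query identical under $g_{j^\star}$ for any unvisited ball $j^\star$, forcing regret $kr$; and the averaging step (choose $j$ uniformly, note that $n$ queries hit at most $n$ of the $N\ge 2n$ balls, then couple the runs on $g_0$ and $g_J$ up to the first entry into ball $J$) handles randomized algorithms, giving expected regret at least $kr/2$ for some $j$. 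Two small points are worth tightening. First, since $\sup_j \mathbb{E}[\mathcal{R}] \ge \mathbb{E}_J\mathbb{E}[\mathcal{R}]$, the uniform-$J$ device already yields the $\sup$ over $\mathrm{Lip}(k)$ in the statement, so no separate Le Cam/Fano machinery is needed --- the elementary coupling suffices. Second, your worry about the packing constant is benign: a maximal $2r$-separated set inside $B(x_0,\operatorname{rad}(\mathcal{X})-r)$ gives $N\ge \bigl(\operatorname{rad}(\mathcal{X})/(4r)\bigr)^d$ disjoint $r$-balls, so taking $r=\tfrac{1}{4}\operatorname{rad}(\mathcal{X})(2n)^{-1/d}$ meets $N\ge 2n$ with an absolute (even dimension-independent) constant, and in any case the $\Omega(\cdot)$ in the statement tolerates dimension-dependent constants. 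With the coupling step written out formally, your argument is a complete and self-contained proof of the cited result.
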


This result underscores the fundamental challenge of high-dimensional global optimization, where the optimal regret decreases only at a rate of $n^{-1/d}$ \cite{bull2011convergence}. It highlights the crucial role of each function evaluation in driving meaningful progress, especially when the number of evaluations $n$ is limited and the dimensionality $d$ is large. 

In this context, the ECP algorithm \cite{fourati25ecp} embraces a conservative optimization philosophy: it \emph{evaluates only points that are likely to be potential maximizers}, employing Lipschitz-based acceptance conditions to filter candidate queries. This selection ensures that each evaluation has a potential to improve the current best solution. Notably, ECP achieves regret guarantees that match the theoretical lower bound, reflecting its strong theoretical foundation. However, its computational complexity grows linearly with both the number of evaluations and the problem dimensionality, which can significantly limit its scalability in large-scale settings. This limitation motivates the present work, which we discuss in detail in the following sections.

\section{Background}
\label{sec:ecp-motivation}

The ECP algorithm \cite{fourati25ecp} was proposed as a principled solution to global optimization when function evaluations are costly. It builds on the idea that each query to the objective function should be a potential optimizer, through a \emph{Lipschitz-based acceptance rule} that governs whether a candidate point \( x \in \mathcal{X} \) should be evaluated.

Concretely, given a current archive \( \{X_1, \dots, X_t\} \) of evaluated points and their values \( \{f(X_1), \dots, f(X_t)\} \), a new candidate \( x \) sampled from the uniform distribution \( \mathcal{U}(\mathcal{X}) \) is accepted for evaluation only if:
\begin{equation}
\label{eq:ecp-rule}
\min_{i=1,\dots,t} \left( f(X_i) + \varepsilon_t \cdot \|x - X_i\|_2 \right) \geq \max_{j=1,\dots,t} f(X_j),
\end{equation}
where \( \varepsilon_t > 0 \) is a time-dependent parameter.

Equation~\eqref{eq:ecp-rule} determines whether a candidate point $x$ remains \emph{compatible} with being the global maximizer under a Lipschitz assumption. If $\varepsilon_t$ were equal to the true (unkown) Lipschitz constant $k$, each quantity $f(X_i) + k\|x - X_i\|$ would represent the tightest upper bound on the unknown value $f(x)$, and the minimum over $i$ would give the most optimistic value that $x$ could still attain. The acceptance condition therefore evaluates $x$ only when this best-case estimate exceeds the current maximum, ensuring that no query is spent on points that are provably suboptimal. 

In practice, $\varepsilon_t$ begins as a small lower bound, yielding a conservative acceptance region, but increases over time toward $k$, progressively relaxing the test while guaranteeing that true maximizers are never discarded on the long horizon. The parameter $\varepsilon_t$ grows geometrically whenever the number of rejected candidates exceeds a threshold \( C > 1 \), by updating \( \varepsilon_t \gets \tau_{n,d} \cdot \varepsilon_t \), with \( \tau_{n,d} > 1 \). 

Rejections are tracked using a counter \( h_{t+1} \), which is initialized to zero and reset whenever \( \varepsilon_t \) is increased. At each iteration, \( h_t \) is set to the number of rejections from the previous step, and if the increase in rejections from one iteration to the next exceeds \( C \), the update is triggered. This mechanism enlarges the effective search space over time, allowing broader exploration. If a candidate fails the acceptance test, it is rejected without evaluation and a new point is sampled from \( \mathcal{U}(\mathcal{X}) \).

\begin{definition}
\label{def:potentially_accepted}
{\sc (ECP Acceptance Region)}
The set of points potentially accepted by ECP at any time step $t \geq 1$, for any value $\varepsilon_t > 0$, is defined as:
$
 \mathcal{A}_{\text{ECP}}(\varepsilon_t,t) \triangleq \{x \in \mathcal{X}: 
 \min_{i=1, \cdots, t}  (f(X_i) + \varepsilon_t \cdot \|x-X_i\|_2)
\geq \max_{j=1, \cdots, t}f(X_j) \}. 
$
\end{definition}

The ECP strategy offers several attractive properties. First, ECP achieves no-regret over Lipschitz functions, ensuring convergence to a global maximizer. Second, it is \emph{resource efficient}: by filtering out unpromising, ECP minimizes wasted calls to the objective function. Finally, ECP supports \emph{adaptive exploration}: its acceptance criterion dynamically adapts to the current state of knowledge, promoting a conservative yet effective exploration of the search space. However, despite these strengths, ECP suffers from some practical challenges that limit its scalability in high-dimensional or long-horizon settings:

\paragraph{Rejection Overhead:}  
When $\varepsilon_t$ is too small, the acceptance region may be empty, causing the algorithm to repeatedly reject proposals before adapting $\varepsilon_t$. While this is sufficient to guarantee that the acceptance region is not larger than enough, it delays progress and increases wall-clock time, especially in the early search stages.

\paragraph{Computational Inefficiency:}  
The acceptance condition in Equation~\eqref{eq:ecp-rule} requires computing distances to \emph{all} $t \leq n$ previous points, leading to at least $\Omega(n^2d)$ overall computational complexity. In high-dimensional spaces or with large evaluation budgets, this becomes a bottleneck.

Toward a scalable alternative and to overcome these limitations while preserving ECP’s conservative foundation, we propose ECPv2, which introduces key innovations, which make ECPv2 fast, efficient, and scalable, while retaining the theoretical guarantees. In the following sections, we formally describe ECPv2 and analyze its theoretical properties and empirical performance.

\section{ECPv2}

ECPv2, outlined in Algorithm~\ref{ALG:ECPv2_ALGORITHM}, takes as input a budget \( n \), a search space \( \mathcal{X} \), and an objective function \( f \). It builds upon the original ECP algorithm by retaining the core acceptance rule, including the geometric growth factor \( \tau_{n,d} > 1 \), the rejection threshold \( C > 1 \), and the initial parameter \( \varepsilon_1 > 0 \).

To improve scalability and performance in high-dimensional or complex settings, ECPv2 introduces three key modifications to the acceptance mechanism. These changes are briefly summarized below, with detailed explanations and theoretical guarantees provided in later sections.

\paragraph{(i) Adaptive lower bound $\varepsilon_t^\oslash \leq \varepsilon_t$.}
The size of the acceptance region at iteration $t$ in ECP is governed by the parameter $\varepsilon_t$, which appears on the left-hand side of the acceptance condition (Equation~\eqref{eq:ecp-rule}). When $\varepsilon_t$ becomes too small, the acceptance region may become vanishingly small or even empty, leading to an excessive number of rejections.

To prevent such overly conservative behavior, ECPv2 introduces a time-dependent lower bound on $\varepsilon_t$:
\begin{equation}
 \varepsilon_t^\oslash := \frac{\max_{i} f(x_i) - \min_{j} f(x_j)}{\operatorname{diam}(\mathcal{X})}.   
\end{equation}
This quantity ensures that the acceptance region is not trivially vacuous. At each step $t$, ECPv2 initializes $\varepsilon_t$ (line 14):
\[
\varepsilon_t = \max\left(\tau_{n,d} \cdot \varepsilon_{t-1}, \varepsilon_t^\oslash\right),
\]
where $\tau_{n,d} > 1$, same as in ECP, controls the growth of $\varepsilon_t$ over time. As shown in Lemma~\ref{lem:lb_eps}, this lower bound is necessary to guarantee that the acceptance region remains meaningfully sized, avoiding unnecessary rejections of promising candidates.  The acceptance condition becomes:
\begin{equation}
\label{new_acceptance_condition_with_lb}
\min_{i \in [t]} \left( f(x_i) + \max\{\varepsilon_t, \varepsilon_t^\oslash\} \cdot \|x - x_i\|_2 \right) \geq \max_{j\in [t]} f(x_j).
\end{equation}
The additional cost of maintaining $\varepsilon_t^\oslash$ is negligible, $\mathcal{O}(1)$ per iteration, since both $\max_i f(x_i)$ and $\min_j f(x_j)$ can be tracked incrementally during the optimization process. Lemma~\ref{lemma:lb_superset} further shows that the acceptance condition remains a superset of the previous acceptance condition.

\paragraph{(ii) Worst-$m$ memory mechanism.}
In addition to bounding $\varepsilon_t$, a major limitation of ECP is that the computational and memory cost of the acceptance condition grows linearly with the number of previous evaluations $t$. Moreover, as $t$ increases, the acceptance condition becomes increasingly conservative, often rejecting candidate points unnecessarily, particularly when $\varepsilon_t$ is still small.

To address these issues, and inspired by the stochastic maximization trick of \cite{pmlr-v235-fourati24a}, which reduces maximization cost by retaining only the most informative actions, ECPv2 restricts the acceptance test to the \(m\) worst-performing previously evaluated points. Formally, define:
\[
\mathcal{I}_t^{m} = \arg \min_{\substack{S \subseteq \{1, \dots, t\} \\ |S| = m}} \sum_{i \in S} f(x_i),
\]
where $\mathcal{I}_t^m$ indexes the $m$ points with the lowest objective values among the $t$ evaluated so far. The integer parameter $m \geq 1$ controls the trade-off between computational efficiency and conservatism: setting $m \geq t$ recovers the ECP condition, while smaller values of $m$ reduce both computational and memory costs.

The modified acceptance condition becomes:
\begin{equation}
\label{new_acceptance_condition_with_m}
\min_{i \in \mathcal{I}_t^{m}} \left( f(x_i) + \max\{\varepsilon_t, \varepsilon_t^\oslash\} \cdot \|x - x_i\|_2 \right) \geq \max_{j \in [t]} f(x_j).
\end{equation}

\paragraph{(iii) Introduce random projection.}
Instead of computing distances in the original space $\mathbb{R}^d$, ECPv2 uses a fixed random projection $\mathbf{P} : \mathbb{R}^d \to \mathbb{R}^{d'}$ to reduce computational cost, with $d'$ being function of the budget $n$ and a controlled maximal distortion $\delta \in [0,1)$. For $\delta = 0$ we consider an identity projection $\mathbf{I}_d$ (no projection) and revert to the case of ECP. For $\delta>0$, we can have an acceleration where projection is applied though a random projection matrix as:
\begin{equation}
\label{projection_definition}
 \mathbf{P} := \mathbf{1}_{\delta>0} \cdot \frac{1}{\sqrt{d'}} R^\top  + (1-\mathbf{1}_{\delta>0}) \cdot \mathbf{I}_d,  
\end{equation} 
where \( R \in \mathbb{R}^{d \times d'} \) is a matrix with i.i.d.\ entries drawn from \( \mathcal{N}(0, 1) \), with $d' = 8 \log(\beta n)/(\delta^2 - \delta^3)$. The choice of this form ensures that each projected vector \( x_i' = \mathbf{P} x_i \in \mathbb{R}^{d'} \) lies in a lower-dimensional subspace while preserving pairwise distances with high probability $(1-1/\beta^2)$. Here, \( \mathbf{1}_{\delta > 0} \) is the indicator function, which equals 1 if \( \delta > 0 \), and 0 otherwise.

The acceptance condition is then evaluated in the lower-dimensional projected space using a scaled parameter
$\tilde{\varepsilon}_t := \frac{\max\{\varepsilon_t,\varepsilon_t^\oslash\}}{\sqrt{1 - \delta}}.$ The modified acceptance rule becomes:
\begin{equation}
\label{new_acceptance_condition_with_proj}
\min_{i \in \mathcal{I}_t^{m}} \left( f(x_i) + \tilde{\varepsilon}_t \cdot \| \mathbf{P} x - \mathbf{P} x_i \|_2 \right) \geq \max_{j \in [t]} f(x_j),    
\end{equation}
where $\delta$ is the distortion due to projection and $\mathcal{I}_t^{m}$ is the Worst-$m$ memory set described above.

\begin{definition}
\label{def:potentially_accepted_v2}
{\sc (ECPv2 Acceptance Region)}
The set of points potentially accepted by ECPv2 at any time step $t \geq 1$, for any value $\varepsilon_t > 0$, for any controlled distortion $\delta \in [0,1)$, any value of $\beta>1$, resulting in a projection matrix $\mathbf{P}$ as define in Equation~\eqref{projection_definition}, and for any integer $m \geq 1$ is defined as:
$
 \mathcal{A}_{ECPv2}(\varepsilon_t, t, m, \mathbf{P}) \triangleq \{x \in \mathcal{X}: 
 \min_{i\in \mathcal{I}_t^{m}}  (f(X_i) + \frac{\max\{\varepsilon_t,\varepsilon_t^\oslash\}}{\sqrt{1 - \delta}} \cdot \|\mathbf{P}x-\mathbf{P}X_i\|_2)
\geq \max_{j=1, \cdots, t}f(X_j) \}. 
$
\end{definition}

The following sections detail the theoretical motivation and guarantees behind each modification.

\begin{algorithm}[t]
\caption{ECPv2}
\textbf{Input:} $n \in \mathbb{N}^{\star}$, $m \geq 1$, $\delta \in [0,1)$, $\beta>1$, $\varepsilon_1 > 0$, $\tau_{n,d} > 1$, $C > 1$, search space $\mathcal{X} \subset \mathbb{R}^d$, objective function $f$
\begin{algorithmic}[1]
\State Initialize projection matrix $\mathbf{P}$ as in Equation~\eqref{projection_definition}
\State Sample $x_1 \sim \mathcal{U}(\mathcal{X})$, evaluate $f(x_1)$
\State $\hat{x}_1 \gets \mathbf{P} x_1$, $t \gets 1$, $h_{1} \gets 1$, $h_{2} \gets 0$
\While{$t < n$}
\State Sample $x_{t+1} \sim \mathcal{U}(\mathcal{X})$, $h_{t+1} \gets h_{t+1} + 1$
\State $\hat{x}_{t+1} \gets \mathbf{P} x_{t+1}$ \Comment{(Projection \textbf{if} $\delta>0$)}
\If{$(h_{t+1} - h_t) > C$} \Comment{(Growth)}
\State $\varepsilon_t \gets \tau_{n,d} \cdot \varepsilon_t$, $h_{t+1} \gets 0$
\EndIf
\If{$x_{t+1} \in \mathcal{A}_{\text{ECPv2}}(\varepsilon_t, t, m, \mathbf{P})$} \Comment{(Acceptance)}
\State Evaluate $f(x_{t+1})$
\State $t \gets t + 1$, $h_t \gets h_{t+1}$
\State $\varepsilon^\oslash_{t+1} \gets \frac{\max_{i \le t} f(x_i) - \min_{i \le t} f(x_i)}{\operatorname{diam}(\mathcal{X})}$
\State $\varepsilon_{t+1} \gets \max\bigl(\tau_{n,d} \cdot \varepsilon_t, \varepsilon^\oslash_{t+1}\bigr)$, $h_{t+1} \gets 0$
\EndIf
\EndWhile
\State \Return $x_{\hat{i}}$ where $\hat{i} \in \arg\max_{i=1,\dots,n} f(x_i)$
\end{algorithmic}
\label{ALG:ECPv2_ALGORITHM}
\end{algorithm}

\section{Adaptive Lower Bound $\varepsilon_t^\oslash$}

In ECP algorithm, a new point $x \in \mathcal{X}$ is evaluated only if it satisfies the acceptance condition in Equation~\eqref{eq:ecp-rule}. This mechanism controls the exploration by controlling the admissibility of new samples. However, when $\varepsilon_t$ is too small, no candidate point may satisfy the acceptance condition, resulting in wasted proposals and inefficiency.

The ECP addresses this issue by gradually increasing $\varepsilon_t$ multiplicatively (i.e., $\varepsilon_t \leftarrow \varepsilon_t \cdot \tau_{n,d}$), which guarantees progress but can incur a large number of rejections before reaching a viable threshold. To reduce unnecessary overhead, we derive a principled lower bound on $\varepsilon_t$ to ensure that not every sampled point is trivially rejected.

\begin{lemma}[Lower Bound on $\varepsilon_t$]
\label{lem:lb_eps}
Let $\mathcal{X} \subset \mathbb{R}^d$ be a compact search space with bounded diameter $\operatorname{diam}(\mathcal{X})$.
Let $\{X_1, \dots, X_t\} \subset \mathcal{X}$ be the set of previously evaluated points. Define: $f_{\max_t} := \max_{i=1,\dots,t} f(X_i)$ and $f_{\min_t} := \min_{i=1,\dots,t} f(X_i).$
If there exists a point $x \in \mathcal{X}$ satisfying the ECP acceptance condition, then $\varepsilon_t$ must satisfy:
$\varepsilon_t \geq \varepsilon^\oslash_t =  \frac{f_{\max_t} - f_{\min_t}}{\operatorname{diam}(\mathcal{X})}.$
\end{lemma}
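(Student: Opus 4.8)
The plan is to extract the bound directly from the acceptance rule by evaluating it at the worst previously observed point. First I would observe that if some $x \in \mathcal{X}$ satisfies the ECP acceptance condition of Equation~\eqref{eq:ecp-rule}, then the minimum over $i$ of $f(X_i) + \varepsilon_t \|x - X_i\|_2$ is at least $f_{\max_t}$. Since a minimum being at least a value forces every individual term to be at least that value, this yields $f(X_i) + \varepsilon_t \|x - X_i\|_2 \ge f_{\max_t}$ for \emph{every} index $i \in \{1,\dots,t\}$.

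Next I would specialize this inequality to an index $i^\star \in \arg\min_{i} f(X_i)$, so that $f(X_{i^\star}) = f_{\min_t}$. Substituting gives $f_{\min_t} + \varepsilon_t \|x - X_{i^\star}\|_2 \ge f_{\max_t}$, which I rearrange into $\varepsilon_t \, \|x - X_{i^\star}\|_2 \ge f_{\max_t} - f_{\min_t}$. This isolates $\varepsilon_t$ against the distance from the candidate to the least-valued archive point.

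Then I would bound the distance by the diameter of the domain. Because both $x$ and $X_{i^\star}$ lie in $\mathcal{X}$, the definition of $\operatorname{diam}(\mathcal{X})$ gives $\|x - X_{i^\star}\|_2 \le \operatorname{diam}(\mathcal{X})$, hence $\varepsilon_t \operatorname{diam}(\mathcal{X}) \ge \varepsilon_t \|x - X_{i^\star}\|_2 \ge f_{\max_t} - f_{\min_t}$. Dividing through by $\operatorname{diam}(\mathcal{X})$, which is strictly positive since $\mathcal{X}$ is compact with non-empty interior, delivers the claimed bound $\varepsilon_t \ge \varepsilon^\oslash_t = (f_{\max_t} - f_{\min_t})/\operatorname{diam}(\mathcal{X})$.

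Honestly, there is no substantial obstacle here: the result is essentially a one-line consequence of the acceptance condition, and the entire content is the choice of the minimizing index together with the diameter upper bound on the distance. The only points warranting a touch of care are ensuring $\operatorname{diam}(\mathcal{X}) > 0$ so that the final division is legitimate, and noting that the argument is a necessary condition (contrapositive: whenever $\varepsilon_t < \varepsilon^\oslash_t$, no point can satisfy the test), which is precisely what justifies clamping $\varepsilon_t$ from below by $\varepsilon^\oslash_t$ in ECPv2. I would also remark that the bound is essentially tight, being approached when $x$ sits at distance $\operatorname{diam}(\mathcal{X})$ from the minimizing archive point.
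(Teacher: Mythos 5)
Your proof is correct and follows essentially the same argument as the paper: unpack the min in the acceptance condition into a pointwise inequality, apply it at the worst archive point, and bound the distance by $\operatorname{diam}(\mathcal{X})$. The only cosmetic difference is that the paper takes a maximum over all indices of the ratios $(f_{\max_t}-f(X_i))/\|x-X_i\|_2$ before invoking the diameter bound, whereas you specialize immediately to the minimizing index and multiply rather than divide, which neatly sidesteps the degenerate case $\|x - X_i\|_2 = 0$.
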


\noindent This bound (proved in Appendix C) provides for each time step $t$ a theoretically necessary initialization for $\varepsilon_t$ (for not being empty). Instead of relying solely on geometric growth, this lower bound can be used to re-initialize $\varepsilon_t$ adaptively, avoiding large rejection loops without sacrificing convergence or optimality.

For the same $\varepsilon_t$ it is shown (Appendix C) that the acceptance condition with the lower-bound is a super-set of ECP. 
\begin{lemma}
\label{lemma:lb_superset}
Let $\mathcal{A}_{\text{ECP}}(\varepsilon_t, t)$ denote the acceptance region defined by Equation~\eqref{eq:ecp-rule}, and let $\mathcal{A}_t(\varepsilon_t, t)$ denote the region defined by Equation~\eqref{new_acceptance_condition_with_lb}. Then, for any $\varepsilon_t$, we have
\[
\mathcal{A}_{\text{ECP}}(\varepsilon_t, t) \subseteq \mathcal{A}_t(\varepsilon_t, t).
\]
\end{lemma}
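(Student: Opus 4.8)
The plan is to exploit the monotonicity of the acceptance condition in the coefficient that multiplies the distance term, since the right-hand side $\max_{j} f(X_j)$ is identical in both Equation~\eqref{eq:ecp-rule} and Equation~\eqref{new_acceptance_condition_with_lb}. First I would record the elementary fact that $\max\{\varepsilon_t, \varepsilon_t^\oslash\} \geq \varepsilon_t$ holds unconditionally, by definition of the maximum, regardless of the sign or magnitude of $\varepsilon_t^\oslash$. Because every distance $\|x - X_i\|_2$ is non-negative, replacing $\varepsilon_t$ by the larger coefficient can only increase (or preserve) each individual term, i.e.\ for every $i \in [t]$,
\[
f(X_i) + \max\{\varepsilon_t, \varepsilon_t^\oslash\}\,\|x - X_i\|_2 \;\geq\; f(X_i) + \varepsilon_t\,\|x - X_i\|_2 .
\]

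Second, I would invoke the monotonicity of the pointwise minimum: if $g_i \geq h_i$ for all $i$, then $\min_i g_i \geq \min_i h_i$. Setting $g_i := f(X_i) + \max\{\varepsilon_t, \varepsilon_t^\oslash\}\,\|x - X_i\|_2$ and $h_i := f(X_i) + \varepsilon_t\,\|x - X_i\|_2$, the term-by-term inequality above gives
\[
\min_{i \in [t]} \bigl( f(X_i) + \max\{\varepsilon_t, \varepsilon_t^\oslash\}\,\|x - X_i\|_2 \bigr) \;\geq\; \min_{i \in [t]} \bigl( f(X_i) + \varepsilon_t\,\|x - X_i\|_2 \bigr).
\]

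Third, I would chain this with the membership hypothesis. Take any $x \in \mathcal{A}_{\text{ECP}}(\varepsilon_t, t)$, so that by Definition~\ref{def:potentially_accepted} the right-hand minimum above is at least $\max_{j \in [t]} f(X_j)$. Transitivity then forces the left-hand minimum to satisfy the same bound, which is precisely the condition in Equation~\eqref{new_acceptance_condition_with_lb} defining $\mathcal{A}_t(\varepsilon_t, t)$. Hence $x \in \mathcal{A}_t(\varepsilon_t, t)$, and since $x$ was arbitrary, $\mathcal{A}_{\text{ECP}}(\varepsilon_t, t) \subseteq \mathcal{A}_t(\varepsilon_t, t)$.

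I expect no genuine obstacle in this argument: the claim is essentially a one-line consequence of the monotonicity of the acceptance test in its distance coefficient. The only point requiring a moment of care is that the enlargement occurs solely on the left-hand side of the inequality while the threshold $\max_{j} f(X_j)$ is unchanged, so enlarging the coefficient can never eject a previously accepted point. No property of $\varepsilon_t^\oslash$ beyond $\max\{\varepsilon_t, \varepsilon_t^\oslash\} \geq \varepsilon_t$ is needed, which is why the inclusion holds for \emph{any} value of $\varepsilon_t$.
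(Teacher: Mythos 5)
Your proof is correct and follows essentially the same route as the paper: both rest on the observation that ECPv2 replaces $\varepsilon_t$ by $\max\{\varepsilon_t, \varepsilon_t^\oslash\} \geq \varepsilon_t$, so the result reduces to monotonicity of the acceptance region in the distance coefficient. The only difference is that the paper delegates that monotonicity to Lemma~1 of \cite{fourati25ecp}, whereas you prove it inline via the term-by-term and pointwise-minimum argument, making your version self-contained.
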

This result highlights that ECPv2 does not miss any potential maximizer that would have been accepted by ECP, and may even allow more potential maximizers to be accepted, ones that might have been rejected due to the overly restrictive value of \( \varepsilon_t \). Further empirical validation and discussion of this bound are provided in Appendix~F.

\section{Worst-$m$ Memory Mechanism}
\label{sec:worst-m-memory}

In the ECP algorithm, each new candidate $x \in \mathcal{X}$ is evaluated against \emph{all} previously queried points $\{X_1, \dots, X_t\}$ using the acceptance condition in Equation~\eqref{eq:ecp-rule}. While theoretically sound, this condition becomes increasingly restrictive and computationally costly as $t$ grows, particularly when many historical points are near-optimal or redundant. This scalability bottleneck can hinder both efficiency and exploratory behavior.

To mitigate these issues, and inspired by the stochastic maximization strategy introduced in~\cite{pmlr-v235-fourati24a}, we propose the \emph{Worst-$m$ memory mechanism}, which limits the acceptance check to only the $m$ \emph{worst-performing} points in the archive. Formally, let
\[
\mathcal{I}_t^{m} := \arg\min_{\substack{S \subseteq \{1,\dots,t\} \\ |S| = m}} \sum_{i \in S} f(X_i)
\]
denote the indices corresponding to the $m$ lowest-valued function evaluations among $\{f(X_1), \dots, f(X_t)\}$.

Intuitively, this strategy focuses rejection pressure on clearly suboptimal regions of the search space, thereby avoiding unnecessary constraints from high-performing points that might otherwise hinder exploration. By excluding these points, we obtain a broader acceptance region. Notably, when $m = n$, we exactly recover the ECP condition.

The Worst-$m$ memory rule \emph{relaxes} the original condition, inducing a larger acceptance region, as stated below.

\begin{lemma}
\label{lemma:acceptance-region-with-m}
Let $\mathcal{A}_t(\varepsilon_t, t)$ denote the acceptance region defined by Equation~\eqref{eq:ecp-rule}, and let $\mathcal{A}_t(\varepsilon_t, t, m)$ denote the region defined by Equation~\eqref{new_acceptance_condition_with_m}. Then, for any $m \geq 1$, we have
\[
\mathcal{A}_t(\varepsilon_t, t) \subseteq \mathcal{A}_t(\varepsilon_t, t, m).
\]
\end{lemma}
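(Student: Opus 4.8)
The plan is to establish the set inclusion by a direct pointwise argument. I would take an arbitrary $x \in \mathcal{A}_t(\varepsilon_t, t)$, i.e., a point satisfying the ECP acceptance rule in Equation~\eqref{eq:ecp-rule}, and verify that it also satisfies Equation~\eqref{new_acceptance_condition_with_m}. Both conditions share the same right-hand side, $\max_{j \in [t]} f(X_j)$, so it suffices to compare the two left-hand sides and show that the left-hand side of Equation~\eqref{new_acceptance_condition_with_m} dominates that of Equation~\eqref{eq:ecp-rule}.

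To that end, I would isolate the two separate relaxations at play. First, the worst-$m$ restriction replaces the index set $[t]$ by the subset $\mathcal{I}_t^m \subseteq [t]$ (with the convention $\mathcal{I}_t^m = [t]$ whenever $m \geq t$); since a minimum taken over a subset is never smaller than the minimum over the full set, I have $\min_{i \in \mathcal{I}_t^m} g(i) \geq \min_{i \in [t]} g(i)$ for any index function $g$. Second, the enlarged coefficient $\max\{\varepsilon_t, \varepsilon_t^\oslash\} \geq \varepsilon_t$ together with the nonnegativity of the distances $\|x - X_i\|_2 \geq 0$ yields, for every fixed index $i$, the pointwise bound $f(X_i) + \max\{\varepsilon_t, \varepsilon_t^\oslash\}\,\|x - X_i\|_2 \geq f(X_i) + \varepsilon_t\,\|x - X_i\|_2$.

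Chaining these two observations, I would write
\[
\min_{i \in \mathcal{I}_t^m}\bigl(f(X_i) + \max\{\varepsilon_t,\varepsilon_t^\oslash\}\,\|x-X_i\|_2\bigr)
\;\geq\; \min_{i \in \mathcal{I}_t^m}\bigl(f(X_i) + \varepsilon_t\,\|x-X_i\|_2\bigr)
\;\geq\; \min_{i \in [t]}\bigl(f(X_i) + \varepsilon_t\,\|x-X_i\|_2\bigr),
\]
where the first inequality applies the pointwise bound inside the minimum and the second is the subset-minimum inequality. Since $x$ satisfies Equation~\eqref{eq:ecp-rule}, the rightmost quantity is at least $\max_{j \in [t]} f(X_j)$; hence so is the leftmost quantity, which is exactly the condition in Equation~\eqref{new_acceptance_condition_with_m}. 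Therefore $x \in \mathcal{A}_t(\varepsilon_t, t, m)$, and the inclusion follows.

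I do not anticipate a genuine obstacle here, as the argument reduces to a two-step monotonicity chain. The only points that demand care are the direction of the subset inequality for minima — restricting to fewer indices can only \emph{raise} the minimum, not lower it — and keeping the two relaxations (the enlarged coefficient and the reduced index set) cleanly separated so that neither is inadvertently double-counted. The boundary case $m \geq t$ collapses the subset inequality to an equality, consistent with the remark that $m = n$ recovers the ECP condition.
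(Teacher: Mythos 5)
Your proof is correct and rests on essentially the same argument as the paper's: the minimum over the subset $\mathcal{I}_t^m \subseteq \{1,\dots,t\}$ can only be larger than the minimum over the full index set, and replacing $\varepsilon_t$ by $\max\{\varepsilon_t, \varepsilon_t^\oslash\} \geq \varepsilon_t$ can only increase each term, so the worst-$m$ left-hand side dominates the ECP left-hand side. The only difference is organizational: the paper splits into the cases $t \leq m$ and $t > m$ and delegates the coefficient enlargement to Lemma~\ref{lemma:lb_superset}, whereas you chain both monotonicity steps in a single uniform inequality, which handles the boundary case $m \geq t$ without separate treatment.
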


See Appendix~C for the proof and Appendix~G for empirical analysis of the Worst-$m$ strategy.

\section{Random Projection-Based Acceleration}
\label{sec:jl-theory}

In high-dimensional settings, computing distances between candidate points and the archive of previously evaluated points becomes a dominant computational bottleneck in ECP. To alleviate this, we introduce a \emph{projection-based acceleration} strategy that employs a fixed random projection $\mathbf{P}: \mathbb{R}^d \rightarrow \mathbb{R}^{d'}$ for which we show the following results that are proved in Appendix A.

\begin{lemma}
\label{lemma:distoriton_bound}
Let \( X = \{x_1, x_2, \ldots, x_n\} \subset \mathbb{R}^d \) be a set of \( n \) vectors. Fix a distortion parameter \( \delta \in [0, 1) \). Let the projection matrix $\mathbf{P}$ be generated as in Equation~\eqref{projection_definition}.
Let \( x_i' = \mathbf{P} x_i \) be the projection of each vector \( x_i \). If
$
d' \geq \frac{8 \log(\beta n)}{\delta^2 - \delta^3},
$
then with probability at least \( 1 - \frac{1}{\beta^2} \), the following inequality holds simultaneously for all \( i, j \in \{1, \ldots, n\} \):
\[
(1 - \delta)\|x_i - x_j\|_2^2 \leq \|x_i' - x_j'\|_2^2 \leq (1 + \delta)\|x_i - x_j\|_2^2.
\]
\end{lemma}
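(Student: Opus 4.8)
The plan is to recognize Lemma~\ref{lemma:distoriton_bound} as a standard Johnson--Lindenstrauss (JL) statement, so the proof reduces to (a) the single-pair concentration bound for a Gaussian random projection, and (b) a union bound over all pairs. Note that the claim only involves the random-projection branch of Equation~\eqref{projection_definition} (the case $\delta>0$), so throughout I take $\mathbf{P}=\frac{1}{\sqrt{d'}}R^\top$ with $R$ having i.i.d.\ $\mathcal{N}(0,1)$ entries.

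First I would fix an arbitrary pair $i,j$ and set $u := x_i - x_j \in \mathbb{R}^d$, so that $\mathbf{P}x_i - \mathbf{P}x_j = \mathbf{P}u = \frac{1}{\sqrt{d'}}R^\top u$. Writing $R^\top u \in \mathbb{R}^{d'}$ coordinate-wise, each of its $d'$ entries is $\sum_{\ell} R_{\ell k} u_\ell$, a Gaussian with mean $0$ and variance $\|u\|_2^2$; these $d'$ entries are independent because distinct columns of $R$ are independent. Hence $\|\mathbf{P}u\|_2^2 = \frac{1}{d'}\sum_{k=1}^{d'} Z_k^2$ where the $Z_k$ are i.i.d.\ $\mathcal{N}(0,\|u\|_2^2)$, giving $\mathbb{E}\,\|\mathbf{P}u\|_2^2 = \|u\|_2^2$. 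Normalizing, $\frac{d'\,\|\mathbf{P}u\|_2^2}{\|u\|_2^2}$ is a $\chi^2$ random variable with $d'$ degrees of freedom, so the distortion event is exactly a two-sided tail event for a chi-square variable.

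Next I would apply the standard chi-square concentration inequality (a Chernoff/Laplace-transform bound, e.g.\ Laurent--Massart): for a $\chi^2_{d'}$ variable $Q$,
\[
\Pr\bigl[\,|Q - d'| \geq \delta d'\,\bigr] \leq 2\exp\!\bigl(-\tfrac{d'}{4}(\delta^2-\delta^3)\bigr),
\]
which translates to the per-pair failure probability $\Pr\bigl[\,\|\mathbf{P}u\|_2^2 \notin [(1-\delta)\|u\|_2^2,(1+\delta)\|u\|_2^2]\,\bigr] \leq 2\exp\!\bigl(-\tfrac{d'}{4}(\delta^2-\delta^3)\bigr)$. This is the quantitative heart of the argument and the step I expect to be the main obstacle: one must carefully bound both tails of the chi-square, and it is precisely the $(\delta^2-\delta^3)$ factor in the exponent that dictates the required dimension $d'$ and that appears in the lemma's hypothesis. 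I would cite or reproduce the exponential-moment computation that yields this factor rather than re-derive it from scratch.

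Finally I would union-bound over the at most $\binom{n}{2} < n^2/2$ distinct pairs. Substituting the hypothesis $d' \geq \frac{8\log(\beta n)}{\delta^2-\delta^3}$ gives $\tfrac{d'}{4}(\delta^2-\delta^3) \geq 2\log(\beta n)$, so each pair fails with probability at most $2e^{-2\log(\beta n)} = 2/(\beta n)^2$. The total failure probability is then at most $\frac{n^2}{2}\cdot\frac{2}{(\beta n)^2} = \frac{1}{\beta^2}$, so with probability at least $1-1/\beta^2$ the two-sided distortion bound holds simultaneously for all pairs $i,j$, as claimed. The only care needed here is to confirm the pair count and the arithmetic in the exponent line up to produce exactly the stated $1-1/\beta^2$ guarantee.
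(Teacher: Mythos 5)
Your proof is correct and follows essentially the same route as the paper's: a per-pair $(1\pm\delta)$ concentration bound of the form $2\exp\bigl(-(\delta^2-\delta^3)d'/4\bigr)$ (which the paper simply cites as its adapted Arriaga--Vempala corollary, and which you re-derive via the $\chi^2_{d'}$ tail bound), followed by a union bound over the $\binom{n}{2} < n^2/2$ pairs with identical arithmetic giving total failure probability at most $1/\beta^2$. The only cosmetic difference is that the paper also disposes of the $\delta=0$ case explicitly (where $\mathbf{P}$ is the identity by construction and the claim is trivial), which your restriction to the $\delta>0$ branch leaves implicit.
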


The lemma is obtained by adapting classical Johnson-Lindenstrauss (JL) results \cite{arriaga2006algorithmic} describing the concentration properties of random Gaussian projections. These results guarantee that norms and pairwise distances are preserved up to controlled distortion after dimensionality reduction. The specific restated and adapted forms used in our analysis appear in Appendix A.

We now formalize the relationship between the acceptance regions in the original and projected spaces.

\begin{lemma}
\label{lemma:jl-projection-superset}
Let $\mathcal{A}_{ECPv2}(\varepsilon_t, t, m, \mathbf{P})$ denote the acceptance region of ECPv2 in Definition~\ref{def:potentially_accepted_v2} and let $\mathcal{A}_t(\varepsilon_t, t, m)$ denote the region defined by Equation~\eqref{new_acceptance_condition_with_m}. Let $\mathbf{P}$ be the random projection matrix with reduction $d'$ as in Lemma~\ref{lemma:distoriton_bound}. Then, for any $m \geq 1$, we have with probability at least $1-1/\beta^2$:
\[
\mathcal{A}_t(\varepsilon_t, t, m) \subseteq \mathcal{A}_{ECPv2}(\varepsilon_t, t, m, \mathbf{P}). 
\]
\end{lemma}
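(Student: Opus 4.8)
The plan is to reduce the set inclusion to a single term-by-term inequality that follows from the one-sided (lower) Johnson--Lindenstrauss bound of Lemma~\ref{lemma:distoriton_bound}, after which the factor $1/\sqrt{1-\delta}$ hidden in $\tilde{\varepsilon}_t$ is revealed to be the exact compensation that makes the projected test no stricter than the unprojected one. To declutter I would write $\Lambda := \max\{\varepsilon_t,\varepsilon_t^\oslash\} \geq 0$, so that the Worst-$m$ (pre-projection) condition of Equation~\eqref{new_acceptance_condition_with_m} reads $\min_{i\in\mathcal{I}_t^m}\bigl(f(x_i)+\Lambda\|x-x_i\|_2\bigr)\geq \max_{j\in[t]}f(x_j)$, while the ECPv2 condition of Definition~\ref{def:potentially_accepted_v2} reads $\min_{i\in\mathcal{I}_t^m}\bigl(f(x_i)+\tilde{\varepsilon}_t\|\mathbf{P}x-\mathbf{P}x_i\|_2\bigr)\geq \max_{j\in[t]}f(x_j)$ with $\tilde{\varepsilon}_t=\Lambda/\sqrt{1-\delta}$. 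Crucially, both conditions minimize over the \emph{same} index set $\mathcal{I}_t^m$, so the comparison is anchor-by-anchor.

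First I would condition on the high-probability event $E$ supplied by Lemma~\ref{lemma:distoriton_bound}: that $\mathbf{P}$ preserves all relevant squared pairwise distances up to factors $(1\pm\delta)$, which holds with probability at least $1-1/\beta^2$ precisely because $d' \geq 8\log(\beta n)/(\delta^2-\delta^3)$. On $E$ I only need the lower half of the guarantee, $(1-\delta)\|x-x_i\|_2^2 \leq \|\mathbf{P}x-\mathbf{P}x_i\|_2^2$; taking square roots (valid since $\delta\in[0,1)$ makes $\sqrt{1-\delta}$ real and positive) yields $\|\mathbf{P}x-\mathbf{P}x_i\|_2 \geq \sqrt{1-\delta}\,\|x-x_i\|_2$ for every anchor index $i\in\mathcal{I}_t^m$.

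Next I would multiply through by $\tilde{\varepsilon}_t=\Lambda/\sqrt{1-\delta}\geq 0$ and observe the cancellation $\tilde{\varepsilon}_t\|\mathbf{P}x-\mathbf{P}x_i\|_2 \geq \Lambda\|x-x_i\|_2$, so that each summand of the projected objective dominates its unprojected counterpart, $f(x_i)+\tilde{\varepsilon}_t\|\mathbf{P}x-\mathbf{P}x_i\|_2 \geq f(x_i)+\Lambda\|x-x_i\|_2$. Since this holds for every $i$ in the common index set $\mathcal{I}_t^m$, taking the minimum preserves the inequality, giving $\min_{i\in\mathcal{I}_t^m}\bigl(f(x_i)+\tilde{\varepsilon}_t\|\mathbf{P}x-\mathbf{P}x_i\|_2\bigr)\geq \min_{i\in\mathcal{I}_t^m}\bigl(f(x_i)+\Lambda\|x-x_i\|_2\bigr)$. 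If $x\in\mathcal{A}_t(\varepsilon_t,t,m)$ the right-hand side is at least $\max_{j\in[t]}f(x_j)$, hence so is the left-hand side, which is exactly membership in $\mathcal{A}_{ECPv2}(\varepsilon_t,t,m,\mathbf{P})$; this establishes the inclusion on $E$ and therefore with probability at least $1-1/\beta^2$.

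The main obstacle I anticipate is not the algebra but the correct bookkeeping of the probabilistic event. Lemma~\ref{lemma:distoriton_bound} guarantees distance preservation only for a \emph{fixed finite} set of vectors, whereas the candidate $x$ ranges over the uncountable $\mathcal{X}$ and the archive is generated adaptively under the one fixed draw of $\mathbf{P}$. I would resolve this by applying Lemma~\ref{lemma:distoriton_bound} to the relevant point set of cardinality at most $n$ --- the anchors $\{x_i\}_{i\in\mathcal{I}_t^m}$ together with the point under test --- so that the stated $1-1/\beta^2$ is exactly the JL failure probability for that set, and by reading the inclusion for the realized configuration rather than uniformly over all of $\mathcal{X}$. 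I would also emphasize that only the lower-bound half of the two-sided JL guarantee is invoked, since that is the single direction the scaling $\tilde{\varepsilon}_t$ is engineered to absorb; the upper bound plays no role here and would instead be the ingredient for any reverse (near-equality) comparison of the two regions.
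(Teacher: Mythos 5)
Your proof is correct and is essentially the paper's own argument: condition on the distance-preservation event of Lemma~\ref{lemma:distoriton_bound}, use the lower half $\|\mathbf{P}x-\mathbf{P}X_i\|_2 \geq \sqrt{1-\delta}\,\|x-X_i\|_2$ so that multiplying by $\tilde{\varepsilon}_t = \max\{\varepsilon_t,\varepsilon_t^\oslash\}/\sqrt{1-\delta}$ yields term-by-term dominance, and pass this through the minimum over the common index set $\mathcal{I}_t^m$ to conclude membership. If anything, your write-up is cleaner than the paper's, whose first displayed inequality states the unneeded upper-bound direction with the projected norm on the wrong side (an apparent typo, since the subsequent multiplication step requires exactly the lower-bound direction you invoke), and which does not spell out the finite-set bookkeeping for the JL event that you handle in your final paragraph.
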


Extensive empirical analysis of the projection mechanism, including its sensitivity to the parameters $\delta$ and $\beta$, is provided in Appendix~D, while principled values for these parameters are derived in a subsequent section.

Combining Lemmas~\ref{lemma:lb_superset}, \ref{lemma:acceptance-region-with-m}, and \ref{lemma:jl-projection-superset}, we obtain:

\begin{corollary}
\label{corollary:ecp2_superset_of_ecp}
Let $\mathcal{A}_{ECPv2}(\varepsilon_t, t, m, \mathbf{P})$ denote the acceptance region of ECPv2, as defined in Definition~\ref{def:potentially_accepted_v2}, and let $\mathcal{A}_{ECP}(\varepsilon_t, t)$ denote the region defined in Definition~\ref{def:potentially_accepted}. Let $\mathbf{P}$ be a random projection matrix with reduced dimension $d'$, as in Lemma~\ref{lemma:distoriton_bound}. Then, for any $m \leq n$, with probability at least $1 - 1/\beta^2$, the following inclusion holds:
\[
\mathcal{A}_{ECP}(\varepsilon_t, t) \subseteq \mathcal{A}_{ECPv2}(\varepsilon_t, t, m, \mathbf{P}). 
\]
\end{corollary}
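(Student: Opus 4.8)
The plan is to prove the corollary by chaining the three inclusion lemmas through a carefully chosen sequence of intermediate acceptance regions, making sure the parameter $\varepsilon_t$ is held fixed throughout so that each lemma applies verbatim. The subtle point is that the three modifications (adaptive lower bound, Worst-$m$ restriction, random projection) are introduced one at a time in the paper, each relaxing the acceptance condition relative to the previous one, so the corollary should follow by transitivity of set inclusion once the regions are lined up correctly.

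\begin{proof}
Fix any time step $t \geq 1$, any $\varepsilon_t > 0$, any $m \leq n$, any $\delta \in [0,1)$, and any $\beta > 1$. Let $\mathbf{P}$ be the random projection matrix of Equation~\eqref{projection_definition} with reduced dimension $d' = 8\log(\beta n)/(\delta^2 - \delta^3)$. We argue by composing the three inclusions established earlier.

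First, Lemma~\ref{lemma:lb_superset} gives the inclusion from ECP's original region to the region defined with the adaptive lower bound,
\[
\mathcal{A}_{\text{ECP}}(\varepsilon_t, t) \subseteq \mathcal{A}_t(\varepsilon_t, t),
\]
where $\mathcal{A}_t(\varepsilon_t, t)$ is the region of Equation~\eqref{new_acceptance_condition_with_lb}; intuitively, replacing $\varepsilon_t$ by $\max\{\varepsilon_t, \varepsilon_t^\oslash\} \geq \varepsilon_t$ only enlarges the left-hand side of the acceptance test, so no previously accepted point is lost. Next, Lemma~\ref{lemma:acceptance-region-with-m} gives
\[
\mathcal{A}_t(\varepsilon_t, t) \subseteq \mathcal{A}_t(\varepsilon_t, t, m),
\]
since restricting the minimum to the index set $\mathcal{I}_t^m \subseteq \{1,\dots,t\}$ can only increase (or leave unchanged) the value of the minimum, again relaxing the constraint. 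Finally, Lemma~\ref{lemma:jl-projection-superset} gives, with probability at least $1 - 1/\beta^2$ over the draw of $\mathbf{P}$,
\[
\mathcal{A}_t(\varepsilon_t, t, m) \subseteq \mathcal{A}_{ECPv2}(\varepsilon_t, t, m, \mathbf{P}),
\]
where the distance distortion from the Johnson--Lindenstrauss guarantee of Lemma~\ref{lemma:distoriton_bound} is absorbed by the rescaling $\tilde\varepsilon_t = \max\{\varepsilon_t, \varepsilon_t^\oslash\}/\sqrt{1-\delta}$. Chaining these three inclusions by transitivity yields
\[
\mathcal{A}_{\text{ECP}}(\varepsilon_t, t) \subseteq \mathcal{A}_{ECPv2}(\varepsilon_t, t, m, \mathbf{P})
\]
on the same high-probability event, which is the claimed result.
\end{proof}

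The main obstacle I anticipate is bookkeeping rather than mathematical depth: the notation $\mathcal{A}_t(\varepsilon_t, t)$ is used in both Lemma~\ref{lemma:lb_superset} (as the target region with the lower bound) and Lemma~\ref{lemma:acceptance-region-with-m} (as the source region equal to ECP's original condition), so I must verify these refer to the consistent intermediate object and that the $\varepsilon_t^\oslash$ term is correctly present or absent in each. The only genuinely probabilistic step is the projection inclusion, so the final statement inherits exactly the $1 - 1/\beta^2$ success probability of Lemma~\ref{lemma:jl-projection-superset}; the first two inclusions are deterministic and hold pointwise, so the high-probability qualifier attaches cleanly to the composed chain without any union bound being needed.
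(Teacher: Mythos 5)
Your proof is correct and takes essentially the same approach as the paper, which derives Corollary~\ref{corollary:ecp2_superset_of_ecp} precisely by combining Lemmas~\ref{lemma:lb_superset}, \ref{lemma:acceptance-region-with-m}, and \ref{lemma:jl-projection-superset} through transitivity of set inclusion. Your additional observations — that the intermediate region $\mathcal{A}_t(\varepsilon_t, t)$ must be read consistently as the lower-bound region of Equation~\eqref{new_acceptance_condition_with_lb}, and that only the projection step is probabilistic so the $1 - 1/\beta^2$ guarantee attaches to the whole chain without a union bound — are both accurate and faithfully reflect what the paper's lemma proofs implicitly assume.
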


Corollary~\ref{corollary:ecp2_superset_of_ecp} establishes that the acceptance region of ECPv2 strictly contains that of ECP with high probability. In particular, the proposed techniques, including the lower bound, the worst-\( m \) mechanism, and the projection-based acceleration, guarantee that any point accepted by ECP will also be accepted by ECPv2 with high-probability, ensuring that no potential maximizers are erroneously discarded due to an overly restrictive acceptance region. 

Importantly, this enlargement is controlled: the distortion \( \delta \) introduced by the projection can be made arbitrarily small with high probability by choosing a sufficiently large projection dimension, specifically \( d' = \mathcal{O}\left( \log(\beta n) / (\delta^2 - \delta^3) \right) \), while the worst-\( m \) mechanism can be adjusted via the choice of \( m \). Consequently, the worst-\( m \) and projection techniques provide mechanisms for accelerating ECP while preserving its conservative decision criteria. In high-dimensional settings, this yields substantial computational gains.

\begin{table*}[h]
\centering
\begin{tabular}{lccc}
\toprule
\textbf{Lipschitz Method} 
& \textbf{Memory} 
& \textbf{Runtime} 
& \textbf{Regret Upper Bound} \\
\midrule
AdaLIPO      
& $\mathcal{O}(nd)$ 
& $\Omega(n^2 d)$ 
& $\mathcal{O}_{1 - \frac{1}{\xi}}\left(
    c^{\ddag} \cdot  p^{-\frac{1}{d}}
    \cdot \left( \ln(3 / \xi) \right)^{\frac{1}{d}} \cdot k \cdot n^{-\frac{1}{d}}
  \right)$
\\

AdaLIPO+    
& $\mathcal{O}(nd)$ 
& $\Omega(n^2 d)$ 
& --- \\

ECP          
& $\mathcal{O}(nd)$ 
& $\Omega(n^2 d)$ 
& $\mathcal{O}_{1 - \frac{1}{\xi}}\left(
    c^\star
    \cdot \left( \ln(1 / \xi) \right)^{\frac{1}{d}} \cdot k \cdot n^{-\frac{1}{d}}
  \right)$
 \\

\textbf{ECPv2 (this work)}  
& $\mathcal{O}((m + d)\log(\beta n))$ 
& $\Omega(n (m + d)\log(\beta n))$ 
& $\mathcal{O}_{1 - \frac{1}{\beta^2} - \frac{1}{\xi}}\left(
    c^\star 
    \cdot \left( \ln(1 / \xi) \right)^{\frac{1}{d}} \cdot k \cdot n^{-\frac{1}{d}}
  \right)$ \\
\midrule
Lower Bound \cite{bull2011convergence} 
& --- 
& --- 
& $\Omega\left(\operatorname{rad}(\mathcal{X}) \cdot k \cdot n^{-\frac{1}{d}}
  \right)$ \\
\bottomrule
\end{tabular}
\caption{
Comparison of Lipschitz global optimization methods with unknown Lipschitz constants. 
$d$: dimension, $n$: evaluation budget, $m \ll n$: number of worst-performing points retained, 
$\beta$: ECPv2 hyperparameter, 
$p$: AdaLIPO sampling probability. \\
$c^\star = \operatorname{diam}(\mathcal{X}) \cdot \log_{\tau_{n,d}}\left(\frac{k}{\varepsilon_1}\right)^{1/d}$, 
$c^{\ddag} = \operatorname{diam}(\mathcal{X}) \cdot \left( 
5 + \frac{2 \ln(\xi / 3)}{\ln(1 - \Gamma(f, k_{i^*-1}))} \right)^{1/d}$.
}
\label{tab:ecp-comparison}
\end{table*}

\section{Computational Complexity Gains}
\label{sec:ecp-complexity}

A central motivation for ECPv2 is to reduce the high computational overhead of Lipschitz-based global optimization, especially in high-dimensional settings or under large evaluation budgets. Table~\ref{tab:ecp-comparison} compares the theoretical complexity of ECPv2 with existing methods, including ECP, AdaLIPO, and AdaLIPO+. The analysis highlight substantial reductions in both runtime and memory requirements.

\paragraph{Runtime Complexity Gains.}

In ECP, AdaLIPO, and AdaLIPO+, each iteration requires checking the acceptance condition against all previously evaluated points. This involves computing $t \leq n$ distances in $\mathbb{R}^d$, leading to a per-iteration complexity of $\Omega(td)$. Over $n$ evaluations, this results in an overall runtime of
$
\Omega(n^2 d),
$
which becomes prohibitive as either $n$ or $d$ increases. Rather than considering all $t = \mathcal{O}(n)$ past evaluations, ECPv2 computes the acceptance condition using only the $m$ worst-performing archive points, reducing the number of distance computations from $t$ to $m \ll n$. Furthermore, before computing distances, the candidate is projected from $\mathbb{R}^d$ to $\mathbb{R}^{d'}$ and the distance is then computed in $\mathbb{R}^{d'}$. Projection incurs a cost of $\Omega(dd')$ per iteration, and each projected distance is computed in $\Omega(d')$ time. With these techniques, the per-iteration complexity becomes
$\Omega(dd' + m d'),$
where we set $d' = \mathcal{O}(\log(\beta n))$. The total runtime complexity over $n$ evaluations is then
$\Omega\bigl(n (m + d) \log(\beta n)\bigr),$
which is linear in $n$ (up to a log factor) and only linear in $d$ when $m$ and $\beta$ are held constant.

\paragraph{Memory Complexity Gains.}

In addition to runtime savings, ECPv2 also provides substantial improvements in memory efficiency. Prior Lipschitz-based methods such as ECP, AdaLIPO, and AdaLIPO+ store the entire evaluation history, each of dimensionality $d$, yielding a memory complexity of $\mathcal{O}(nd)$. ECPv2, by contrast, only stores: A projection matrix of size $\mathcal{O}(d \log(\beta n))$ and the $m$ worst-performing projected points, requiring $\mathcal{O}(m \log(\beta n))$ memory.
Thus, the total memory complexity of ECPv2 is
$\mathcal{O}((m + d)\log(\beta n)),$
which scales  and only linearly with the dimension $d$ and only logarithmically with the budget $n$.

\section{No-Regret Guarantees}
\label{sec:regret_analysis}

Although the proposed mechanisms in ECPv2 significantly reduce computational cost and minimize unnecessary rejections, the algorithm retains theoretical guarantees. In particular, ECPv2 preserves the no-regret behavior over Lipschitz objectives. We present both asymptotic and finite-time guarantees below, with full proofs available in Appendix~B.

\begin{theorem}[Asymptotic No-Regret]
\label{thm:proba_convergence}
Let \( f \in \text{Lip}(k) \) for some unknown Lipschitz constant \( k > 0 \), and let ECPv2 be run with any \( \delta \in [0,1) \), \( \varepsilon_1 > 0 \), integer \( m \geq 1 \), $\beta>1$, \( \tau_{n,d} > 1 \), and \( C > 1 \). Then the simple regret satisfies:
\[
\mathcal{R}_{\text{ECPv2}, f}(n) \xrightarrow{p} 0.
\]
\end{theorem}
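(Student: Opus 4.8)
The plan is to mirror the no-regret argument for ECP, exploiting the fact that ECPv2's three modifications only enlarge the acceptance region, and to reduce everything to two facts: that the threshold parameter grows without bound, and that once it is large the acceptance region captures a fixed positive-measure slice of every super-level set of $f$. First I would show that $\varepsilon_t \to \infty$ along the (infinite) sequence of accepted points: by line~14 each acceptance updates $\varepsilon_{t+1} \geq \tau_{n,d}\varepsilon_t$ with $\tau_{n,d}>1$, while the rejection-driven growth of line~8 and the lower bound $\varepsilon_t^\oslash$ can only increase it, so $\varepsilon_t \geq \tau_{n,d}^{t-1}\varepsilon_1$. I would also record that the rejection loop between two accepts terminates almost surely, since once $\varepsilon_t$ is large the region $\mathcal{A}_{\mathrm{ECPv2}}$ has positive Lebesgue measure $\mu$ and a uniform sample is accepted with positive probability, guaranteeing infinitely many accepts.

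Next, fix $\eta>0$ and set $G_\eta := \{x \in \mathcal{X}: f(x) > \max_{\mathcal{X}} f - \eta\}$, which has positive measure since $f$ is continuous on a compact set with nonempty interior. The core step is to prove that once $\varepsilon_t$ exceeds a finite, $\eta$-dependent threshold, $\mathcal{A}_{\mathrm{ECPv2}}(\varepsilon_t,t,m,\mathbf{P})$ contains a subset of $G_\eta$ of measure at least $\tfrac12\mu(G_\eta)$, uniformly over the current archive and worst-$m$ index set. On the Johnson--Lindenstrauss good event of Lemma~\ref{lemma:distoriton_bound} this is immediate (and is exactly Corollary~\ref{corollary:ecp2_superset_of_ecp}): distance preservation gives $\tilde\varepsilon_t\|\mathbf{P} x-\mathbf{P} X_i\|_2 \geq \max\{\varepsilon_t,\varepsilon_t^\oslash\}\|x-X_i\|_2 \geq k\|x-X_i\|_2$ once $\varepsilon_t \geq k$, so by the Lipschitz bound $f(X_i)+k\|x-X_i\|_2 \geq f(x)$ together with $\min_{i\in\mathcal{I}_t^m}\geq\min_{i\in[t]}$, every $x$ with $f(x)\geq\max_j f(X_j)$ is accepted, and $G_\eta$ lies in this set whenever the current best is below $\max_{\mathcal{X}} f - \eta$. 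A geometric argument then closes the proof: conditioned on the past, each accepted point is uniform on $\mathcal{A}_{\mathrm{ECPv2}}$, so it lands in the guaranteed slice of $G_\eta$ with conditional probability at least $q_\eta := \tfrac12\mu(G_\eta)/\mu(\mathcal{X})>0$; the chance that none of the first $N$ post-threshold accepts does so is at most $(1-q_\eta)^N\to 0$. Since the threshold is reached after a number of accepts bounded independently of $n$, this gives $P(\mathcal{R}_{\text{ECPv2},f}(n)>\eta)\to 0$, and as $\eta$ was arbitrary, $\mathcal{R}_{\text{ECPv2},f}(n)\xrightarrow{p}0$.

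The hard part will be making the core step robust to the projection when the JL good event fails, an event of fixed probability $1/\beta^2$ that cannot be discarded if one wants genuine convergence in probability for fixed $\beta$. For this I would use that the fixed Gaussian matrix $\mathbf{P}$ is almost surely of rank $\min(d,d')$, so each fiber $\{x:\mathbf{P} x=\mathbf{P} X_i\}$ is a proper affine subspace of measure zero (trivially so when $d'\ge d$); excluding a $\gamma$-tube around the finitely many fibers, whose measure is at most $m\,c_{\mathcal{X}}\gamma^{d'}\to 0$ uniformly in the archive, leaves a positive-measure portion of $G_\eta$ on which every projected distance exceeds $\gamma$, whence $\min_{i\in\mathcal{I}_t^m}(f(X_i)+\tilde\varepsilon_t\|\mathbf{P} x-\mathbf{P} X_i\|_2)\geq \min_{\mathcal{X}} f + \tilde\varepsilon_t\gamma \geq \max_{\mathcal{X}} f \geq \max_j f(X_j)$ once $\tilde\varepsilon_t\geq(\max_{\mathcal{X}} f-\min_{\mathcal{X}} f)/\gamma$. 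The delicate point is choosing $\gamma$ once so that both the tube-measure bound and the acceptance threshold are uniform over the evolving archive and worst-$m$ set, which is precisely what renders the lower bound $q_\eta$ archive-independent and completes the argument on both events.
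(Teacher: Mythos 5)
Your proposal is correct, and it actually proves more than the paper's own argument does. The paper's proof of Theorem~\ref{thm:proba_convergence} chains the hitting-time bound (Lemma~\ref{hitting_time_upperbound}), the inclusion of potential maximizers in the acceptance region once $\varepsilon_t \geq k$ (Proposition~\ref{prop:potential}), and a stochastic-dominance comparison with Pure Random Search (Proposition~\ref{prop:fasterprs} combined with Lemma~\ref{prop:cvg_prs}); every one of these holds only on the Johnson--Lindenstrauss good event, so what the paper literally establishes for fixed $\beta$ is $\limsup_n \mathbb{P}(\mathcal{R}_{\text{ECPv2},f}(n) > \eta) \leq 1/\beta^2$, not convergence in probability. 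Your first two steps (geometric growth of $\varepsilon_t$ after each accept, capture of a positive-measure slice of each super-level set $G_\eta$, then a per-accept conditional acceptance probability $q_\eta$ with geometric decay) are an elementary re-derivation of the same mechanism, with your direct conditional-probability bound replacing the paper's induction-based stochastic dominance over PRS; the paper's route is heavier but is reused to get the quantitative finite-time bound of Theorem~\ref{thm:ecpupperbound}, which your decay argument does not yield. The genuinely new content is your last paragraph: the rank/tube argument on the JL-failure event, which the paper simply does not address, and which is exactly what upgrades the conclusion to the stated convergence in probability for a fixed $\beta$. You have correctly identified and repaired a gap in the published proof.

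Two loose ends in your sketch should be tightened. First, the tube measure is not controlled by a constant depending on $\mathcal{X}$ alone: $\mu\{x \in \mathcal{X} : \|\mathbf{P}(x - X_i)\|_2 \leq \gamma\}$ scales like $(\gamma/\sigma_{\min})^{\min(d,d')}$ up to domain-dependent factors, where $\sigma_{\min}$ is the smallest nonzero singular value of $\mathbf{P}$; hence your $\gamma$, and with it the number of accepts $T_\eta$ needed to pass the threshold, is a random variable determined by $\mathbf{P}$. This is harmless (it is independent of the evolving archive and a.s.\ finite, so $\mathbb{P}(T_\eta > n/2) \to 0$), but it must be said. Second, $\mathbf{P}$ is not one fixed matrix across budgets: $d' = \lceil 8\log(\beta n)/(\delta^2-\delta^3)\rceil$ grows with $n$, so the matrix is redrawn for each $n$ and your phrase ``bounded independently of $n$'' needs justification. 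The cleanest patch is to note that once $d' \geq d$ the map is a.s.\ injective and $\sigma_{\min}\bigl(\tfrac{1}{\sqrt{d'}}R^\top\bigr)$ concentrates near $1$, so $T_\eta$ is stochastically bounded in $n$ (alternatively, under the implementation convention of the paper's Appendix~E remark, projection is skipped entirely once $d \leq d'$, and the failure event disappears for all large $n$).
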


\begin{theorem}[Finite-Time Regret Bound]
\label{thm:ecpupperbound}
Let \( f \in \text{Lip}(k) \) be a non-constant function over a compact domain \( \mathcal{X} \subset \mathbb{R}^d \), and let ECPv2 be tuned with any \( \delta \in [0,1) \), integer \( m \geq 1 \), $\beta>1$, \( \varepsilon_1 > 0 \), \( \tau_{n,d} > 1 \), and \( C > 1 \). Then for any budget \( n \in \mathbb{N}^\star \) and confidence level \( \xi \in (0, 1 - \frac{1}{\beta^2}) \), with probability at least \( 1 - \frac{1}{\beta^2} - \xi \), the simple regret satisfies:
\[
\mathcal{R}_{\text{ECPv2}, f}(n)
\leq
k \cdot
\operatorname{diam}(\mathcal{X}) \cdot \log_{\tau_{n,d}}\left(\frac{k}{\varepsilon_1}\right)^{\frac{1}{d}}
\left(  \frac{\ln(1/\xi)}{ n } \right)^{\frac{1}{d}}.
\]
\end{theorem}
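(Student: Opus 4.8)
The plan is to reduce ECPv2's finite-time regret to that of ECP by exploiting the acceptance-region inclusions already established, so that the known ECP bound transfers directly. The key observation is that Corollary~\ref{corollary:ecp2_superset_of_ecp} guarantees $\mathcal{A}_{ECP}(\varepsilon_t,t) \subseteq \mathcal{A}_{ECPv2}(\varepsilon_t,t,m,\mathbf{P})$ with probability at least $1-1/\beta^2$. This means that, on the high-probability event where the Johnson–Lindenstrauss distortion bound of Lemma~\ref{lemma:distoriton_bound} holds, ECPv2 accepts every point ECP would accept (and possibly more). The regret analysis of ECP hinges on the fact that true maximizers are never permanently excluded once $\varepsilon_t$ grows large enough to meet the Lipschitz constant $k$; since ECPv2's acceptance region is a superset, this exclusion-avoidance property is preserved verbatim.

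\medskip
First I would condition on the JL event $\mathcal{E}_{\text{JL}}$, which holds with probability at least $1-1/\beta^2$; throughout the argument we work on this event, and the final probability $1-1/\beta^2-\xi$ arises from a union bound with the ECP-style $\xi$-confidence event. Second, I would invoke the finite-time regret argument of ECP (which the excerpt attributes to \cite{fourati25ecp}): the core mechanism is that $\varepsilon_t$ grows geometrically by the factor $\tau_{n,d}$, so after at most $\log_{\tau_{n,d}}(k/\varepsilon_1)$ growth steps we have $\varepsilon_t \geq k$, at which point the true Lipschitz upper bounds $f(X_i)+k\|x-X_i\|_2$ dominate and the acceptance region provably contains a ball around the global maximizer $x^\star$. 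The scaled parameter $\tilde\varepsilon_t = \max\{\varepsilon_t,\varepsilon_t^\oslash\}/\sqrt{1-\delta}$ and the lower bound $(1-\delta)\|x-x_i\|_2^2 \leq \|\mathbf{P}x-\mathbf{P}x_i\|_2^2$ from Lemma~\ref{lemma:distoriton_bound} together ensure that on $\mathcal{E}_{\text{JL}}$ the projected acceptance test is no stricter than the unprojected one at level $\varepsilon_t$, which is precisely why the $\sqrt{1-\delta}$ normalization was chosen.

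\medskip
Third, the volume/covering argument quantifies the regret: once a ball $B(x^\star,\rho)$ of radius $\rho \sim \operatorname{diam}(\mathcal{X})\cdot(\ln(1/\xi)/n)^{1/d}$ lies inside the acceptance region, uniform sampling hits it with high probability within the remaining budget, and Lipschitz continuity converts the achieved proximity $\rho$ to $x^\star$ into a regret of at most $k\rho$. Assembling the factor $\log_{\tau_{n,d}}(k/\varepsilon_1)^{1/d}$ (the number of growth phases, entering through the covering radius) with $k\cdot\operatorname{diam}(\mathcal{X})$ and the sampling term yields exactly the stated bound. The asymptotic statement of Theorem~\ref{thm:proba_convergence} then follows by letting $n\to\infty$ with $\beta$ fixed or growing, so the bound tends to zero in probability.

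\medskip
\textbf{The main obstacle} I anticipate is not the algebra of the covering radius, which is routine once the inclusions are in place, but rather verifying that the \emph{worst-$m$ memory} and \emph{adaptive lower bound} modifications do not disrupt the growth dynamics of $\varepsilon_t$ that drive ECP's analysis. Specifically, one must check that replacing the full archive with $\mathcal{I}_t^m$ and re-initializing via $\varepsilon_t^\oslash$ can only \emph{enlarge} the acceptance region (as Lemmas~\ref{lemma:lb_superset} and \ref{lemma:acceptance-region-with-m} assert) without ever \emph{shrinking} it below ECP's, since a smaller region at the critical scale $\varepsilon_t \approx k$ could break the guarantee that $B(x^\star,\rho)$ is eventually admitted. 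The superset chain of Corollary~\ref{corollary:ecp2_superset_of_ecp} resolves exactly this concern, so the crux is arguing carefully that the ECP regret proof depends only on a \emph{lower} bound on the acceptance region's size at each scale—a property inherited monotonically by any superset—rather than on the precise identity of the region.
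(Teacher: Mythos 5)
Your proposal is correct and follows essentially the same route as the paper's own proof: bound the hitting time $i^\star \le \lceil \log_{\tau_{n,d}}(k/\varepsilon_1)\rceil$ via the geometric growth of $\varepsilon_t$, use the superset inclusions (Corollary~\ref{corollary:ecp2_superset_of_ecp}, hence Proposition~\ref{prop:potential}) to show that once $\varepsilon_t \ge k$ the accepted samples hit any super-level set at least as often as uniform sampling, and finish with a union bound over the Johnson--Lindenstrauss event ($1/\beta^2$) and the sampling failure event ($\xi$). The only difference is presentational: the paper packages the post-hitting-time step as an explicit stochastic-dominance result (Proposition~\ref{prop:fasterprs}) and then invokes the cited PRS regret bound (Lemma~\ref{prop:cvg_prs}) together with the algebraic step $n/(n-i^\star+1) \le i^\star$, whereas you inline the covering-ball argument that underlies that lemma.
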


\paragraph{Discussion.}  
Theorem~\ref{thm:proba_convergence} establishes that ECPv2 is no-regret. Theorem~\ref{thm:ecpupperbound} further provides a finite-time performance guarantee, bounding the simple regret with high probability. Specifically, ECPv2 achieves the minimax optimal convergence rate of \( \mathcal{O}(k n^{-1/d}) \), matching the lower bound \( \Omega(k n^{-1/d}) \) in Proposition~\ref{prop:minimax}. Notably, this is obtained with significantly reduced runtime and memory cost compared to the previous Lipschitz-based optimizers (see Table~\ref{tab:ecp-comparison}).

In fact, ECPv2 attains the same regret upper bound as ECP, but with a slightly lower confidence level. This trade-off is explicitly governed by the parameter \(\beta\), which balances three factors: computational complexity, memory usage, and the confidence level of the regret bound. A larger \(\beta\) improves the confidence guarantee (bringing it closer to 1), but potentially increases the overhead dealing with larger dimensions. Conversely, a smaller \(\beta\) accelerates execution at the cost of a looser high-probability guarantee.

\begin{figure}[t]
    \centering
    \includegraphics[width=0.95\linewidth]{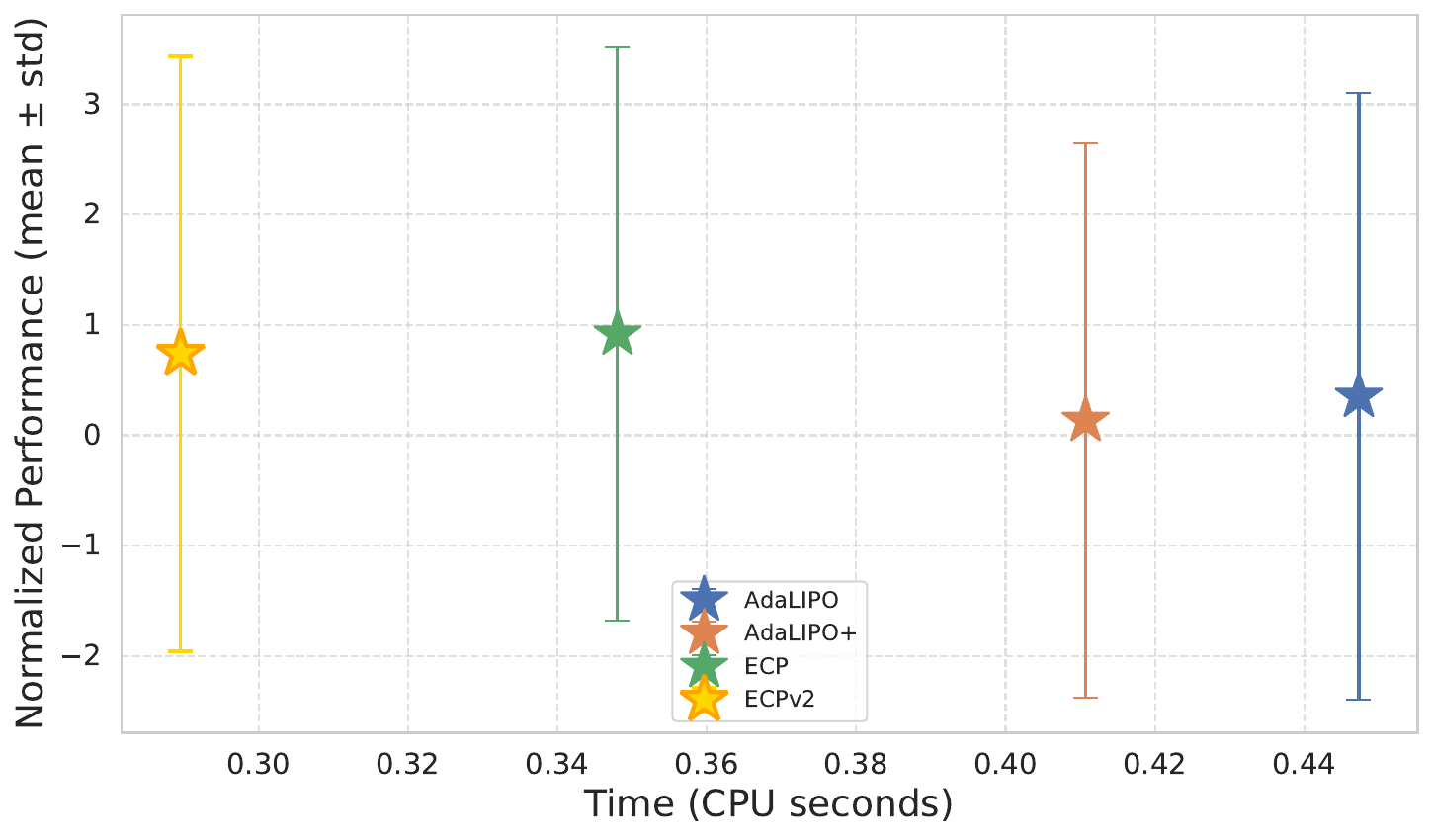}
    \caption{
        Comparison of Lipschitz optimization methods on Rosenbrock with $d \in \{3, 100, 200, 300, 500\}$. Each star shows the mean performance across dimensions after $n=200$ evaluations, averaged over 100 runs. ECPv2 uses hyperparameters ($\beta = 5$, $\delta = 2/3$, $m = 8$).
    }
    \label{fig:rosenbrock_ecpv2_comparison}
\end{figure}

\begin{figure*}[t]
    \centering

    \begin{subfigure}[t]{0.46\textwidth}
        \centering
        \includegraphics[width=\linewidth]{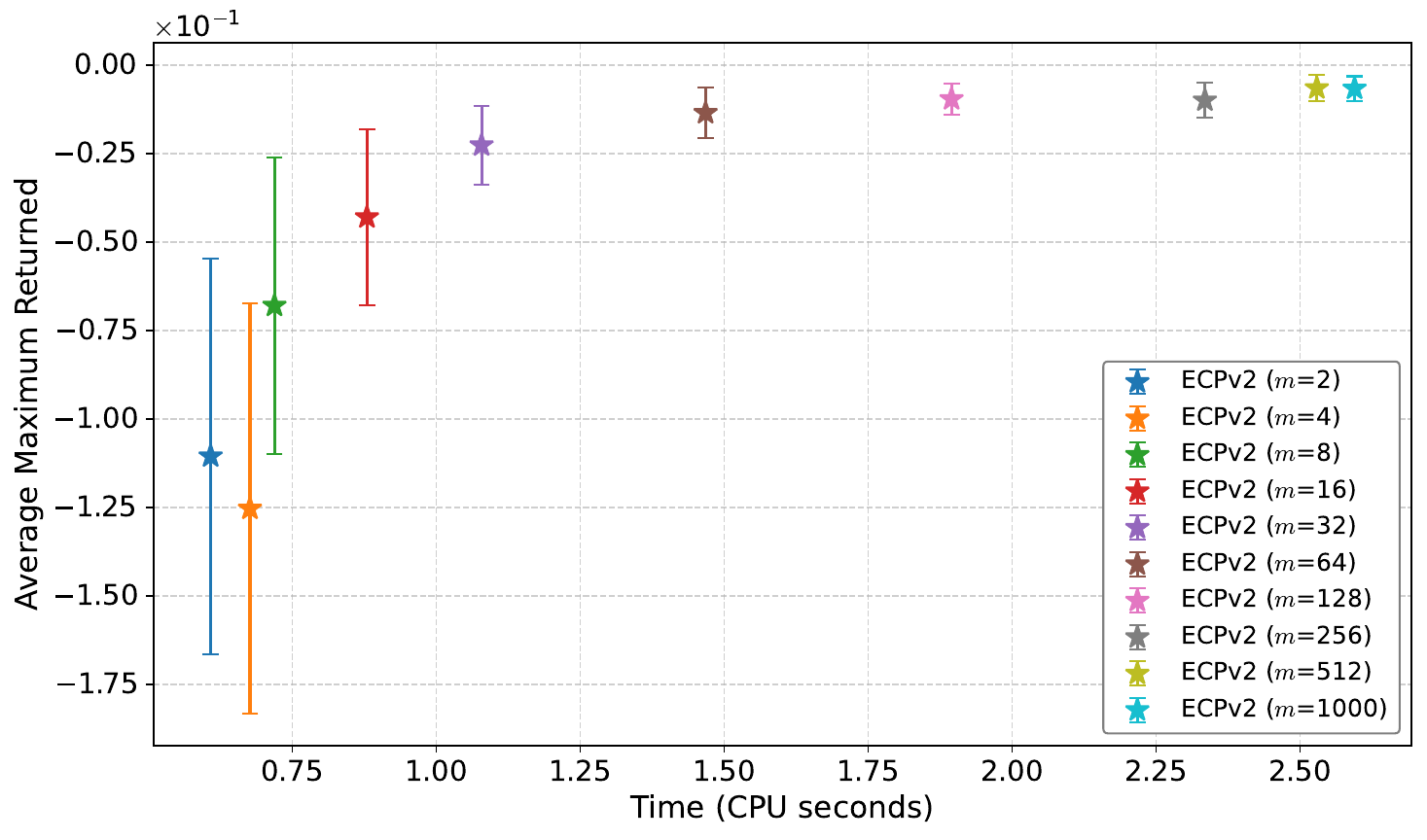}
        \caption{Himmelblau}
    \end{subfigure}
    \hspace{0.06\textwidth}
    \begin{subfigure}[t]{0.46\textwidth}
        \centering
        \includegraphics[width=\linewidth]{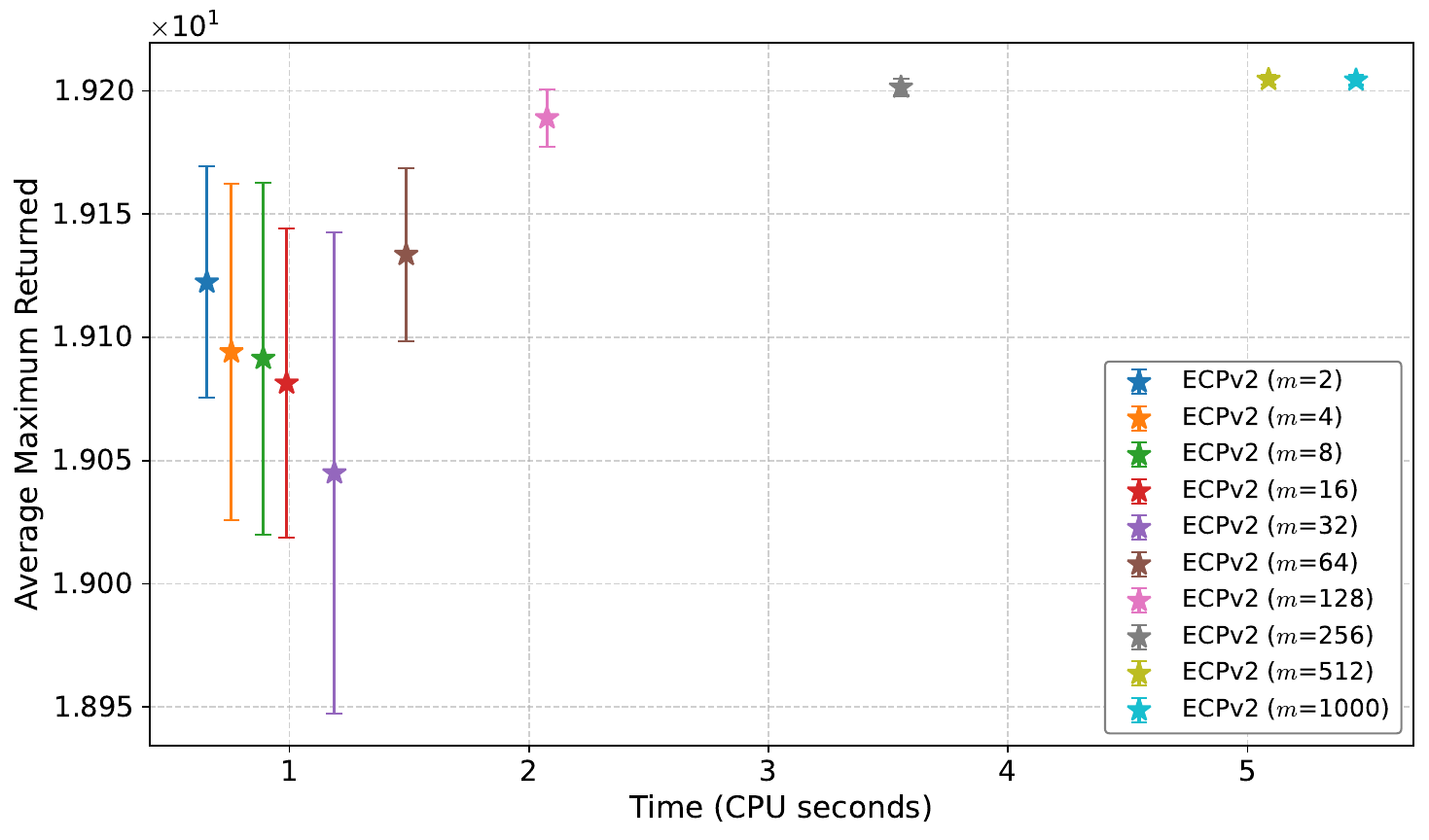}
        \caption{Hölder}
    \end{subfigure}

    \caption{Ablation study on the projection dimension $m$ in ECPv2, using fixed parameters $\delta = 2/3$ and $\beta = 5$. For each benchmark function, every method is allocated $n = 1000$ evaluations and performance is averaged over 100 independent runs. Each point reports the mean final best value with $\pm$ half a standard 
    deviation.}
    \label{fig:ecpv2_ablation_m_main}
\end{figure*}

\section{Principled Hyperparameter Choices ($\beta$, $\delta$)}

The random projection component introduces the distortion tolerance \( \delta \) and the confidence scaling factor \( \beta \). These parameters jointly determine the reduced projection dimension \( d' \) via the bound established in Lemma~\ref{lemma:distoriton_bound}.
The parameter \( \beta \) governs the probability that all pairwise distances are preserved within the specified distortion interval. 

To ensure high-confidence embeddings, with a 96\% success rate, we fix \( \beta = 5 \). Furthermore, we analytically derive the value of \( \delta \) that minimizes the required projection dimension, and find the optimal setting to be \( \delta = \frac{2}{3} \). This choice yields the most dimension-efficient embedding. Full derivations and detailed sensitivity analysis are provided in Appendix~D.

\section{Empirical Evaluation}

We evaluate the performance of ECPv2 on high-dimensional, non-convex global optimization tasks and compare it against Lipschitz-based global optimization methods (Table~\ref{tab:ecp-comparison}), including ECP~\cite{fourati25ecp}, AdaLIPO~\cite{malherbe2017global}, and AdaLIPO+~\cite{serre2024lipo+}. 

Comparisons with general-purpose optimizers such as SMAC3~\cite{lindauer2022smac3}, DIRECT~\cite{jones2021direct}, and Dual Annealing~\cite{xiang1997generalized} are provided in Appendix~I, implementation details in Appendix~J, examination of the sensitivity to the
projection dimension and the influence of $\delta$ and $\beta$ in Appendix D, and extensive ablations in Appendices~E--G.

Figure~\ref{fig:rosenbrock_ecpv2_comparison} presents an ablation of the projection dimension $m$ on Himmelblau and Hölder using fixed parameters $\delta = 2/3$ and $\beta = 5$, illustrating how $m$ influences the behavior of ECPv2. Appendix~E extends this study to ten additional benchmark functions and shows that the Worst-$m$ approximation provides substantial computational savings: evaluating candidates against only a small subset of the worst-performing points reduces the acceptance-test cost by roughly $4$--$5\times$. Remarkably, small values such as $m \in {8, 16, 32, 64, 128}$ often match and sometimes even exceed the performance of the full method by allowing more permissive early exploration. Larger choices of $m$ yield diminishing returns and eventually recover both the behavior and the runtime overhead of the original ECP.

Appendix~F demonstrates that the adaptive lower bound $\varepsilon_t^{\oslash}$ is crucial for preventing empty acceptance regions. Enabling it alone yields roughly $2\times$ faster runtimes at low evaluation budgets and can improve performance.

Appendix~G shows that combining random projections with the Worst-$m$ rule produces the fastest ECPv2 variant. On the 500-dimensional Rosenbrock function, all ECPv2 configurations outperform ECP in runtime, with the full version achieving the best speed–quality tradeoff.

Figure~\ref{fig:ecpv2_ablation_m_main} reports results on Rosenbrock functions with dimensions $d \in \{3, 100, 200, 300, 500\}$, comparing ECPv2 (using the default hyperparameters $\beta=5$, $\delta=2/3$, and $m=8$) to Lipschitz-based baselines from Table~\ref{tab:ecp-comparison}. ECPv2 attains competitive performance while significantly reducing runtime (Appendix~H), even at moderate budgets ($n=200$). Additional results on high-dimensional settings---Rosenbrock with $d=500$ and Powell with $d=1000$---are presented in Appendix~H. In both cases, ECPv2 not only converges roughly $2\times$ faster than ECP but also reaches higher optimization scores. These gains grow more pronounced with increasing dimensionality, illustrating the scalability and efficiency of our approach.

\section{Conclusion}

We introduced ECPv2, a fast and scalable algorithm for global optimization of Lipschitz-continuous functions with unknown constants. The method builds on the ECP framework, which already provides strong empirical performance and theoretical guarantees with minimal tuning. ECPv2 retains these guarantees while reducing computational and memory costs through an adaptive lower bound, a Worst-$m$ selective memory, and a projection-based acceleration. These components, used individually or in combination, offer additional benefits in large-dimensional settings and under large evaluation budgets. Across a wide range of benchmarks, ECPv2 matches or exceeds the performance of several black-box optimization methods while delivering significant wall-clock speedups, demonstrating the robustness and scalability of the ECP family for black-box optimization.

\bibliography{aaai2026}

@InProceedings{malherbe2017global,
  title = 	 {Global optimization of {L}ipschitz functions},
  author =       {C{\'e}dric Malherbe and Nicolas Vayatis},
  booktitle = 	 {Proceedings of the 34th International Conference on Machine Learning},
  pages = 	 {2314--2323},
  year = 	 {2017},
  volume = 	 {70},
  series = 	 {Proceedings of Machine Learning Research},
  month = 	 {06--11 Aug},
  publisher =    {PMLR},
  pdf = 	 {http://proceedings.mlr.press/v70/malherbe17a/malherbe17a.pdf},
  url = 	 {https://proceedings.mlr.press/v70/malherbe17a.html},
}

@article{serre2024lipo+,
  title={LIPO+: Frugal Global Optimization for Lipschitz Functions},
  author={Serr{\'e}, Ga{\"e}tan and Beja-Battais, Perceval and Chirrane, Sophia and Kalogeratos, Argyris and Vayatis, Nicolas},
  journal={arXiv preprint arXiv:2406.19723},
  year={2024}
}

@inproceedings{hansen1996adapting,
  title={Adapting arbitrary normal mutation distributions in evolution strategies: The covariance matrix adaptation},
  author={Hansen, Nikolaus and Ostermeier, Andreas},
  booktitle={Proceedings of IEEE international conference on evolutionary computation},
  pages={312--317},
  year={1996},
  organization={IEEE}
}

@article{hansen2006cma,
  title={The CMA evolution strategy: a comparing review},
  author={Hansen, Nikolaus},
  journal={Towards a new evolutionary computation: Advances in the estimation of distribution algorithms},
  pages={75--102},
  year={2006},
  publisher={Springer}
}

@misc{hansen2019pycma,
  author       = {Nikolaus Hansen and Youhei Akimoto and Petr Baudis},
  title        = {{CMA-ES/pycma} on {G}ithub},
  howpublished = {Zenodo, DOI:10.5281/zenodo.2559634},
  month        = feb,
  year         = 2019,
  doi          = {10.5281/zenodo.2559634},
  url          = {https://doi.org/10.5281/zenodo.2559634},
}

@InProceedings{pmlr-v235-chen24e,
  title = 	 {{I}nstruct{Z}ero: Efficient Instruction Optimization for Black-Box Large Language Models},
  author =       {Chen, Lichang and Chen, Jiuhai and Goldstein, Tom and Huang, Heng and Zhou, Tianyi},
  booktitle = 	 {Proceedings of the 41st International Conference on Machine Learning},
  pages = 	 {6503--6518},
  year = 	 {2024},
  volume = 	 {235},
  series = 	 {Proceedings of Machine Learning Research},
  month = 	 {21--27 Jul},
  publisher =    {PMLR},
  pdf = 	 {https://raw.githubusercontent.com/mlresearch/v235/main/assets/chen24e/chen24e.pdf},
  url = 	 {https://proceedings.mlr.press/v235/chen24e.html},
}

@inproceedings{kharrat2024acing,
    title = "{ACING}: Actor-Critic for Instruction Learning in Black-Box {LLM}s",
    author = "Kharrat, Salma  and
      Fourati, Fares  and
      Canini, Marco",
    booktitle = "Proceedings of the 2025 Conference on Empirical Methods in Natural Language Processing",
    month = nov,
    year = "2025",
    publisher = "Association for Computational Linguistics",
    url = "https://aclanthology.org/2025.emnlp-main.965/",
    doi = "10.18653/v1/2025.emnlp-main.965",
    pages = "19086--19113",
    ISBN = "979-8-89176-332-6",
}

@InProceedings{pmlr-v235-lin24r,
  title = {Use Your {INSTINCT}: {INST}ruction optimization for {LLM}s us{I}ng Neural bandits Coupled with Transformers},
  author =       {Lin, Xiaoqiang and Wu, Zhaoxuan and Dai, Zhongxiang and Hu, Wenyang and Shu, Yao and Ng, See-Kiong and Jaillet, Patrick and Low, Bryan Kian Hsiang},
  booktitle = 	 {Proceedings of the 41st International Conference on Machine Learning},
  pages = 	 {30317--30345},
  year = 	 {2024},
  volume = 	 {235},
  series = 	 {Proceedings of Machine Learning Research},
  month = 	 {21--27 Jul},
  publisher =    {PMLR},
  pdf = 	 {https://raw.githubusercontent.com/mlresearch/v235/main/assets/lin24r/lin24r.pdf},
  url = 	 {https://proceedings.mlr.press/v235/lin24r.html},
}

@InProceedings{pmlr-v98-bartlett19a,
  title = 	 {A simple parameter-free and adaptive approach to optimization under a minimal local smoothness assumption},
  author =       {Bartlett, Peter L. and Gabillon, Victor and Valko, Michal},
  booktitle = 	 {Proceedings of the 30th International Conference on Algorithmic Learning Theory},
  pages = 	 {184--206},
  year = 	 {2019},
  volume = 	 {98},
  series = 	 {Proceedings of Machine Learning Research},
  month = 	 {22--24 Mar},
  publisher =    {PMLR},
  pdf = 	 {http://proceedings.mlr.press/v98/bartlett19a/bartlett19a.pdf},
  url = 	 {https://proceedings.mlr.press/v98/bartlett19a.html},
}

@article{balandat2020botorch,
  title={BoTorch: A framework for efficient Monte-Carlo Bayesian optimization},
  author={Balandat, Maximilian and Karrer, Brian and Jiang, Daniel and Daulton, Samuel and Letham, Ben and Wilson, Andrew G and Bakshy, Eytan},
  journal={Advances in neural information processing systems},
  volume={33},
  pages={21524--21538},
  year={2020}
}

@article{JMLR:v23:21-0888,
  author  = {Marius Lindauer and Katharina Eggensperger and Matthias Feurer and André Biedenkapp and Difan Deng and Carolin Benjamins and Tim Ruhkopf and René Sass and Frank Hutter},
  title   = {SMAC3: A Versatile Bayesian Optimization Package for Hyperparameter Optimization},
  journal = {Journal of Machine Learning Research},
  year    = {2022},
  volume  = {23},
  number  = {54},
  pages   = {1--9},
  url     = {http://jmlr.org/papers/v23/21-0888.html}
}

@article{JMLR:v20:18-213,
  author  = {Felix Berkenkamp and Angela P. Schoellig and Andreas Krause},
  title   = {No-Regret Bayesian Optimization with Unknown Hyperparameters},
  journal = {Journal of Machine Learning Research},
  year    = {2019},
  volume  = {20},
  number  = {50},
  pages   = {1--24},
  url     = {http://jmlr.org/papers/v20/18-213.html}
}

@article{lindauer2022smac3,
  title={SMAC3: A versatile Bayesian optimization package for hyperparameter optimization},
  author={Lindauer, Marius and Eggensperger, Katharina and Feurer, Matthias and Biedenkapp, Andr{\'e} and Deng, Difan and Benjamins, Carolin and Ruhkopf, Tim and Sass, Ren{\'e} and Hutter, Frank},
  journal={Journal of Machine Learning Research},
  volume={23},
  number={54},
  pages={1--9},
  year={2022}
}

@InProceedings{pmlr-v235-fourati24a,
  title = 	 {Stochastic Q-learning for Large Discrete Action Spaces},
  author =       {Fourati, Fares and Aggarwal, Vaneet and Alouini, Mohamed-Slim},
  booktitle = 	 {Proceedings of the 41st International Conference on Machine Learning},
  pages = 	 {13734--13759},
  year = 	 {2024},
  volume = 	 {235},
  series = 	 {Proceedings of Machine Learning Research},
  month = 	 {21--27 Jul},
  publisher =    {PMLR},
  pdf = 	 {https://raw.githubusercontent.com/mlresearch/v235/main/assets/fourati24a/fourati24a.pdf},
  url = 	 {https://proceedings.mlr.press/v235/fourati24a.html},
}

@article{jones1993lipschitzian,
  title={Lipschitzian optimization without the Lipschitz constant},
  author={Jones, Donald R and Perttunen, Cary D and Stuckman, Bruce E},
  journal={Journal of optimization Theory and Applications},
  volume={79},
  pages={157--181},
  year={1993},
  publisher={Springer}
}

@InProceedings{zhou2020neural,
  title = 	 {Neural Contextual Bandits with {UCB}-based Exploration},
  author =       {Zhou, Dongruo and Li, Lihong and Gu, Quanquan},
  booktitle = 	 {Proceedings of the 37th International Conference on Machine Learning},
  pages = 	 {11492--11502},
  year = 	 {2020},
  volume = 	 {119},
  series = 	 {Proceedings of Machine Learning Research},
  month = 	 {13--18 Jul},
  publisher =    {PMLR},
  pdf = 	 {http://proceedings.mlr.press/v119/zhou20a/zhou20a.pdf},
  url = 	 {https://proceedings.mlr.press/v119/zhou20a.html},
}

@article{kawaguchi2016global,
  title={Global continuous optimization with error bound and fast convergence},
  author={Kawaguchi, Kenji and Maruyama, Yu and Zheng, Xiaoyu},
  journal={Journal of Artificial Intelligence Research},
  volume={56},
  pages={153--195},
  year={2016}
}

@article{virtanen2020scipy,
  title={SciPy 1.0: fundamental algorithms for scientific computing in Python},
  author={Virtanen, Pauli and Gommers, Ralf and Oliphant, Travis E and Haberland, Matt and Reddy, Tyler and Cournapeau, David and Burovski, Evgeni and Peterson, Pearu and Weckesser, Warren and Bright, Jonathan and others},
  journal={Nature methods},
  volume={17},
  number={3},
  pages={261--272},
  year={2020},
  publisher={Nature Publishing Group}
}

@article{jones2021direct,
  title={The DIRECT algorithm: 25 years Later},
  author={Jones, Donald R and Martins, Joaquim RRA},
  journal={Journal of global optimization},
  volume={79},
  number={3},
  pages={521--566},
  year={2021},
  publisher={Springer}
}

@article{munos2011optimistic,
  title={Optimistic optimization of a deterministic function without the knowledge of its smoothness},
  author={Munos, R{\'e}mi},
  journal={Advances in neural information processing systems},
  volume={24},
  year={2011}
}

@MISC{simulationlib, 
 author = {Surjanovic, S. and Bingham, D.}, 
 title = {Virtual Library of Simulation Experiments: Test Functions and Datasets}, 
 year={2013},
 howpublished = {Retrieved October 10, 2024, from \url{http://www.sfu.ca/~ssurjano}} 
}

@article{molga2005test,
  title={Test functions for optimization needs},
  author={Molga, Marcin and Smutnicki, Czes{\l}aw},
  journal={Test functions for optimization needs},
  volume={101},
  pages={48},
  year={2005}
}

@inproceedings{kleinberg2008multi,
  title={Multi-armed bandits in metric spaces},
  author={Kleinberg, Robert and Slivkins, Aleksandrs and Upfal, Eli},
  booktitle={Proceedings of the fortieth annual ACM symposium on Theory of computing},
  pages={681--690},
  year={2008}
}

@article{bubeck2011x,
  title={X-Armed Bandits.},
  author={Bubeck, S{\'e}bastien and Munos, R{\'e}mi and Stoltz, Gilles and Szepesv{\'a}ri, Csaba},
  journal={Journal of Machine Learning Research},
  volume={12},
  number={5},
  year={2011}
}

@article{bull2011convergence,
  title={Convergence rates of efficient global optimization algorithms.},
  author={Bull, Adam D},
  journal={Journal of Machine Learning Research},
  volume={12},
  number={10},
  year={2011}
}

@article{brooks1958discussion,
  title={A discussion of random methods for seeking maxima},
  author={Brooks, Samuel H},
  journal={Operations research},
  volume={6},
  number={2},
  pages={244--251},
  year={1958},
  publisher={INFORMS}
}

@book{zabinsky2013stochastic,
  title={Stochastic adaptive search for global optimization},
  author={Zabinsky, Zelda B},
  volume={72},
  year={2013},
  publisher={Springer Science \& Business Media}
}

@inproceedings{preux2014bandits,
  title={Bandits attack function optimization},
  author={Preux, Philippe and Munos, R{\'e}mi and Valko, Michal},
  booktitle={2014 IEEE congress on evolutionary computation (CEC)},
  pages={2245--2252},
  year={2014},
  organization={IEEE}
}

@article{kirkpatrick1983optimization,
  title={Optimization by simulated annealing},
  author={Kirkpatrick, Scott and Gelatt Jr, C Daniel and Vecchi, Mario P},
  journal={science},
  volume={220},
  number={4598},
  pages={671--680},
  year={1983},
  publisher={American association for the advancement of science}
}

@article{metropolis1953equation,
  title={Equation of state calculations by fast computing machines},
  author={Metropolis, Nicholas and Rosenbluth, Arianna W and Rosenbluth, Marshall N and Teller, Augusta H and Teller, Edward},
  journal={The journal of chemical physics},
  volume={21},
  number={6},
  pages={1087--1092},
  year={1953},
  publisher={American Institute of Physics}
}

@article{xiang1997generalized,
  title={Generalized simulated annealing algorithm and its application to the Thomson model},
  author={Xiang, Yang and Sun, DY and Fan, W and Gong, XG},
  journal={Physics Letters A},
  volume={233},
  number={3},
  pages={216--220},
  year={1997},
  publisher={Elsevier}
}

@article{tsallis1988possible,
  title={Possible generalization of Boltzmann-Gibbs statistics},
  author={Tsallis, Constantino},
  journal={Journal of statistical physics},
  volume={52},
  pages={479--487},
  year={1988},
  publisher={Springer}
}

@article{frazier2018tutorial,
  title={A tutorial on Bayesian optimization},
  author={Frazier, Peter I},
  journal={arXiv preprint arXiv:1807.02811},
  year={2018}
}

@book{torn1989global,
  title={Global optimization},
  author={T{\"o}rn, Aimo and {\v{Z}}ilinskas, Antanas},
  volume={350},
  year={1989},
  publisher={Springer}
}

@book{horst2013handbook,
  title={Handbook of global optimization},
  author={Pardalos, Panos M},
  volume={2},
  year={2013},
  publisher={Springer Science \& Business Media}
}

@book{floudas2014recent,
  title     = {Recent Advances in Global Optimization},
  author    = {Floudas, Christodoulos A. and Pardalos, Panos M.},
  year      = {2014},
  publisher = {Princeton University Press},
  address   = {Princeton, NJ}
}

@article{stork2022new,
  title={A new taxonomy of global optimization algorithms},
  author={Stork, J{\"o}rg and Eiben, Agoston E and Bartz-Beielstein, Thomas},
  journal={Natural Computing},
  volume={21},
  number={2},
  pages={219--242},
  year={2022},
  publisher={Springer}
}

@inproceedings{antonova2023rethinking,
  title={Rethinking optimization with differentiable simulation from a global perspective},
  author={Antonova, Rika and Yang, Jingyun and Jatavallabhula, Krishna Murthy and Bohg, Jeannette},
  booktitle={Conference on Robot Learning},
  pages={276--286},
  year={2023},
  organization={PMLR}
}

@article{arriaga2006algorithmic,
  title={An algorithmic theory of learning: Robust concepts and random projection},
  author={Arriaga, Rosa I and Vempala, Santosh},
  journal={Machine learning},
  volume={63},
  number={2},
  pages={161--182},
  year={2006},
  publisher={Springer}
}

@inproceedings{d2024achieving,
  title={Achieving human level competitive robot table tennis},
  author={D'Ambrosio, David B and Abeyruwan, Saminda Wishwajith and Graesser, Laura and Iscen, Atil and Amor, Heni Ben and Bewley, Alex and Reed, Barney and Reymann, Krista and Takayama, Leila and Tassa, Yuval and others},
  booktitle={7th Robot Learning Workshop: Towards Robots with Human-Level Abilities},
 year = {2024}
}

@InProceedings{fourati25ecp,
  title = 	 {Every Call is Precious: Global Optimization of Black-Box Functions with Unknown Lipschitz Constants},
  author =       {Fourati, Fares and Kharrat, Salma and Aggarwal, Vaneet and Alouini, Mohamed-Slim},
  booktitle = 	 {Proceedings of The 28th International Conference on Artificial Intelligence and Statistics},
  pages = 	 {5176--5184},
  year = 	 {2025},
  volume = 	 {258},
  series = 	 {Proceedings of Machine Learning Research},
  month = 	 {03--05 May},
  publisher =    {PMLR},
  pdf = 	 {https://raw.githubusercontent.com/mlresearch/v258/main/assets/fourati25a/fourati25a.pdf},
  url = 	 {https://proceedings.mlr.press/v258/fourati25a.html},
}

\clearpage
\appendix

\section*{Appendix A: Random Projection Guarantees}

To support our analysis, we restate and adapt key results from \cite{arriaga2006algorithmic} that characterize the behavior of random Gaussian projections. These results guarantee the concentration of norms and distances after dimensionality reduction, which is critical to our use of random projections in Lipschitz optimization.

\medskip
\noindent\textbf{Norm Preservation for a Single Vector.}  
The following lemma shows that projecting a fixed vector using a random Gaussian matrix approximately preserves its squared norm, with high probability:

\begin{lemma}[Adapted from Lemma 2 in \cite{arriaga2006algorithmic}]
Let $R = (r_{ij})$ be a random $n \times d'$ matrix, where each entry $r_{ij}$ is drawn independently from $\mathcal{N}(0,1)$. For any fixed vector $u \in \mathbb{R}^n$ and any $\delta \in (0, 1)$, let $u' = \frac{1}{\sqrt{d'}} R^T u$. Then $\mathbb{E}[\|u'\|_2^2] = \|u\|_2^2$, and:
\[
\mathbb{P}\left[\|u'\|_2^2 > (1 + \delta)\|u\|_2^2\right] \leq e^{-(\delta^2 - \delta^3)\frac{d'}{4}},
\]
\[
\mathbb{P}\left[\|u'\|_2^2 < (1 - \delta)\|u\|_2^2\right] \leq e^{-(\delta^2 - \delta^3)\frac{d'}{4}}.
\]
\end{lemma}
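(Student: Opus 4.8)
The plan is to reduce the statement to a chi-squared tail bound and then apply a standard Chernoff argument. First I would observe that the identity and both tail inequalities are invariant under the scaling $u \mapsto c u$ (each side of every inequality scales by $c^2$), so it suffices to treat the case $\|u\|_2 = 1$. Writing $R_{\cdot k}$ for the $k$-th column of $R$, the $k$-th coordinate of $u'$ is $u'_k = \frac{1}{\sqrt{d'}}\langle R_{\cdot k}, u\rangle$. Since the entries of $R$ are i.i.d.\ $\mathcal{N}(0,1)$, the quantity $Z_k := \langle R_{\cdot k}, u\rangle$ is a linear combination of independent standard Gaussians with weight vector $u$, so $Z_k \sim \mathcal{N}(0,\|u\|_2^2) = \mathcal{N}(0,1)$. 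Distinct coordinates involve disjoint columns of $R$, hence the $Z_k$ are mutually independent. Consequently $d'\,\|u'\|_2^2 = \sum_{k=1}^{d'} Z_k^2$ is a sum of $d'$ independent squared standard normals, i.e.\ a $\chi^2_{d'}$ random variable, and taking expectations gives $\mathbb{E}[\|u'\|_2^2] = \frac{1}{d'}\cdot d' = 1 = \|u\|_2^2$.

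For the tails, set $Q := \sum_{k=1}^{d'} Z_k^2 \sim \chi^2_{d'}$, so the upper event is $\{Q > (1+\delta)d'\}$. I would apply the exponential Markov inequality with parameter $\lambda \in (0, 1/2)$ together with the chi-squared moment generating function $\mathbb{E}[e^{\lambda Q}] = (1-2\lambda)^{-d'/2}$, yielding
\[
\mathbb{P}[Q > (1+\delta)d'] \le (1-2\lambda)^{-d'/2}\, e^{-\lambda(1+\delta)d'}.
\]
Optimizing the exponent over $\lambda$ gives $\lambda^\star = \frac{\delta}{2(1+\delta)}$, and substituting produces the clean form $\bigl((1+\delta)e^{-\delta}\bigr)^{d'/2}$. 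The lower tail is handled symmetrically with $\lambda < 0$, giving $\bigl((1-\delta)e^{\delta}\bigr)^{d'/2}$.

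The final step converts these exponentials into the stated bound $e^{-(\delta^2-\delta^3)d'/4}$. For the upper tail this reduces to the scalar inequality $\ln(1+\delta) - \delta \le -\tfrac{1}{2}(\delta^2-\delta^3)$, which follows from the series estimate $\ln(1+\delta) \le \delta - \tfrac{\delta^2}{2} + \tfrac{\delta^3}{3}$ valid for $\delta \ge 0$ (since $\tfrac{\delta^3}{3} \le \tfrac{\delta^3}{2}$). For the lower tail the analogous inequality $\ln(1-\delta) + \delta \le -\tfrac{\delta^2}{2}$ is even stronger, so the claimed bound holds a fortiori. The main obstacle, and the only place requiring genuine care, is the Chernoff optimization coupled with these logarithmic estimates: verifying that the optimized exponent dominates $-(\delta^2-\delta^3)/2$ is where the specific constants in the statement get pinned down. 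Everything else is a routine reduction to the $\chi^2_{d'}$ distribution.
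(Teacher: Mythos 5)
Your proof is correct, and it is worth noting the paper itself does not prove this lemma at all: it is stated in Appendix~A as an adaptation of Lemma~2 of \cite{arriaga2006algorithmic}, with the proof delegated entirely to that reference. Your argument therefore fills in what the paper imports by citation, and it does so by the standard route (also the one used in the cited source): reduce to $\|u\|_2=1$ by homogeneity, observe that $d'\|u'\|_2^2=\sum_{k=1}^{d'}\langle R_{\cdot k},u\rangle^2$ is a $\chi^2_{d'}$ variable since the columns of $R$ give i.i.d.\ $\mathcal{N}(0,1)$ projections, apply the Chernoff bound with the $\chi^2$ moment generating function, and convert the optimized exponents $\bigl((1+\delta)e^{-\delta}\bigr)^{d'/2}$ and $\bigl((1-\delta)e^{\delta}\bigr)^{d'/2}$ into $e^{-(\delta^2-\delta^3)d'/4}$ via the series bounds $\ln(1+\delta)\le\delta-\tfrac{\delta^2}{2}+\tfrac{\delta^3}{3}$ and $\ln(1-\delta)\le-\delta-\tfrac{\delta^2}{2}$. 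All the details check out: the optimal Chernoff parameters $\lambda^\star=\tfrac{\delta}{2(1+\delta)}$ (upper tail) and its negative-$\lambda$ counterpart (lower tail) are valid for the MGF's domain, and the final logarithmic estimates pin down exactly the constant $\tfrac{1}{4}(\delta^2-\delta^3)$ appearing in the statement, with the lower tail holding a fortiori as you say.
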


\medskip
\noindent\textbf{Distance Preservation Between Pairs of Vectors.}  
Using the lemma above, we immediately obtain the following corollary, which states that pairwise Euclidean distances are preserved under the same random projection with high probability. This result is essential for our dimensionality reduction technique:

\begin{corollary}[Adapted from Theorem 2 (Neuronal RP) in \cite{arriaga2006algorithmic}]
\label{corr:neuronal}
Let $R$ be a random $n \times d'$ matrix with entries drawn independently from $\mathcal{N}(0,1)$. For any $u, v \in \mathbb{R}^n$, define the projections $u' = \frac{1}{\sqrt{d'}} R^T u$ and $v' = \frac{1}{\sqrt{d'}} R^T v$. Then for any $\delta \in (0, 1)$,
\begin{align}
&\mathbb{P}\left[(1 - \delta)\|u - v\|_2^2 \leq \|u' - v'\|_2^2 \leq (1 + \delta)\|u - v\|_2^2\right] \nonumber \\
&\quad \geq 1 - 2e^{-(\delta^2 - \delta^3)\frac{d'}{4}}.   \nonumber
\end{align}
\end{corollary}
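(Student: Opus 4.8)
The plan is to reduce the two-sided pairwise-distance bound to the single-vector Norm Preservation Lemma stated immediately above, exploiting the linearity of the random projection, and then to close with a union bound over the two one-sided deviation events. Essentially all of the probabilistic content is already carried by that lemma, so the corollary amounts to a short reduction plus bookkeeping.

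First I would observe that the map $x \mapsto \frac{1}{\sqrt{d'}} R^T x$ is linear, so that for the fixed vector $w := u - v$ we have $w' := \frac{1}{\sqrt{d'}} R^T w = u' - v'$. Consequently $\|u' - v'\|_2^2 = \|w'\|_2^2$, and the target event $(1-\delta)\|u-v\|_2^2 \le \|u'-v'\|_2^2 \le (1+\delta)\|u-v\|_2^2$ is \emph{exactly} the event that the projection preserves the squared norm of the single fixed vector $w$ to within a factor $1 \pm \delta$. This is the one step that makes the reduction work: the difference of two projections is itself the projection of one deterministic vector, so the lemma's hypothesis (a fixed input) is met.

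Next I would apply the Norm Preservation Lemma directly to $w$, which is deterministic once $u$ and $v$ are fixed. That lemma bounds each one-sided deviation probability separately, giving $\mathbb{P}[\|w'\|_2^2 > (1+\delta)\|w\|_2^2] \le e^{-(\delta^2-\delta^3)d'/4}$ and $\mathbb{P}[\|w'\|_2^2 < (1-\delta)\|w\|_2^2] \le e^{-(\delta^2-\delta^3)d'/4}$. The complement of the desired two-sided event is precisely the union of these two deviation events, so a union bound yields a failure probability of at most $2 e^{-(\delta^2-\delta^3)d'/4}$, and hence a success probability of at least $1 - 2 e^{-(\delta^2-\delta^3)d'/4}$, as claimed.

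I do not expect any genuine obstacle here, since the concentration estimate is already packaged in the single-vector lemma. The only points requiring care are (i) recognizing that linearity lets the difference of projections be treated as the projection of one fixed vector, so that the lemma applies verbatim, and (ii) checking that the two tail events together exhaust the complement of the two-sided inequality, which is what produces the correct factor of $2$ in the final bound.
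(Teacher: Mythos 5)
Your proposal is correct and follows exactly the route the paper intends: the paper derives this corollary ``immediately'' from the single-vector norm preservation lemma, which is precisely your reduction via linearity ($u' - v' = \frac{1}{\sqrt{d'}}R^T(u-v)$) applied to the fixed vector $w = u - v$, followed by a union bound over the two one-sided tail events. Nothing is missing; the factor of $2$ and the exponent match the claimed bound.
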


\medskip
\noindent\textbf{Distance Preservation for a Finite Set of Vectors.}  
The following theorem shows that a random Gaussian projection approximately preserves all pairwise distances among a set of $n$ vectors, with high probability. This is crucial when applying dimensionality reduction to all the sampled points by ECPv2.

\subsection{Proof of Lemma~\ref{lemma:distoriton_bound}}

\begin{proof}
When \( \delta = 0 \), by design of the projection matrix in Equation~\eqref{projection_definition}, it is simply the identity, and the result holds trivially. For \( \delta \in (0, 1) \), we proceed as follows. There are ${n \choose 2} < \frac{n^2}{2}$ distinct pairs of vectors. From the Corollary~\ref{corr:neuronal}, the probability that a single pair deviates outside the $(1 \pm \delta)$ bounds is at most $2\exp\left( -(\delta^2 - \delta^3)\frac{d'}{4} \right) \leq 2\exp\left( -2 \log(\beta \cdot n)\right) = 2\exp\left( -\log(\beta^2 \cdot n^2)\right)$. By the union bound over all pairs, we have the probability that any pair has a large distortion is at most $\frac{1}{\beta^2}.
$
\end{proof}

\subsection{Proof of Lemma~\ref{lemma:jl-projection-superset}}

\begin{proof}
Let $x \in \mathcal{A}_t(\varepsilon_t, t, m)$, which implies
\[
\min_{i \in \mathcal{I}_t^{m}}  \bigl(f(X_i) + \max\{\varepsilon_t, \varepsilon^\oslash_t\} \cdot \|x - X_i\|_2 \bigr) \geq \max_j f(X_j).
\]
By the Lemma~\ref{lemma:distoriton_bound}, for all $i$, with a probability at least $1-\frac{1}{\beta^2}$:
\[
\|\mathbf{P} x - \mathbf{P} X_i\|_2 \leq \frac{\|x - X_i\|_2}{\sqrt{1 - \delta}}.
\]
Multiplying both sides by $\tilde{\varepsilon}_t = \max\{\varepsilon_t, \varepsilon^\oslash_t\} / \sqrt{1 - \delta}$ gives
\[
\tilde{\varepsilon}_t \cdot \|\mathbf{P} x - \mathbf{P} X_i\|_2 \geq \max\{\varepsilon_t, \varepsilon^\oslash_t\} \cdot \|x - X_i\|_2.
\]
Thus, for all i, with probability at least $1- \frac{1}{\beta^2}$:
\[
f(X_i) + \tilde{\varepsilon}_t \cdot \|\mathbf{P} x - \mathbf{P} X_i\|_2 \geq f(X_i) + \max\{\varepsilon_t, \varepsilon^\oslash_t\} \cdot \|x - X_i\|_2.
\]
Taking the minimum over $i \in \mathcal{I}_t^{m}$ yields with probability at least $1- \frac{1}{\beta^2}$:
\begin{align*}
\min_{i \in \mathcal{I}_t^{m}} (f(X_i) + \tilde{\varepsilon}_t \cdot \|\mathbf{P} x - \mathbf{P} X_i\|_2) \geq \\
 \min_{i \in \mathcal{I}_t^{m}} (f(X_i) + \max\{\varepsilon_t, \varepsilon^\oslash_t\} \cdot \|x - X_i\|_2).    
\end{align*}
Hence,
\[
\min_{i \in \mathcal{I}_t^{m}} \bigl(f(X_i) + \tilde{\varepsilon}_t \cdot \|\mathbf{P} x - \mathbf{P} X_i\|_2 \bigr) \geq \max_j f(X_j).
\]
Hence, $x \in \mathcal{A}_{ECPv2}(\varepsilon_t, t, m, \mathbf{P})$.
\end{proof}

\section{Appendix B: No-Regret Guarantees}
\label{section:acceptance_condition_analysis}

The acceptance region of ECPv2, similar to ECP, is motivated by the defintion of consistent functions.

\begin{definition}
\label{def:consistent_functions}
{\sc (Consistent functions)}
The active subset of Lipschitz functions, with a Lipschitz constant $k$, consistent with the black-box function $f$ over $t\geq1$ evaluated samples 
$(x_1,f(x_1)), \cdots,(x_t,f(x_t))$ is:
$
\mathcal{F}_{k,t} \triangleq \left\{ g \in \text{Lip}(k) : \forall i \in\{ 1, \cdots, t\},
~ g(x_i) = f(x_i) \right\}.
$
\end{definition}

Using the above definition of a consistent function, we define the subset of points that can maximize at least some function $g$ within that subset of consistent functions and possibly maximize the target $f$.

\begin{definition}
\label{def:potential_maximizers}
{\sc (Potential Maximizers)}
For a Lipschitz function \( f \) with a Lipschitz constant \( k \geq 0 \), let \( \mathcal{F}_{k,t} \) be the set of consistent functions with respect to \( f \), as defined in Definition~\ref{def:consistent_functions}. For any iteration \( t \geq 1 \), the set of potential maximizers is defined as follows:
$
\mathcal{P}_{k,t} \triangleq \left\{ x \in \mathcal{X} : \exists g \in \mathcal{F}_{k,t} \text{ where } x \in \underset{x \in \mathcal{X}}{\arg \max}~g(x) \right\}.
$
\end{definition}

We can then show the relationship between the potential maximizers and our proposed acceptance region. 

\begin{lemma}[Lemma 8 in \cite{malherbe2017global}] 
\label{lem:potential_k}
If $\mathcal{P}_{k,t}$ denotes the set of potential maximizers of the function $f$, as defined in Definition \ref{def:consistent_functions},
then we have $
 \mathcal{P}_{k,t} = \{x \in \mathcal{X}: 
 \min_{i=1, \cdots, t}  (f(x_i) + k \cdot ||x-x_i||_2)
\geq \max_{j=1, \cdots, t}f(x_j) \}
$. 
\end{lemma}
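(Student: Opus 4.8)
The plan is to establish the set equality by proving the two inclusions separately; write $\mathcal{U}$ for the right-hand side $\{x \in \mathcal{X} : \min_i (f(x_i) + k\|x - x_i\|_2) \ge \max_j f(x_j)\}$.

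For the inclusion $\mathcal{P}_{k,t} \subseteq \mathcal{U}$, I would take any $x \in \mathcal{P}_{k,t}$ and a witnessing $g \in \mathcal{F}_{k,t}$ with $x \in \arg\max g$. Two elementary facts then pin down $g(x)$ from both sides: maximality gives $g(x) \ge g(x_j) = f(x_j)$ for every $j$, hence $g(x) \ge \max_j f(x_j)$; and the $k$-Lipschitz property gives $g(x) \le g(x_i) + k\|x - x_i\|_2 = f(x_i) + k\|x - x_i\|_2$ for every $i$, hence $g(x) \le \min_i (f(x_i) + k\|x - x_i\|_2)$. Chaining these two bounds through $g(x)$ yields exactly the defining inequality of $\mathcal{U}$. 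This direction is routine.

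The reverse inclusion $\mathcal{U} \subseteq \mathcal{P}_{k,t}$ is the substantive part, and the main obstacle is exhibiting, for each $x \in \mathcal{U}$, an explicit consistent function that attains its global maximum at $x$. My plan is constructive: set the target value $v := \min_i (f(x_i) + k\|x - x_i\|_2)$ and define the lower Lipschitz envelope of the data augmented by the pair $(x, v)$, namely $g(x') := \max\bigl( \max_i (f(x_i) - k\|x' - x_i\|_2),\; v - k\|x' - x\|_2 \bigr)$. I would then verify three properties. First, $g \in \text{Lip}(k)$, since it is a finite maximum of $k$-Lipschitz cones. Second, $g$ interpolates the data, i.e.\ $g(x_i) = f(x_i)$; this reduces to checking that the extra cone never overshoots at the data points, which follows from $v \le f(x_i) + k\|x - x_i\|_2$, immediate from the definition of $v$ as a minimum. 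Third, and most delicately, $x \in \arg\max g$ with $g(x) = v$: here the defining inequality $v \ge \max_j f(x_j)$ of $\mathcal{U}$ is exactly what guarantees that every cone $f(x_i) - k\|x' - x_i\|_2 \le f(x_i) \le v$ stays below $v$, while the added cone satisfies $v - k\|x' - x\|_2 \le v$, so $g \le v$ everywhere with equality at $x$.

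The one point requiring care, and which I expect to be the crux, is confirming that the augmented value $v$ is genuinely admissible, i.e.\ that $(x,v)$ together with the data is still realizable by some $k$-Lipschitz function, equivalently that $v \ge f(x_i) - k\|x - x_i\|_2$ for all $i$. I would obtain this purely from the $k$-Lipschitz continuity of $f$ itself: for any $i, j$ the triangle inequality gives $f(x_i) - k\|x - x_i\|_2 \le f(x_j) + k\|x - x_j\|_2$, and minimizing the right-hand side over $j$ yields $f(x_i) - k\|x - x_i\|_2 \le v$. With this admissibility in hand, the three verifications above go through and give $g \in \mathcal{F}_{k,t}$ with $x$ a global maximizer, so $x \in \mathcal{P}_{k,t}$, closing the second inclusion and hence the equality.
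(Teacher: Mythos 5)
Your proof is correct. Note, however, that the paper itself does not prove this statement at all: it is imported verbatim as Lemma~8 of \cite{malherbe2017global}, and the ECPv2 analysis simply builds on it. So the relevant comparison is with that original source, whose argument yours essentially reproduces: the easy inclusion $\mathcal{P}_{k,t} \subseteq \mathcal{U}$ by chaining maximality and $k$-Lipschitz continuity of a witnessing consistent function, and the reverse inclusion by exhibiting an explicit consistent function, namely the lower $k$-Lipschitz envelope of the data augmented with the cone $v - k\|x'-x\|_2$ peaking at $x$. Two small remarks on your write-up. First, in the interpolation check $g(x_j)=f(x_j)$ you must also verify that the data cones do not overshoot one another, i.e.\ $f(x_i) - k\|x_j - x_i\|_2 \le f(x_j)$ for $i \neq j$; this is immediate from $f \in \text{Lip}(k)$ (the same property you invoke later), but it is part of the check, not something the problem ``reduces'' away from. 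Second, the step you flag as the crux---admissibility of $v$, i.e.\ $f(x_i) - k\|x - x_i\|_2 \le v$ for all $i$---does not require your triangle-inequality argument: since $x \in \mathcal{U}$, you already have $f(x_i) - k\|x - x_i\|_2 \le f(x_i) \le \max_j f(x_j) \le v$, which is exactly the one-line bound you use to show $g \le v$ everywhere. Your derivation is also valid (and shows admissibility holds for any Lipschitz $f$, independently of $x \in \mathcal{U}$), but the construction closes more simply than you anticipated.
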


\begin{proposition}
{\sc (Potential Optimality)} 
\label{prop:potential}
For any iteration $t$, if $\mathcal{P}_{k,t}$ denotes the set of potential maximizers of $f \in \text{Lip}(k)$, as in Definition~\ref{def:consistent_functions}, and $\mathcal{A}_{ECPv2}(\varepsilon_t, t, m, \mathbf{P})$ denotes our acceptance region, defined in Equation~\eqref{new_acceptance_condition_with_proj},
then with probability at least $1-1/\beta^2$:
$
\begin{aligned}
\forall \varepsilon_t > k, \quad \mathcal{P}_{k,t} \subseteq \mathcal{A}_{ECPv2}(\varepsilon_t, t, m, \mathbf{P}).
\end{aligned}
$
\end{proposition}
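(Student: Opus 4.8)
The plan is to chain three established facts so that the claim reduces to a deterministic monotonicity argument together with the high-probability inclusion already proved in Corollary~\ref{corollary:ecp2_superset_of_ecp}. First I would invoke Lemma~\ref{lem:potential_k} to rewrite the set of potential maximizers in closed form as
\[
\mathcal{P}_{k,t} = \{x \in \mathcal{X} : \min_{i=1,\dots,t}(f(x_i) + k\,\|x-x_i\|_2) \geq \max_{j=1,\dots,t} f(x_j)\},
\]
which is precisely the ECP acceptance region $\mathcal{A}_{\text{ECP}}(k,t)$ evaluated at the parameter value $\varepsilon_t = k$. This identification is the conceptual core of the argument: it converts a statement about consistent functions into one about the ECP acceptance geometry, for which the superset machinery developed in the paper applies directly.

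Next I would establish the monotonicity of the ECP acceptance region in its parameter, namely $\mathcal{A}_{\text{ECP}}(k,t) \subseteq \mathcal{A}_{\text{ECP}}(\varepsilon_t,t)$ whenever $\varepsilon_t > k$. The argument is elementary: fix any $x \in \mathcal{A}_{\text{ECP}}(k,t)$; since $\|x - x_i\|_2 \geq 0$ and $\varepsilon_t > k$, each term obeys $f(x_i) + \varepsilon_t\,\|x-x_i\|_2 \geq f(x_i) + k\,\|x-x_i\|_2$, and taking the minimum over $i$ preserves this pointwise inequality, so the minimum at parameter $\varepsilon_t$ dominates the minimum at $k$, which in turn dominates $\max_j f(x_j)$ by assumption. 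Hence $x \in \mathcal{A}_{\text{ECP}}(\varepsilon_t,t)$.

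Finally I would apply Corollary~\ref{corollary:ecp2_superset_of_ecp}, which asserts that for any $m \leq n$, with probability at least $1 - 1/\beta^2$, the ECPv2 acceptance region contains the ECP acceptance region at the same parameter, i.e.\ $\mathcal{A}_{\text{ECP}}(\varepsilon_t,t) \subseteq \mathcal{A}_{ECPv2}(\varepsilon_t,t,m,\mathbf{P})$. Chaining the three inclusions yields, on the same high-probability event,
\[
\mathcal{P}_{k,t} = \mathcal{A}_{\text{ECP}}(k,t) \subseteq \mathcal{A}_{\text{ECP}}(\varepsilon_t,t) \subseteq \mathcal{A}_{ECPv2}(\varepsilon_t,t,m,\mathbf{P}),
\]
which is exactly the stated inclusion for every $\varepsilon_t > k$.

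I do not expect a genuine obstacle here, since all the randomness is confined to the projection step and is already packaged inside Corollary~\ref{corollary:ecp2_superset_of_ecp} (which itself rests on the Johnson--Lindenstrauss bound of Lemma~\ref{lemma:distoriton_bound}). The one point meriting a little care is the quantifier ``$\forall\,\varepsilon_t > k$'': the monotonicity step is deterministic and holds for every such $\varepsilon_t$ simultaneously, and the single high-probability event of the projection—being independent of $\varepsilon_t$ because $\mathbf{P}$ is drawn once and then fixed—covers all these parameter values at once, so the confidence $1 - 1/\beta^2$ requires no union bound over $\varepsilon_t$.
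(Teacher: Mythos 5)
Your proof is correct and follows essentially the same route as the paper's: both establish $\mathcal{P}_{k,t} \subseteq \mathcal{A}_{\text{ECP}}(\varepsilon_t, t)$ for every $\varepsilon_t > k$ and then invoke Corollary~\ref{corollary:ecp2_superset_of_ecp} to conclude $\mathcal{A}_{\text{ECP}}(\varepsilon_t, t) \subseteq \mathcal{A}_{ECPv2}(\varepsilon_t, t, m, \mathbf{P})$ with probability at least $1 - 1/\beta^2$, finishing by transitivity. The only difference is that the paper imports the first inclusion as a citation (Proposition~2 of \cite{fourati25ecp}), whereas you re-derive it in place from Lemma~\ref{lem:potential_k} plus the elementary monotonicity of the acceptance region in its parameter; this makes the argument self-contained but is not a structurally different proof.
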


\begin{proof}
From Proposition 2 of \cite{fourati25ecp}, we have $\mathcal{P}_{k,t} \subseteq \mathcal{A}_{ECP}(\varepsilon_t, t)$ for any $\varepsilon_t > k$.  
By Corollary~\ref{corollary:ecp2_superset_of_ecp}, $\mathcal{A}_{ECP}(\varepsilon_t, t) \subseteq \mathcal{A}_{ECPv2}(\varepsilon_t, t, m, \mathbf{P})$.  
Therefore, by transitivity,
$\mathcal{P}_{k,t} \subseteq \mathcal{A}_{ECPv2}(\varepsilon_t, t, m, \mathbf{P})$ for any $\varepsilon_t > k.$
\end{proof}

Let $\mathcal{A}_{ECPv2}(\varepsilon_t, t, m, \mathbf{P})$ denote the acceptance region of ECPv2, as defined in Definition~\ref{def:potentially_accepted_v2}, and let $\mathcal{A}_{ECP}(\varepsilon_t, t)$ denote the region defined in Definition~\ref{def:potentially_accepted}. Let $\mathbf{P}$ be a random projection matrix with reduced dimension $d'$, as in Lemma~\ref{lemma:distoriton_bound}. Then, for any $m \leq n$, with probability at least $1 - 1/\beta^2$, the following inclusion holds:
\[
\mathcal{A}_{ECP}(\varepsilon_t, t) \subseteq \mathcal{A}_{ECPv2}(\varepsilon_t, t, m, \mathbf{P}). 
\]

Therefore, when $\varepsilon_t$ reaches or exceeds $k$, i.e., $\varepsilon_t \geq k$, which is unavoidable with a growing number of evaluations, the acceptance space does not exclude any potential maximizer, as all potential maximizers remain within the acceptance condition, which is crucial to guarantee the no-regret property of ECPv2.

First, we define the $i^\star$ as the hitting time, after which $\varepsilon_t$ reaches or overcomes the Lipschitz constant $k$.

\begin{definition}{\sc (Hitting Time)}
\label{i_star_definition}
For the sequence $\left(\varepsilon_i\right)_{i \in \mathbb{N}}$ and the unknown Lipschitz constant $k > 0$, we can define 
$
i^\star \triangleq \min \left\{i \in \mathbb{N}^\star: \varepsilon_i \geq k\right\}.
$
\end{definition}

In the following lemma, we upper-bound the time $t$ after which $\varepsilon_t$ is guaranteed to reach or exceed $k$. This shows that for sufficiently large $t$, the event of reaching $k$ is inevitable, with proof provided in Appendix A.

\begin{lemma}{\sc (Hitting Time Upper-bound)}
\label{hitting_time_upperbound}
For any function $f \in \text{Lip}(k)$, for any coefficient $\tau_{n,d}>1$, any initial value $\varepsilon_1 > 0$, any constant $C>1$, any integer $m$, any distortion $\delta \in [0,1)$, and any integer $m \geq 1$, the hitting time $i^\star$ is upperbounded as follows:
\begin{equation*}
\forall \varepsilon_1 > 0,  \quad i^\star \leq \max\left(\left\lceil\log_{\tau_{n,d}}\left(\frac{k}{\varepsilon_1}\right)\right\rceil,1\right).
\end{equation*}
\end{lemma}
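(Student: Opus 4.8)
The plan is to bound the hitting time $i^\star$ by tracking how the parameter $\varepsilon_t$ grows across accepted iterations. The key observation is that $\varepsilon_t$ is updated in two places in Algorithm~\ref{ALG:ECPv2_ALGORITHM}: multiplicatively by $\tau_{n,d}$ whenever the rejection counter exceeds $C$ (line 8), and upon each acceptance via $\varepsilon_{t+1} \gets \max(\tau_{n,d} \cdot \varepsilon_t, \varepsilon^\oslash_{t+1})$ (line 14). Since $\varepsilon^\oslash_{t+1} \geq 0$ and both update rules include a factor of at least $\tau_{n,d}$ applied to the previous value, the sequence $(\varepsilon_i)$ is non-decreasing and, crucially, satisfies $\varepsilon_{i} \geq \tau_{n,d}^{\,i-1} \cdot \varepsilon_1$ by a simple induction: each step multiplies by at least $\tau_{n,d}$ (the lower bound $\varepsilon^\oslash$ can only increase it further, never decrease it).

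First I would make this lower bound on the growth precise. Starting from $\varepsilon_1 > 0$, I claim $\varepsilon_i \geq \tau_{n,d}^{\,i-1}\varepsilon_1$ for all $i \geq 1$. The base case $i=1$ is immediate. For the inductive step, whichever update fires, the new value is at least $\tau_{n,d}$ times the old one, so $\varepsilon_{i+1} \geq \tau_{n,d} \cdot \varepsilon_i \geq \tau_{n,d} \cdot \tau_{n,d}^{\,i-1}\varepsilon_1 = \tau_{n,d}^{\,i}\varepsilon_1$. Note that the new ECPv2 ingredients—the worst-$m$ memory, the projection $\mathbf{P}$, and the lower bound $\varepsilon^\oslash$—do not affect this argument: the worst-$m$ set and projection only alter the acceptance region, not the update rule for $\varepsilon_t$, while $\varepsilon^\oslash \geq 0$ only reinforces the multiplicative growth. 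This is why the bound is identical to the one in ECP.

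Next I would solve for the index at which the geometric lower bound first reaches $k$. By definition of the hitting time, $i^\star$ is the smallest $i$ with $\varepsilon_i \geq k$. Using $\varepsilon_i \geq \tau_{n,d}^{\,i-1}\varepsilon_1$, it suffices to find the smallest $i$ for which $\tau_{n,d}^{\,i-1}\varepsilon_1 \geq k$, i.e. $\tau_{n,d}^{\,i-1} \geq k/\varepsilon_1$. Taking logarithms base $\tau_{n,d}$ (valid and monotone since $\tau_{n,d} > 1$) gives $i - 1 \geq \log_{\tau_{n,d}}(k/\varepsilon_1)$, hence the first integer satisfying this is $i = \lceil \log_{\tau_{n,d}}(k/\varepsilon_1)\rceil + 1$, which yields $i^\star \leq \lceil \log_{\tau_{n,d}}(k/\varepsilon_1)\rceil$ after the appropriate re-indexing, and the $\max(\cdot,1)$ handles the degenerate case $k \leq \varepsilon_1$ where $i^\star = 1$ already.

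The main obstacle, though minor, is the careful handling of edge cases and the exact ceiling/indexing bookkeeping. When $k \leq \varepsilon_1$ the logarithm is non-positive and the bound must collapse to $1$, which is precisely what the $\max(\cdot, 1)$ term captures; I would verify this separately. I would also confirm that the multiplicative growth truly applies on \emph{every} step $i \to i+1$ regardless of whether it was an acceptance or a forced growth, since the counting of $i^\star$ is in terms of the indexing of the $\varepsilon$ sequence and not wall-clock iterations—this alignment between the definition of $i^\star$ (Definition~\ref{i_star_definition}) and the update schedule in the algorithm is the one place where a sloppy argument could introduce an off-by-one error. Once the geometric lower bound and the degenerate case are pinned down, the result follows directly.
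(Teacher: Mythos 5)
Your proposal is correct and takes essentially the same route as the paper's proof: both argue that, irrespective of the stochastic rejection-triggered growth, every accepted evaluation deterministically multiplies the parameter by at least $\tau_{n,d}$ (the lower bound $\varepsilon_t^\oslash$, the worst-$m$ memory, and the projection playing no role in the update), giving $\varepsilon_t \geq \varepsilon_1 \tau_{n,d}^{t-1}$, and then invert this geometric bound at level $k$. The off-by-one bookkeeping you flag is a real looseness --- inverting $\varepsilon_1\tau_{n,d}^{i-1}\geq k$ strictly yields $i^\star \leq \left\lceil\log_{\tau_{n,d}}(k/\varepsilon_1)\right\rceil + 1$ rather than the stated $\max\left(\left\lceil\log_{\tau_{n,d}}(k/\varepsilon_1)\right\rceil, 1\right)$ --- but the paper's own proof glosses over exactly the same final step, asserting only that the result ``follows from the non-decreasing and diverging geometric growth,'' so your attempt is, if anything, more explicit than the original.
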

\begin{proof}
Notice that for all $t \geq 1$, for any choice of $C > 1$, $\varepsilon_1 > 0$, $\tau_{n,d} > 1$, $\delta \in [0,1)$, and $m \geq 1$, $\varepsilon_t$ grows throughout the iterations when the stochastic growth condition is satisfied and when a point is evaluated. While the growth condition may or may not occur, the latter happens deterministically after every evaluation. Therefore, $\varepsilon_t \geq \varepsilon_1 \tau_{n,d}^{t-1}$. Hence, the result follows from the non-decreasing and diverging geometric growth of $\varepsilon_1 \tau_{n,d}^{t-1}$.
\end{proof}

\subsection{Proof of Theorem~\ref{thm:proba_convergence}}
\label{app:prop:proba_convergence}

For any given function $f$, with some fixed unknown Lipschitz constant $k$, and for any chosen constants $\varepsilon_1 > 0$, $\tau_{n,d} > 1$, $C > 1$, $\delta \in [0,1)$, and $m\geq1$, as shown in Lemma~\ref{hitting_time_upperbound}, there exists a constant $L = \left\lceil\log _{\tau_{n,d}}\left(\frac{k}{\varepsilon_1}\right)\right\rceil$, not depending on $n$, such that for $t \geq L$, we have $t \geq i^\star$. Hence, by Definition~\ref{i_star_definition}, for $t \geq L$, $\varepsilon_t$ reaches and exceeds $k$. Therefore, it is guaranteed that as $t$ tends to infinity, $\varepsilon_t$ surpasses $k$. Furthermore, by Proposition~\ref{prop:potential}, we know that for all $\varepsilon_t > k$, $\mathcal{P}_{k,t} \subseteq \mathcal{A}_t(\varepsilon_t, t, m, \mathbf{P})$. Thus, as $t$ tends to infinity, the search space uniformly recovers all the potential maximizers and beyond.

\begin{lemma}[Corollary 13 in \cite{malherbe2017global}]
\label{prop:cvg_prs}
Let $\mathcal{X} \subset \mathbb{R}^d$ be
a compact and convex set with non-empty interior
and let $f \in \text{Lip}(k)$ be a $k$-Lipschitz functions defined on $\mathcal{X}$ for some $k\geq0$.
Then, 
for any $n \in \mathbb{N}^{\star}$ and
$\xi \in(0,1)$, we have with probability at least $1-\xi$,
\[
\max_{x \in \mathcal{X}}f(x) -\max_{i=1, \cdots, n} f(x_i) 
\leq k \cdot \operatorname{Diam}(\mathcal{X}) \cdot \left(  \frac{\ln(1/\xi)}{n} \right)^{\frac{1}{d}}
\]
where $x_1, \cdots, x_n$ denotes a sequence of $n$ independent copies of
$x \sim \mathcal{U}(\mathcal{X})$.
\end{lemma}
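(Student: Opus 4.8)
The plan is to prove this as the classical Pure Random Search (PRS) guarantee, reducing the regret event to the event that at least one of the $n$ i.i.d.\ uniform samples falls in a small ball around a global maximizer. First I would fix a maximizer $x^\star \in \arg\max_{x \in \mathcal{X}} f(x)$ and a radius $r > 0$ to be chosen later. The opening observation is purely a consequence of $k$-Lipschitzness: if some sample $x_i$ satisfies $\|x_i - x^\star\|_2 \le r$, then $\max_x f(x) - f(x_i) = f(x^\star) - f(x_i) \le k \|x_i - x^\star\|_2 \le k r$, so the simple regret is at most $kr$. It therefore suffices to control the probability that \emph{none} of the $n$ samples lands in $B(x^\star, r) \cap \mathcal{X}$.

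The second step is to lower-bound the mass a single uniform draw assigns to $B(x^\star, r) \cap \mathcal{X}$, and this is where convexity and the non-empty interior of $\mathcal{X}$ enter. I would consider the homothety $\mathcal{X}_\lambda := \{x^\star + \lambda(x - x^\star) : x \in \mathcal{X}\}$ with contraction factor $\lambda := r / \operatorname{diam}(\mathcal{X}) \in (0,1]$. Each element $x^\star + \lambda(x - x^\star) = (1-\lambda) x^\star + \lambda x$ is a convex combination of two points of $\mathcal{X}$, so convexity gives $\mathcal{X}_\lambda \subseteq \mathcal{X}$; moreover $\|x^\star + \lambda(x - x^\star) - x^\star\|_2 = \lambda \|x - x^\star\|_2 \le \lambda \operatorname{diam}(\mathcal{X}) = r$, hence $\mathcal{X}_\lambda \subseteq B(x^\star, r)$. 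Since $\mathcal{X}_\lambda$ is $\mathcal{X}$ scaled by $\lambda$ about $x^\star$, its Lebesgue volume is $\lambda^d \operatorname{vol}(\mathcal{X})$ (positive because $\mathcal{X}$ has non-empty interior), so for $x \sim \mathcal{U}(\mathcal{X})$,
\[
\mathbb{P}\bigl[x \in B(x^\star, r) \cap \mathcal{X}\bigr]
\ge \frac{\operatorname{vol}(\mathcal{X}_\lambda)}{\operatorname{vol}(\mathcal{X})}
= \Bigl(\frac{r}{\operatorname{diam}(\mathcal{X})}\Bigr)^d.
\]

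With this volume bound in hand the rest is routine. By independence, the probability that all $n$ samples miss the ball is at most $\bigl(1 - (r/\operatorname{diam}(\mathcal{X}))^d\bigr)^n \le \exp\bigl(-n (r/\operatorname{diam}(\mathcal{X}))^d\bigr)$ via $1 - u \le e^{-u}$. I would then choose $r$ so that this tail equals $\xi$, namely $r = \operatorname{diam}(\mathcal{X})(\ln(1/\xi)/n)^{1/d}$; taking the complement, with probability at least $1 - \xi$ at least one sample lies within $r$ of $x^\star$, and combining with the first step gives the claimed bound $k \cdot \operatorname{diam}(\mathcal{X})(\ln(1/\xi)/n)^{1/d}$.

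The one genuine subtlety, and the step I would flag as the main obstacle, is the geometric volume estimate, since it is the sole place the hypotheses are essential: without convexity the scaled copy need not sit inside $\mathcal{X}$, and without non-empty interior the uniform law and the ratio $\operatorname{vol}(\mathcal{X}_\lambda)/\operatorname{vol}(\mathcal{X})$ are ill-defined. A minor boundary case also needs a remark: the chosen $r$ satisfies $\lambda \le 1$ only when $\ln(1/\xi) \le n$; when this fails, $r > \operatorname{diam}(\mathcal{X})$ and the right-hand side already exceeds $k \cdot \operatorname{diam}(\mathcal{X})$, which upper-bounds $\max_x f - \min_x f$ by Lipschitzness, so the inequality holds trivially.
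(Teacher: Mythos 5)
Your proof is correct. Note that the paper does not prove this statement at all: it imports it verbatim as Corollary~13 of \cite{malherbe2017global}, so there is no internal proof to compare against. Your argument --- Lipschitz reduction of the regret to hitting a ball $B(x^\star,r)$, the homothety $\mathcal{X}_\lambda = (1-\lambda)x^\star + \lambda\mathcal{X}$ with $\lambda = r/\operatorname{diam}(\mathcal{X})$ to lower-bound the hitting probability by $\lambda^d$ via convexity, the $(1-u)\le e^{-u}$ tail bound with $r$ tuned so the tail equals $\xi$, and the trivial handling of the case $\ln(1/\xi) > n$ --- is the standard derivation of the pure-random-search rate and is essentially the same argument as in the cited reference, so it stands as a correct self-contained substitute for the citation.
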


\begin{proposition}
\label{prop:fasterprs}
{\sc (ECPv2 Faster than Pure Random Search)} 
Consider the ECPv2 algorithm tuned with any initial value $\varepsilon_1>0$, any constant $ \tau_{n,d}>1 $, any constant $C > 1$, any integer $m \geq 1$, any $\delta \in [0,1)$, and any $\beta > 1$.
Then, for any $f \in \normalfont{\text{Lip}}(k)$ and
$n \geq i^\star$, with probability $1-1/\beta^2$
\[
\mathbb{P}\left(\max_{i=i^\star, \cdots,  n } f(x_i) \geq y \right) \geq 
\mathbb{P}\left(\max_{i=i^\star, \cdots,  n } f(x'_i) \geq y \right)
\]
for all $y \geq \max_{i=1, \cdots, i^{\star} -1} f(x_i)$, where $x_{1}, \cdots,  x_n$ are $n$ evaluated points by ECPv2
and $x_{i^\star}', \cdots,  x'_n$ are $n$ 
independent uniformly distributed points over $\mathcal{X}$.
\end{proposition}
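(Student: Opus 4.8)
The plan is to establish a stochastic dominance result by comparing the sampling behavior of ECPv2 with that of Pure Random Search (PRS), exploiting the fact that after the hitting time $i^\star$, the acceptance region of ECPv2 contains all potential maximizers. First, I would condition on the high-probability event (occurring with probability at least $1-1/\beta^2$) that the random projection $\mathbf{P}$ preserves all pairwise distances within the desired distortion, so that Proposition~\ref{prop:potential} applies and $\mathcal{P}_{k,t} \subseteq \mathcal{A}_{ECPv2}(\varepsilon_t, t, m, \mathbf{P})$ for all $\varepsilon_t > k$. On this event, for every $t \geq i^\star$ we have $\varepsilon_t \geq k$, so the acceptance region at each such step is guaranteed to contain the current set of potential maximizers.

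Next, I would use the characterization of potential maximizers from Lemma~\ref{lem:potential_k} together with the definition of regret to argue that any point $x$ whose value $f(x) \geq \max_{i=1,\dots,t} f(x_i)$ is itself a potential maximizer, hence lies in the acceptance region. The key observation is that for a threshold $y \geq \max_{i=1,\dots,i^\star-1} f(x_i)$, any candidate sampled from $\mathcal{U}(\mathcal{X})$ with $f(x) \geq y$ would exceed the current best and therefore be a potential maximizer, so it is accepted. In contrast, PRS accepts every sampled point regardless of value. This means that ECPv2, which resamples from $\mathcal{U}(\mathcal{X})$ upon rejection and evaluates only accepted points, effectively performs a filtered version of uniform sampling that never rejects points improving on the threshold $y$.

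The core argument is then a coupling or inductive comparison: since rejected points by ECPv2 all have value below the threshold $y$ (because any point reaching $y$ would be accepted), each evaluation step of ECPv2 after $i^\star$ stochastically dominates a corresponding uniform draw with respect to the event $\{f \geq y\}$. Formally, I would show that the conditional probability of ECPv2's next accepted evaluation exceeding $y$ is at least the unconditional uniform probability, because ECPv2's acceptance only discards points that fail to reach $y$. Taking the maximum over $i = i^\star, \dots, n$ and applying this dominance step-by-step yields the stated inequality for the running maximum.

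The main obstacle will be handling the adaptive, data-dependent nature of the acceptance region: unlike PRS, the points $x_{i^\star}, \dots, x_n$ in ECPv2 are neither independent nor identically distributed, since each accepted point reshapes the region $\mathcal{A}_{ECPv2}$ for subsequent steps. The delicate part is establishing that this adaptivity can only \emph{help} with respect to the threshold event $\{f \geq y\}$ — that is, that the filtering introduced by the acceptance rule never decreases the chance of sampling a high-value point above $y$. I expect this to follow from the guarantee that the acceptance region always contains the super-level set $\{x : f(x) \geq y\}$ for $y$ above the pre-$i^\star$ maximum, so conditioning on acceptance can only raise the distribution of accepted values; making this monotonicity precise, likely via a measure-theoretic argument on the restricted uniform distribution, is the technical crux.
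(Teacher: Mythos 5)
Your proposal is correct and follows essentially the same route as the paper's proof: conditioning on the distance-preservation event so that $\mathcal{P}_{k,t} \subseteq \mathcal{A}_{ECPv2}(\varepsilon_t, t, m, \mathbf{P})$ for $\varepsilon_t \geq k$, observing that the super-level set $\mathcal{X}_y$ consists of potential maximizers while the running maximum stays below $y$, and then propagating the per-step dominance (accepted points are uniform on a region sandwiched between $\mathcal{X}_y$ and $\mathcal{X}$) through an induction on $n$. The "technical crux" you flag is exactly what the paper resolves, via the monotonicity of $x \mapsto \mathbb{I}\{x \geq y\} + \mathbb{I}\{x < y\}\,\mu(\mathcal{X}_y)/\mu(\mathcal{X})$ combined with the stochastic-dominance induction hypothesis.
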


\begin{proof}
We proceed by induction. Let \( x_1, \cdots, x_{i^\star} \) be a sequence of evaluation points generated by ECPv2 after \( i^\star \) iterations, and let \( x_{i^\star}' \) be an independent point randomly sampled over \( \mathcal{X} \). Consider any \( y \geq \max_{i=1, \cdots, i^\star - 1} f(x_i) \), and define the corresponding level set \( \mathcal{X}_y = \{ x \in \mathcal{X} : f(x) \geq y \} \). Assume, without loss of generality, that \( \mu(\mathcal{X}_y) > 0 \) (otherwise, \( \mathbb{P}(f(x_{i^\star}) \geq y) = 0 \), and the result trivially holds).

For \( t = i^\star \), we have \( \varepsilon_{i^\star} \geq k \), therefore by Proposition~\ref{prop:potential} we have with probability at least $1-1/\beta^2$ \( \mathcal{P}_{k,i^\star} \subseteq \mathcal{A}_{ECPv2}(\varepsilon_{i^\star}, i^\star, m, \mathbf{P})\subseteq \mathcal{X} \). Moreover, since \( y \geq \max_{i=1, \cdots, i^\star - 1} f(x_i) \), if \( \mathcal{X}_y \) is non-empty, its elements are potential maximizers. Hence, \( \mathcal{X}_y \subseteq \mathcal{P}_{k,i^\star} \). If \( \mathcal{X}_y \) is empty, the result holds trivially. Next, we compute the following probabilities:
\[
\begin{aligned}
\mathbb{P}(f(x_{i^\star}) \geq y) &= \mathbb{E}\left[\mathbb{I}\{ x_{i^\star} \in \mathcal{X}_y \} \right] \\
&=\mathbb{E}\left[\frac{\mu(\mathcal{A}_{ECPv2}(\varepsilon_{i^\star}, i^\star, m, \mathbf{P}) \cap \mathcal{X}_y)}{\mu(\mathcal{A}_{ECPv2}(\varepsilon_{i^\star}, i^\star, m, \mathbf{P}))} \right] \\&\geq \mathbb{E}\left[\frac{\mu(\mathcal{P}_{k,i^\star} \cap \mathcal{X}_y)}{\mu(\mathcal{X})} \right] \\
&= \mathbb{E}\left[\frac{\mu(\mathcal{X}_y)}{\mu(\mathcal{X})} \right] \\
&= \mathbb{P}(f(x_{i^\star}') \geq y).
\end{aligned}
\]

Now, suppose the statement holds for some \( n \geq i^\star \). Let \( x_1, \cdots, x_{n+1} \) be a sequence of evaluation points generated by ECPv2 after \( n+1 \) iterations, and let \( x_1', \cdots, x_{n+1}' \) be a sequence of \( n+1 \) independent points sampled over \( \mathcal{X} \).

As before, assume \( \mu(\mathcal{X}_y) > 0 \), and let \( \mathcal{A}_{ECPv2}(\varepsilon_{n}, n, m, \mathbf{P})\) denote the sampling region of \( x_{n+1} \mid x_1, \cdots, x_n \). Then, on the event \( \{ \max_{i=i^\star, \cdots, n} f(x_i) < y \} \), we have, with probability $1-1/\beta^2$, \( \mathcal{X}_y \subseteq \mathcal{P}_{k,n} \subseteq \mathcal{A}_{ECPv2}(\varepsilon_{n}, n, m, \mathbf{P}) \subseteq \mathcal{X} \). We note $\mathcal{T} = \mathbb{I}\left\{\max_{i=i^{\star}, \cdots, n} f(x_i) \geq y\right\}$ and for briefness we note $\mathcal{A}_{ECPv2, n} = \mathcal{A}_{ECPv2}(\varepsilon_{n}, n, m, \mathbf{P})$ and compute:
\[
\begin{aligned}
&\mathbb{P}\left(\max_{i=i^{\star}, \cdots, n+1} f(x_i) \geq y\right) \\
& = \mathbb{E}\left[\mathcal{T} + \mathbb{I}\left\{\max_{i=i^{\star}, \cdots, n} f(x_i) < y, x_{n+1} \in \mathcal{X}_y\right\}\right] \\
& = \mathbb{E}\left[\mathcal{T} + \mathbb{I}\left\{\max_{i=i^{\star}, \cdots, n} f(x_i) < y\right\} \frac{\mu\left(\mathcal{A}_{ECPv2, n} \cap \mathcal{X}_y\right)}{\mu\left(\mathcal{A}_{ECPv2, n}\right)}\right] \\
& \geq \mathbb{E}\left[\mathcal{T} + \mathbb{I}\left\{\max_{i=i^{\star}, \cdots, n} f(x_i) < y\right\} \frac{\mu\left(\mathcal{P}_{k,n} \cap \mathcal{X}_y\right)}{\mu\left(\mathcal{A}_{ECPv2, n}\right)}\right] \\
& \geq \mathbb{E}\left[\mathcal{T} + \mathbb{I}\left\{\max_{i=i^{\star}, \cdots, n} f(x_i) < y\right\} \frac{\mu\left(\mathcal{X}_y\right)}{\mu(\mathcal{X})}\right] \\
& \geq \mathbb{E}\left[\mathcal{T} + \mathbb{I}\left\{\max_{i=i^{\star}, \cdots, n} f(x_i') < y\right\} \frac{\mu\left(\mathcal{X}_y\right)}{\mu(\mathcal{X})}\right] \\
&= \mathbb{E}\left[\mathcal{T} + \mathbb{I}\left\{\max_{i=i^{\star}, \cdots, n} f(x_i') < y, x_{n+1}' \in \mathcal{X}_y\right\}\right] \\
& = \mathbb{P}\left(\max_{i=i^{\star}, \cdots, n+1} f(x_i') \geq y\right)
\end{aligned}
\]
where the third inequality follows from the fact that \( x \mapsto \mathbb{I}\{x \geq y\} + \mathbb{I}\{x < y\} \frac{\mu(\mathcal{X}_y)}{\mu(\mathcal{X})} \) is non-decreasing, and the induction hypothesis implies that \( \max_{i=i^\star, \cdots, n} f(x_i) \) stochastically dominates \( \max_{i=i^\star, \cdots, n} f(x_i') \). Thus, by induction, the statement holds for all \( n \geq i^\star \), completing the proof.    
\end{proof}

\subsection{Proof of Theorem~\ref{thm:ecpupperbound}}
\label{proof:thm:ecpupperbound}

Fix any \(\xi \in \left(0, 1 - \frac{1}{\beta^2}\right)\), and let \(i^\star\) be defined as in Definition~\ref{i_star_definition}. Consider any iteration budget \(n > i^\star\).

By Proposition~\ref{prop:fasterprs}, for all \(t \geq i^\star\), the precision threshold \(\varepsilon_t \geq k\) with probability at least \(1 - \frac{1}{\beta^2}\), where \(k\) is the (unknown) Lipschitz constant of \(f\). This implies that, starting from iteration \(i^\star\), ECPv2 behaves at least as efficiently as Pure Random Search (PRS) in improving over the best value observed so far.

Let \(x^\star \in \arg\max_{x \in \mathcal{X}} f(x)\) and define the simple regret after \(n\) evaluations as \(\mathcal{R}_{\text{ECPv2}, f}(n) := f(x^\star) - \max_{i=1,\ldots,n} f(x_i)\). From Lemma~\ref{prop:cvg_prs}, with total probability at least \(1 - \frac{1}{\beta^2} - \xi\), ECPv2 enjoys the same regret bound:
\begin{align*}
\mathcal{R}_{\text{ECPv2}, f}(n)
&\leq k \cdot \operatorname{diam}(\mathcal{X}) \cdot \left( \frac{\ln(1/\xi)}{n - i^\star + 1} \right)^{1/d} \\
& = k \cdot
\operatorname{diam}(\mathcal{X}) \cdot \left(  \frac{n}{n-i^\star + 1} \right)^{\frac{1}{d}} \cdot
\left(  \frac{\ln(\frac{1}{\xi})}{ n } \right)^{\frac{1}{d}}  \\
& \leq k \cdot
\operatorname{diam}(\mathcal{X}) \cdot \left(i^\star \right)^{\frac{1}{d}}
\left(  \frac{\ln(1/\xi)}{ n } \right)^{\frac{1}{d}}\\
&\leq k \cdot
\operatorname{diam}(\mathcal{X}) \cdot \log_{\tau_{n,d}}\left(\frac{k}{\varepsilon_1}\right)^{\frac{1}{d}}
\left(  \frac{\ln(1/\xi)}{ n } \right)^{\frac{1}{d}}
\end{align*}
where the last two inequalities follow from Lemma~\ref{hitting_time_upperbound}, which ensures \( \frac{n}{n - i^\star + 1} \leq i^\star \) for all \(n > i^\star\).

For the case \(n \leq i^\star\), the bound is trivially satisfied since:
\[
\mathcal{R}_{\text{ECPv2}, f}(n) \leq k \cdot \operatorname{diam}(\mathcal{X}). 
\]
\hfill\(\square\)

\section{Appendix C: Missing Proofs}

\subsection{Proof of Lemma~\ref{lem:lb_eps}}

\begin{proof}
Suppose there exists $x \in \mathcal{X}$ such that:
\[
\min_{i=1,\dots,t} \left( f(X_i) + \varepsilon_t \cdot \|x - X_i\|_2 \right) \geq f_{\max_t}.
\]
Then for all $i \in \{1, \dots, t\}$, we must have:
\[
f(X_i) + \varepsilon_t \cdot \|x - X_i\|_2 \geq f_{\max_t},
\]
which implies:
\[
\varepsilon_t \cdot \|x - X_i\|_2 \geq f_{\max_t} - f(X_i), \quad \forall i \in [t].
\]
Hence:
\[
\varepsilon_t \geq \frac{f_{\max_t} - f(X_i)}{\|x - X_i\|_2}, \quad \forall i \in [t].
\]
Taking the maximum over all $i \in [t]$:
\[
\varepsilon_t \geq \max_{i=1,\dots,t} \left( \frac{f_{\max_t} - f(X_i)}{\|x - X_i\|_2} \right).
\]
Finally, since $\|x - X_i\|_2 \leq \operatorname{diam}(\mathcal{X})$ for all $x \in \mathcal{X}$, we get:
\[
\varepsilon_t \geq \max_{i=1,\dots,t} \left( \frac{f_{\max_t} - f(X_i)}{\operatorname{diam}(\mathcal{X})} \right) = \frac{f_{\max_t} - f_{\min_t}}{\operatorname{diam}(\mathcal{X})}.
\]
\end{proof}

\subsection{Proof of Lemma~\ref{lemma:lb_superset}}
\begin{proof}
Note that for a given value of $\varepsilon_t$, ECPv2 instead uses $\max\{\varepsilon_t, \varepsilon^\oslash\} \geq \varepsilon_t$. Thus the result follows directly from Lemma 1 in \cite{fourati25ecp}. 
\end{proof}

\subsection{Proof of Lemma~\ref{lemma:acceptance-region-with-m}}
\begin{proof}
When $t \leq m$, we have $\mathcal{I}_t^{m} = \{1, \dots, t\}$, so the Worst-$m$ condition reduces exactly to the condition in Equation~\eqref{new_acceptance_condition_with_lb}, implying $\mathcal{A}_{\text{ECP}}(\varepsilon_t, t) \subseteq \mathcal{A}_t(\varepsilon_t, t) = \mathcal{A}_t(\varepsilon_t, t, m)$, as shown by Lemma~\ref{lemma:lb_superset}.

When $t > m$, the set $\mathcal{I}_t^{m}$ is a strict subset of $\{1,\dots,t\}$ with cardinality $m$. Thus, any point $x$ that satisfies Equation~\eqref{new_acceptance_condition_with_lb} automatically satisfies Equation~\eqref{new_acceptance_condition_with_m}, establishing $\mathcal{A}_t(\varepsilon_t, t) \subseteq \mathcal{A}_t(\varepsilon_t, t, m)$.
\end{proof}

\section{Appendix D: Analysis of Projection Dimension Sensitivity (impact of $\delta$ and $\beta$)}

The Lemma~\ref{lemma:distoriton_bound} states that any set of $n$ points in high-dimensional space can be embedded into $\mathbb{R}^{d'}$ with pairwise distances preserved within a multiplicative factor of $1 \pm \delta$, provided:
\[
d' \geq \frac{8 \log(\beta n)}{\delta^2 - \delta^3}
\]
where $\beta > 1$ controls the failure probability. The term $\delta^2 - \delta^3$ governs the required projection dimension $d'$ in terms of the distortion tolerance $\delta$. To understand its behavior, we define the distortion scaling function:
\[
f(\delta) = \frac{1}{\delta^2 - \delta^3} = \frac{1}{\delta^2 (1 - \delta)} \quad \text{for } \delta \in (0, 1)
\]

\vspace{0.5em}
\noindent
As $\delta \to 0$, the required projection dimension diverges: $f(\delta) \sim 1/\delta^2$. Similarly, as $\delta \to 1$, the denominator vanishes and $f(\delta) \to \infty$. Thus, there exists an intermediate value of $\delta$ that minimizes the bound.

\subsection*{On the choice of $\delta$}

We analytically compute the minimum of $f(\delta)$ by solving:
\[
f(\delta) = \frac{1}{\delta^2 (1 - \delta)} \quad \Rightarrow \quad
f'(\delta) = \frac{-2(1 - \delta) + \delta}{\delta^3 (1 - \delta)^2}
\]
Setting $f'(\delta) = 0$ yields:
\[
-2(1 - \delta) + \delta = 0 \quad \Rightarrow \quad 3\delta = 2 \quad \Rightarrow \quad \delta^* = \frac{2}{3}
\]
At this optimal distortion level:
\[
f\left(\frac{2}{3}\right) = \frac{1}{\left(\frac{2}{3}\right)^2 - \left(\frac{2}{3}\right)^3}
= \frac{1}{\frac{4}{9} - \frac{8}{27}} = \frac{1}{\frac{4}{27}} = \frac{27}{4} = 6.75
\]
Hence, the projection dimension is minimized when $\delta = \frac{2}{3}$, and the bound simplifies to:
\[
d' \geq \frac{8 \log(\beta n)}{6.75}
\]

\subsubsection*{Visualization}

Figure~\ref{fig:jl-scaling} shows the scaling term $f(\delta)$ over the interval $(0.01, 0.99)$. The y-axis is plotted logarithmically to highlight the divergence at the boundaries. The unique minimum at $\delta = \frac{2}{3}$ is marked on the curve.

\begin{figure}[h]
  \centering
  \includegraphics[width=0.44\textwidth]{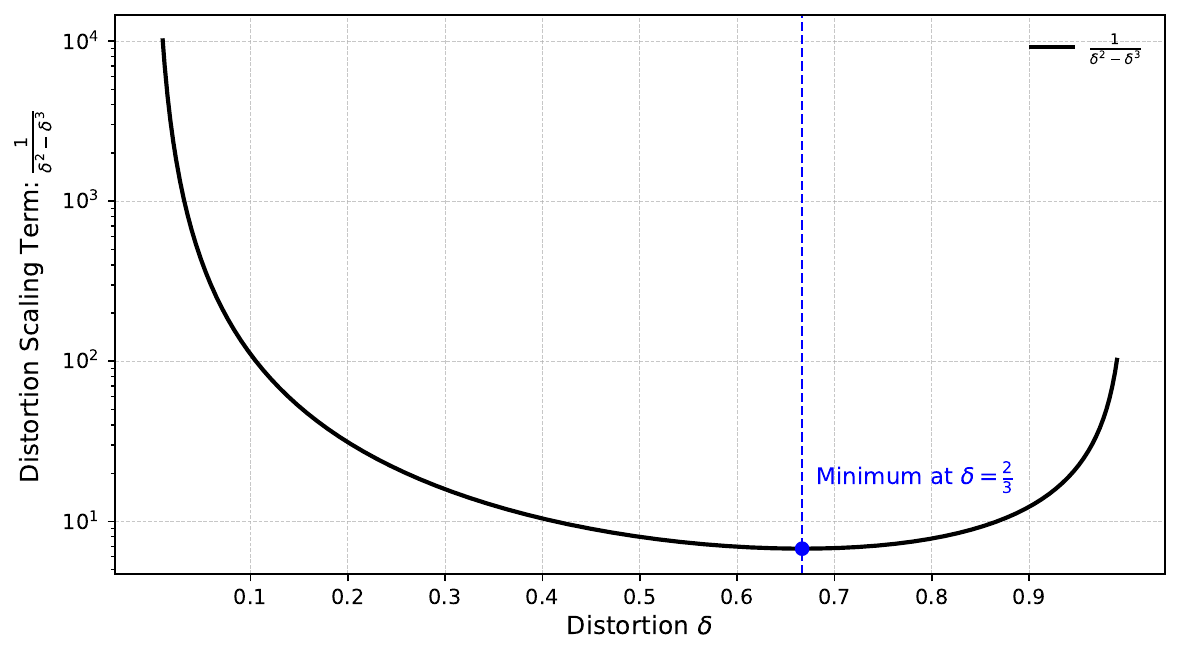}
  \caption{JL scaling term $f(\delta) = \frac{1}{\delta^2 - \delta^3}$ as a function of distortion $\delta$. The function attains a minimum at $\delta = \frac{2}{3}$, yielding the most distortion-efficient embedding.}
  \label{fig:jl-scaling}
\end{figure}

\subsection{Projection Dimension Scaling}

Figure~\ref{fig:projection-dimension} illustrates how the required projection dimension \( d' \) scales with the number of evaluations \( n \) for various values of the allowed distortion \( \delta \):
\[
d' = \frac{8 \log(\beta n)}{\delta^2 - \delta^3},
\]
where \( \beta \) is a constant (set to 5 in our case to ensure the guarantee holds with probability above 96\%). Smaller values of \( \delta \) lead to larger required projection dimensions due to the tighter approximation constraints. The plot uses a log-log scale to capture behavior over several orders of magnitude in \( n \), ranging from \( 10^1 \) to \( 10^4 \).

\begin{figure}[h]
    \centering
    \includegraphics[width=0.8\linewidth]{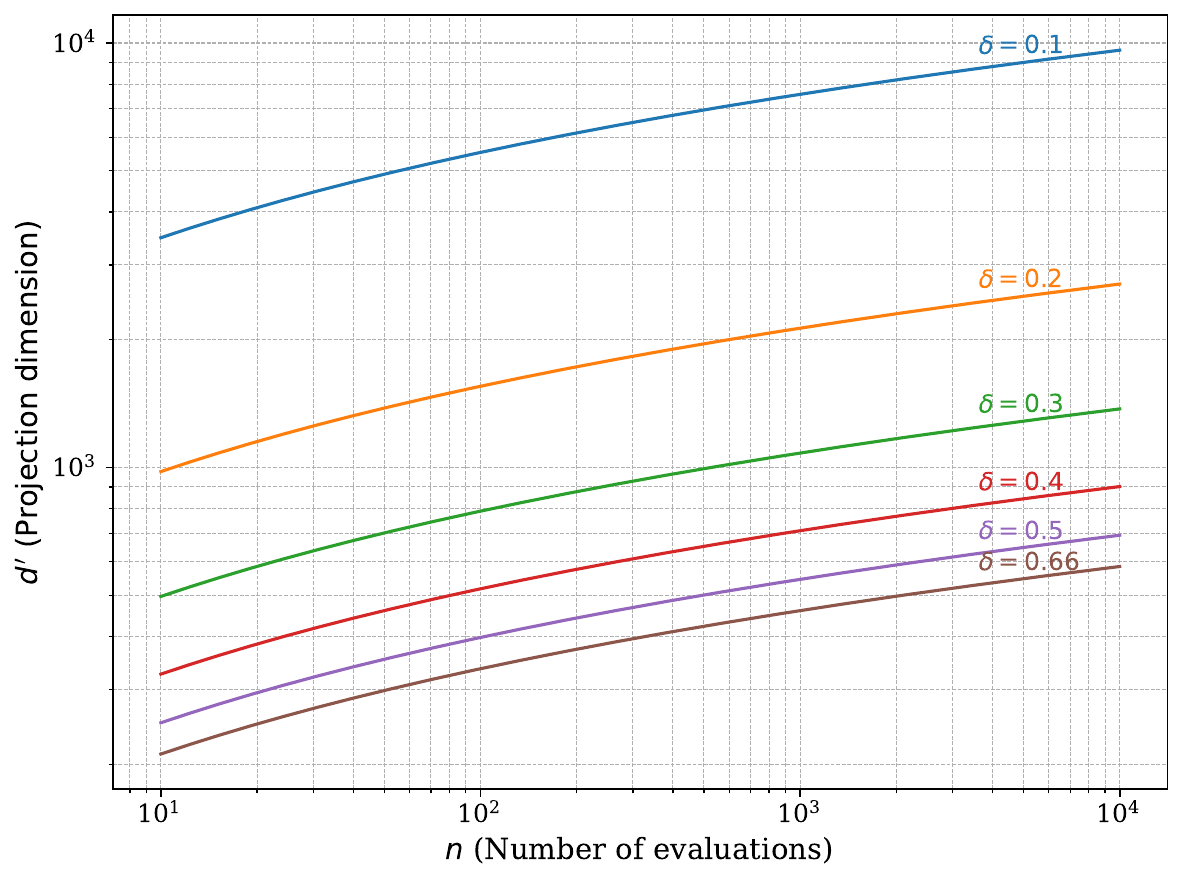}
    \caption{Projection dimension \( d' \) as a function of the number of evaluations \( n \) for various values of \( \delta \in \{0.1, 0.2, 0.3, 0.4, 0.5, 0.6 \} \), with \( \beta = 5 \) fixed.}
    \label{fig:projection-dimension}
\end{figure}

\subsection{On the choice of $\beta$}

The Lemma provides the following guarantee on the success probability of the embedding:
\[
\mathbb{P} \left[\text{pairwise distances are preserved within } (1 \pm \delta)\right] \geq 1 - \frac{1}{\beta^2}
\]
To achieve a success probability of at least $p = 0.95$, we solve:
\[
1 - \frac{1}{\beta^2} \geq 0.95 \quad \Longrightarrow \quad \frac{1}{\beta^2} \leq 0.05 \quad \Longrightarrow \quad \beta^2 \geq 20
\]
This gives the threshold:
\[
\beta \geq \sqrt{20} \approx 4.47
\]

In practice, we round up for simplicity and robustness. Choosing $\beta = 5$ yields:
\[
\Pr[\text{Success}] \geq 1 - \frac{1}{25} = 0.96
\]

\vspace{0.5em}
\noindent
Thus, we adopt $\beta = 5$ in our empirical evaluations, which ensures that the embedding succeeds with at least 96\% probability.

\subsection*{Effect of \(\beta\) on Distortion Concentration}

To empirically validate the bound in Lemma~\ref{lemma:distoriton_bound}, we investigate how the choice of the confidence parameter \( \beta \) affects the preservation of pairwise distances after random projection. Recall that a larger \( \beta \) results in a larger projection dimension \( d' \), thereby increasing the probability that all pairwise distortions fall within the acceptable range \( [1 - \delta,\ 1 + \delta] \).

We generate \( n = 100 \) data points in \( \mathbb{R}^{1000} \) from a uniform distribution over \([0, 1]^d\), and project them to \( \mathbb{R}^{d'} \), where \( d' \) is determined according to the JL-type bound:
\[
d' \geq \frac{8 \log(\beta n)}{\delta^2 - \delta^3}, \quad \text{with } \delta = 0.5.
\]
For each value of \( \beta \in \{0.05,\ 0.5,\ 5\} \), we repeat the projection process over 1000 independent trials and collect all pairwise distortion ratios.

Figure~\ref{fig:distortion-histogram-avg} presents the resulting histograms. As expected, the distortion distributions become more concentrated around 1 as \( \beta \) increases. A horizontal line segment indicates the theoretical distortion bound \( [1 - \delta,\ 1 + \delta] \), which serves as a visual reference for acceptable approximation quality. These results empirically support the theoretical guarantee: larger \( \beta \) leads to more reliable distance preservation due to increased projection dimension.

\begin{figure}[htbp]
    \centering
    \includegraphics[width=0.85\linewidth]{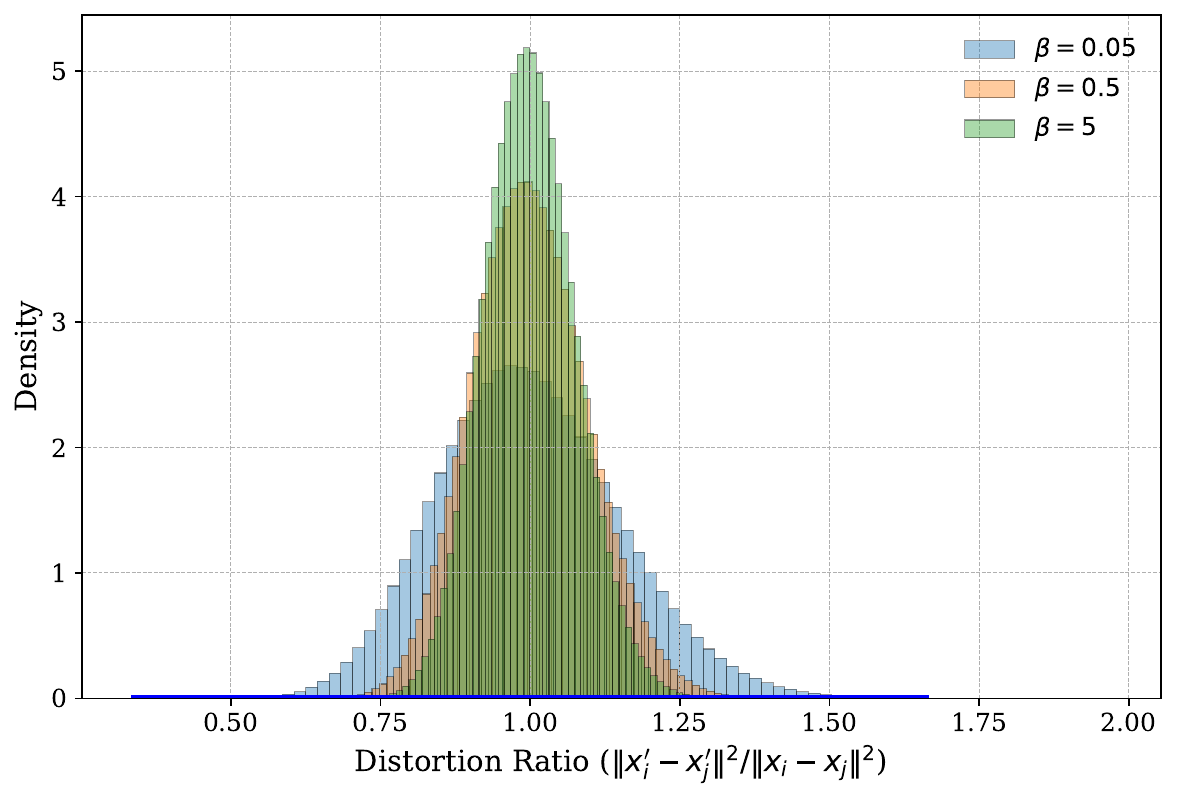}
    \caption{Histogram of pairwise distance distortions aggregated over 1000 random trials. Each histogram corresponds to a different value of \( \beta \), with \( n = 100 \), \( d = 1000 \), and distortion bound \( \delta = 0.5 \). A thick blue horizontal line marks the acceptable distortion interval \( [1 - \delta,\ 1 + \delta] \). As \( \beta \) increases (and with it the projection dimension \( d' \)), the distortion distribution becomes more concentrated around 1, consistent with Lemma~\ref{lemma:distoriton_bound}.}
    \label{fig:distortion-histogram-avg}
\end{figure}

\subsection{Empirical Success Rates vs. Projection Dimension}

To further assess the validity of the theoretical projection bound from Lemma~\ref{lemma:distoriton_bound}, we empirically estimate the success rate of random projections across varying target dimensions \( d' \). 

We fix \( n = 100 \), \( d = 1000 \), \( \delta = \frac{2}{3} \), and vary \( d' \) around the theoretical minimum:
\[
d'_{\text{bound}} = \left\lceil \frac{8 \log(\beta n)}{\delta^2 - \delta^3} \right\rceil,
\quad \text{with } \beta = 5.
\]
For each \( d' \), we generate 1000 random projections and compute the proportion of trials in which all pairwise distances are preserved within the interval \( [1 - \delta,\ 1 + \delta] \). 

Figure~\ref{fig:success-vs-dimension} reports the empirical success rate as a function of \( d' \). The horizontal dashed line marks the theoretical success threshold \( 1 - 1/\beta^2 = 0.96 \), while the vertical dashed line marks the theoretical projection bound. The plot confirms that success rates converge to the theoretical threshold around the predicted dimension \( d'_{\text{bound}} \), supporting the tightness of the bound.

\begin{figure}[h]
    \centering
    \includegraphics[width=0.75\linewidth]{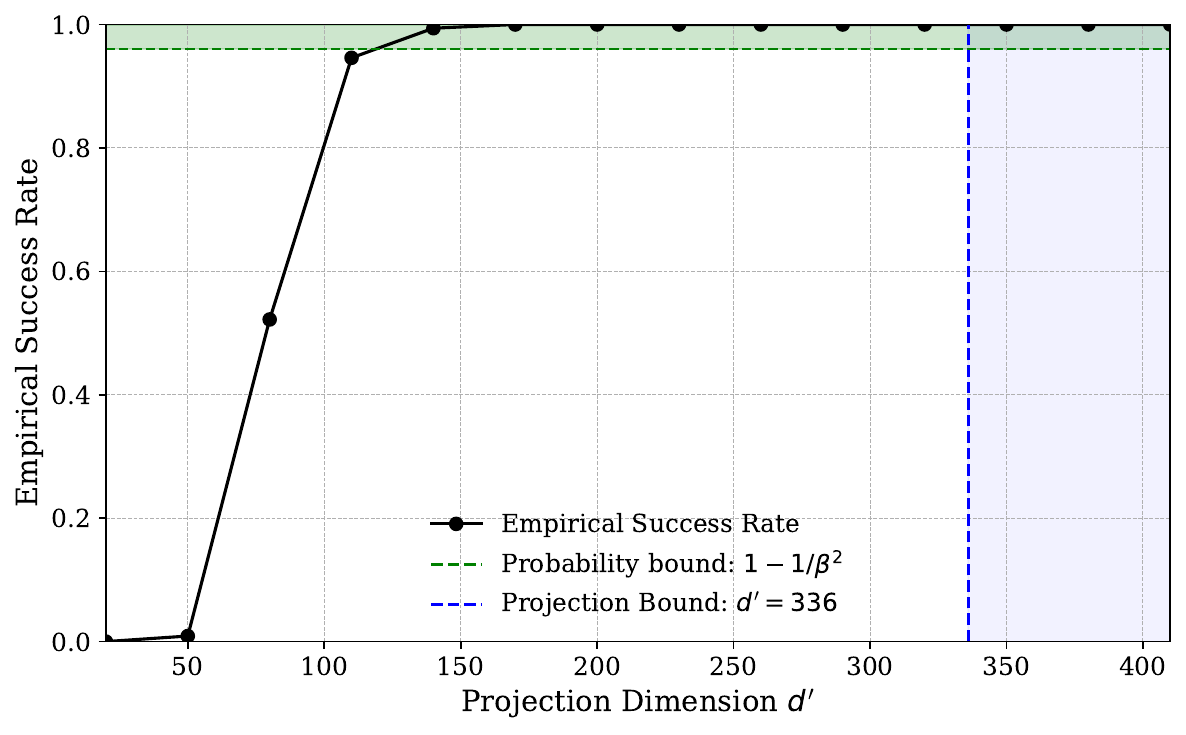}
    \caption{Empirical success rate of random projections preserving all pairwise distances within \( (1 \pm \delta) \), with \( \delta = \frac{2}{3} \), \( n = 100 \), and \( \beta = 5 \). Each point averages 1000 trials. The theoretical projection bound and success threshold are marked with dashed lines.}
    \label{fig:success-vs-dimension}
\end{figure}

\section{Appendix E: Analysis of the Worst-$m$ Approximation Strategy}

\subsection{Effect of \( m \) on the Minimization}

To alleviate the increasing conservatism in the acceptance condition as the number of evaluations grows, ECPv2 adopts a subsampling approach: it considers only the $m$ worst (i.e., lowest) function values from the evaluation history. This subset is defined as:
\[
\mathcal{I}_t^{m} = \arg \min_{\substack{S \subseteq \{1, \dots, t\} \\ |S| = m}} \sum_{i \in S} f(x_i).
\]
The hyperparameter $m \geq 1$ controls the aggressiveness of the approximation. Notably, for $m \geq n$, we recover the standard ECP acceptance condition. For smaller $m$, the approximation becomes less conservative and computationally cheaper. The modified acceptance condition becomes:
\begin{equation*}
\min_{i \in \mathcal{I}^{m}_t} \left( f(x_i) + \max\{\varepsilon_t, \varepsilon^\oslash_t\} \cdot \|  x -X_i \|_2 \right) \geq \max_j f(x_j).    
\end{equation*}

We denote the left-hand side of this inequality as $\phi_m(x)$ and compare it to its full-data counterpart, $\phi_{\text{full}}(x)$:
\[
\phi_{\text{full}}(x) = \min_{i} \left[ f(x_i) + \epsilon_t \cdot \|x - x_i\|_2 \right],
\]
\[
\phi_m(x) = \min_{i \in \mathcal{I}_m} \left[ f(x_i) + \epsilon_t \cdot \|x - x_i\|_2 \right],
\]
where $\mathcal{I}_m$ is the index set of the $m$ lowest function values, and we fix $\epsilon_t = 1$.

To quantify the quality of the Worst-$m$ approximation, we evaluate the ratio $\phi_{\text{full}}(x)/\phi_m(x)$ as a function of the sampling fraction $m/n$. Specifically, we sample $n = 100$ points uniformly within each function’s domain, select a random test point $x_{\text{test}}$, and compute the ratio for varying values of $m \in [1, 100]$. Each experiment is repeated over 1000 independent trials per function.

We fit the observed mean ratios to a logarithmic model of the form:
\[
\frac{\phi_{\text{full}}(x)}{\phi_m(x)} \approx a \cdot \log(m/n) + b,
\]
where $(a, b)$ are fitted parameters for each benchmark function. This functional form reflects the intuition that adding more points provides diminishing returns in surrogate quality. We perform this evaluation across four standard non-convex test functions.

\begin{figure}[h]
    \centering
    \includegraphics[width=0.9\linewidth]{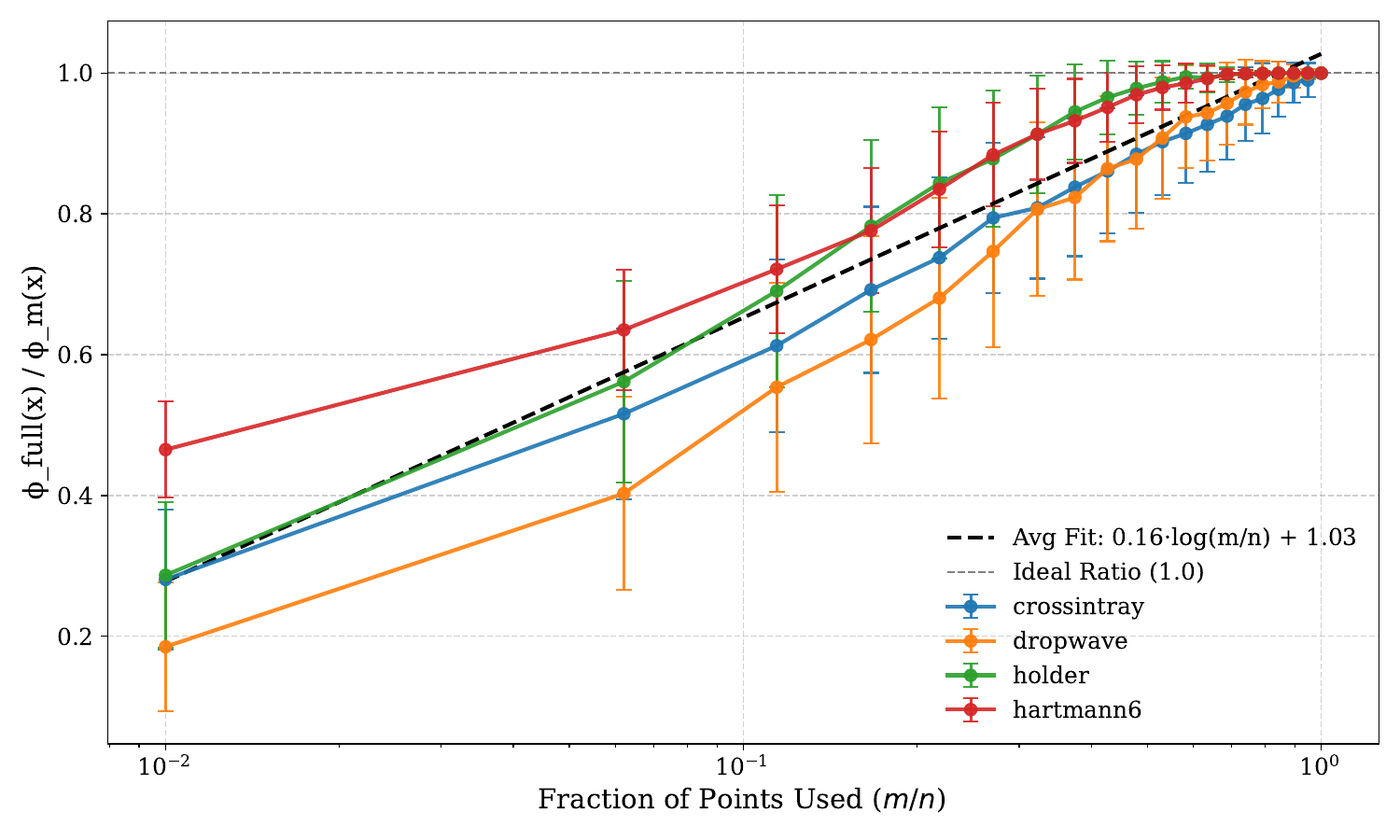}
    \caption{
        Ratio $\phi_{\text{full}}(x)/\phi_m(x)$ as a function of $m/n$, the fraction of retained evaluations, for four benchmark functions. Each curve represents the average over 1000 trials using $n = 100$ points. Error bars indicate half the standard deviation across trials. The black dashed line shows the average fit $\bar{a} \cdot \log(m/n) + \bar{b}$, with $(\bar{a}, \bar{b}) \approx (0.16, 1.03)$. The horizontal dashed line marks the ideal ratio of $1.0$.}
    \label{fig:phi_ratio}
\end{figure}

This empirical study reveals a consistent logarithmic relationship between $\phi_{\text{full}}(x)/\phi_m(x)$ and $m/n$ across all benchmark functions. The key insight is that even a small, well-chosen subset of evaluations ($m \ll n$) retains most of the quality of the full acceptance condition. This observation supports the use of the Worst-$m$ strategy to balance surrogate fidelity and computational efficiency, with practical implications for large-scale and real-time optimization scenarios.

\subsection{Effect of \( m \) in ECPv2}

To isolate the impact of \( m \) in the ECPv2 optimizer, we conducted an ablation study with fixed parameters \( \delta = \frac{2}{3} \) and \( \beta = 5 \).

We benchmarked ECPv2 across 12 diverse test functions, varying in dimension and landscape. For each function, we ran 100 independent optimization trials, each for $n = 1000$ evaluations. The tested values for \( m \) were:
\[
m \in \{2, 4, 8, 16, 32, 64, 128, 256, 512, 1000\}.
\]

We recorded the best-so-far function value over time and averaged across runs. Each figure reports the final best-so-far mean (with ± half standard deviation error bars) as a function of total CPU time.

\begin{remark}
ECPv2 only performs projection when the function’s intrinsic dimensionality \( d \) exceeds \( d' \):
$
d > d' = \left\lceil \frac{8 \log(5n)}{\delta^2 - \delta^3} \right\rceil,
$
where \( n \) is the evaluation budget. When \( d \leq d' \), projection is skipped regardless of the \( m \) setting.   
\end{remark}

Figures~\ref{fig:ecpv2_ablation_m_part1}, \ref{fig:ecpv2_ablation_m_part2}, \ref{fig:ecpv2_ablation_m_part3} illustrate the impact of varying \( m \) on optimization performance across all functions. As expected, increasing \( m \) leads to longer computational times due to the higher cost of minimization (especially for very large values of $m$). At the same time, larger \( m \) values tend to achieve better optima on average (not always), though with diminishing returns beyond a certain point.

In almost all experiments across diverse functions and dimensionalities, the three choices of \( m \in \{256, 512, 1000\} \) are the slowest, with larger values of \( m \) generally leading to slower performance. Notably, setting \( m = 1000 \) effectively disables the worst-\( m \) projection since \( n = 1000 \), thereby recovering the ECP minimization. This observation highlights the impact of using larger projection dimensions on wall-clock time.

These results support the findings in Appendix~E, where the Worst-$m$ strategy closely approximated the exact solution even for modest values of \( m \). We conclude that very large values of \( m \) are not necessary in practice: smaller settings can achieve comparable or even superior results in significantly less time (up to 4×–5× faster in CPU seconds).

Interestingly, in some cases, smaller values of \( m \) outperform larger ones or even the exact method. For example, in the Eggholder problem, \( m = 128 \) achieves the best score while being 2.5\(\times\) faster than exact minimization. In the Rosenbrock function (500 dimensions), \( m = 16 \) yields the best performance while being 1.5\(\times\) faster. We attribute this to the relaxed acceptance condition used when \( m \) is small: it is more permissive, allowing better candidate points to be accepted when the acceptance region, controlled by \( \varepsilon_t \), is still small. In contrast, with larger \( m \), the acceptance region tightens—especially when coupled with lower values of \( \varepsilon_t \)—potentially rejecting high-quality candidates prematurely.

These insights further motivate the use of small projection dimensions \( m \ll n \), not only for computational efficiency but also for effective optimization performance.

\begin{figure*}[t]
    \centering

    \begin{subfigure}[t]{0.47\textwidth}
        \centering
        \includegraphics[width=\linewidth]{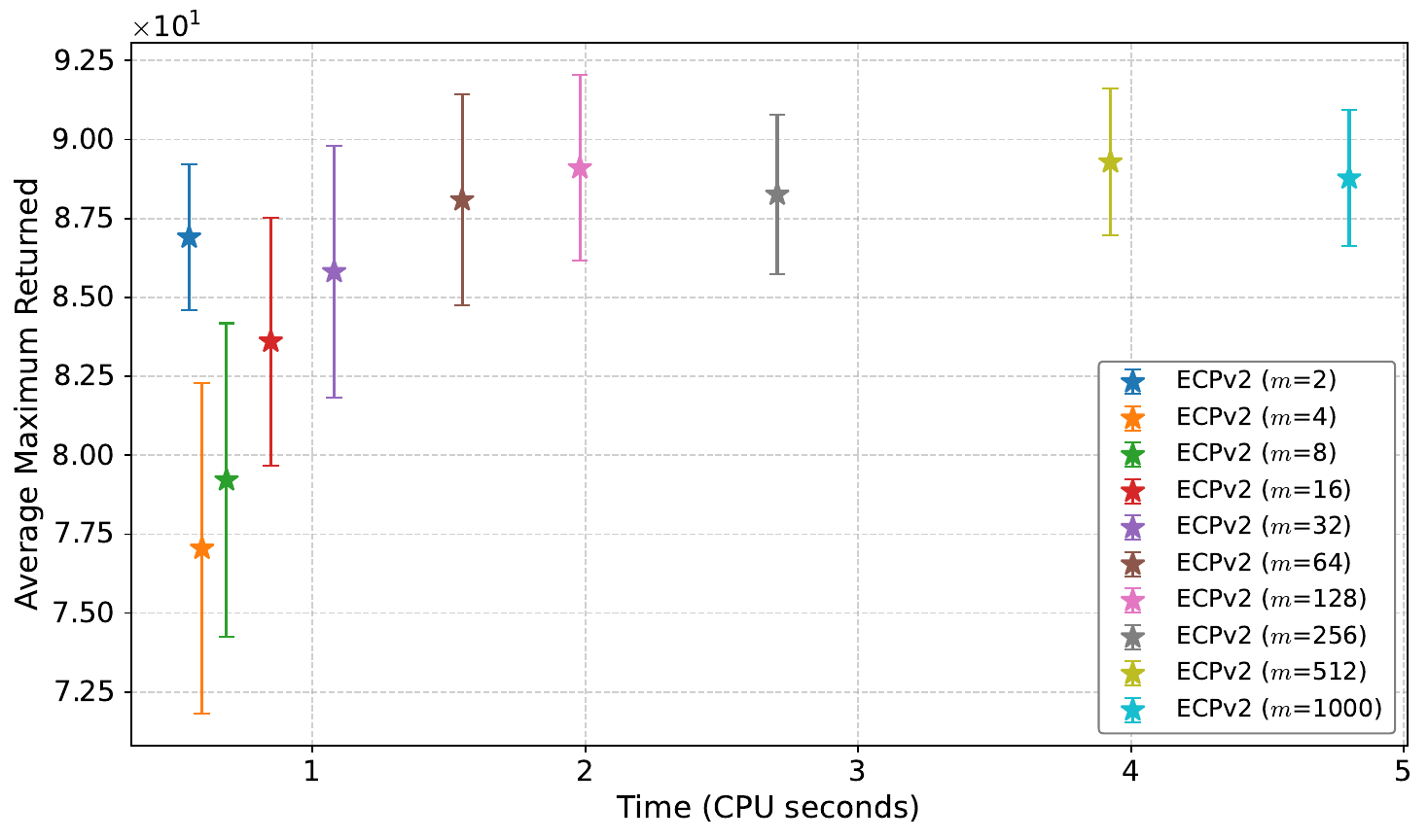}
        \caption{Eggholder}
    \end{subfigure}
    \hfill
    \begin{subfigure}[t]{0.47\textwidth}
        \centering
        \includegraphics[width=\linewidth]{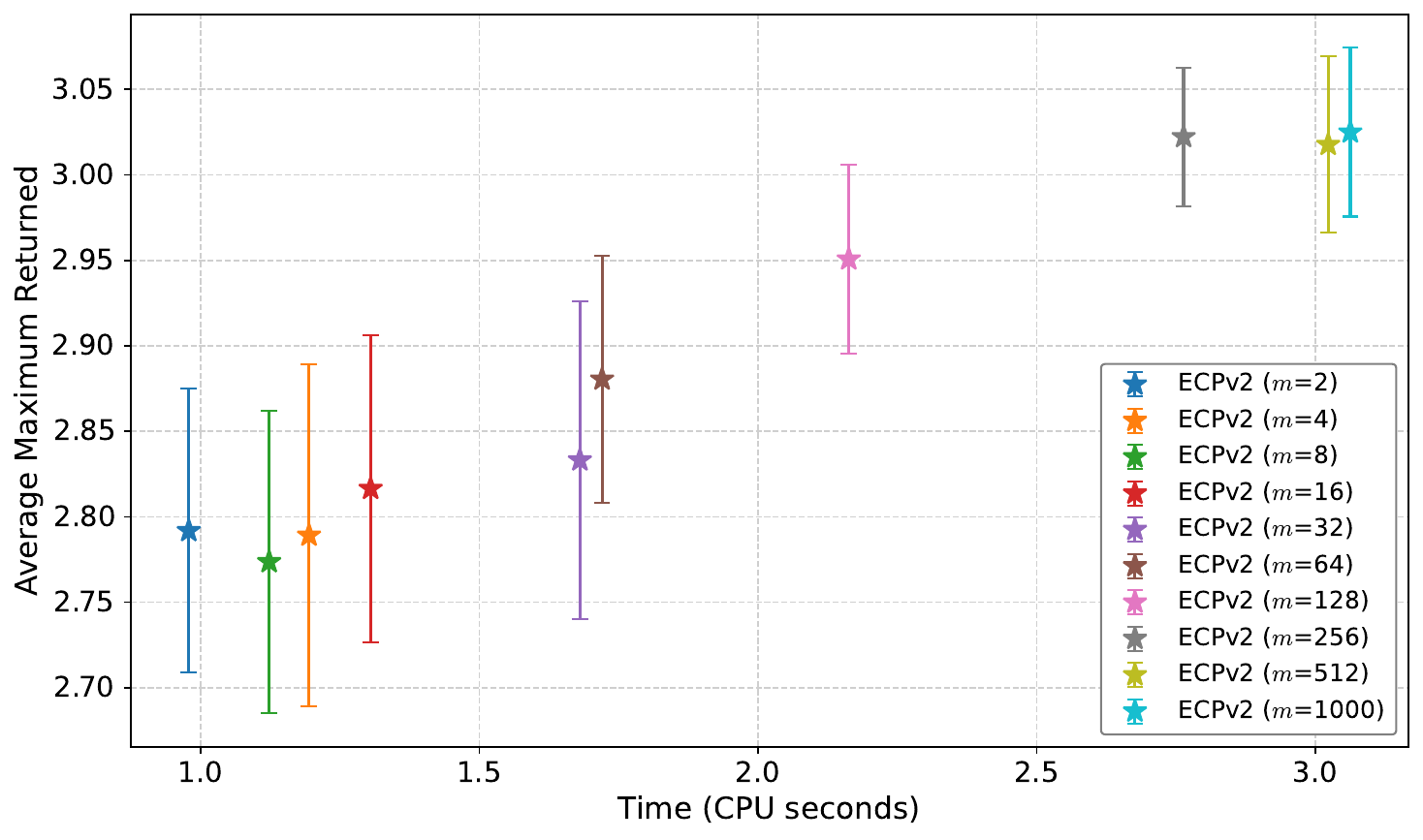}
        \caption{Hartmann (6D)}
    \end{subfigure}

    \vspace{1em}

    \begin{subfigure}[t]{0.47\textwidth}
        \centering
        \includegraphics[width=\linewidth]{figures/ablations_m/final_points_with_std_himmelblau.pdf}
        \caption{Himmelblau}
    \end{subfigure}
    \hfill
    \begin{subfigure}[t]{0.47\textwidth}
        \centering
        \includegraphics[width=\linewidth]{figures/ablations_m/final_points_with_std_holder.pdf}
        \caption{Holder}
    \end{subfigure}

    \caption{Ablation study on the projection dimension \( m \) in ECPv2 (Part 1 of 3). Each point represents the final average of the best score after \( n = 1000 \) evaluations, averaged over 100 runs, with \( \pm \) half the standard deviation. Projection is skipped as the function dimension \( d < d' = \left\lceil \frac{8 \log(5n)}{\delta^2 - \delta^3} \right\rceil \).}
    \label{fig:ecpv2_ablation_m_part2}
\end{figure*}

\begin{figure*}[h]
    \centering
    \vspace{1em}

    \begin{subfigure}[t]{0.48\textwidth}
        \centering
        \includegraphics[width=\linewidth]{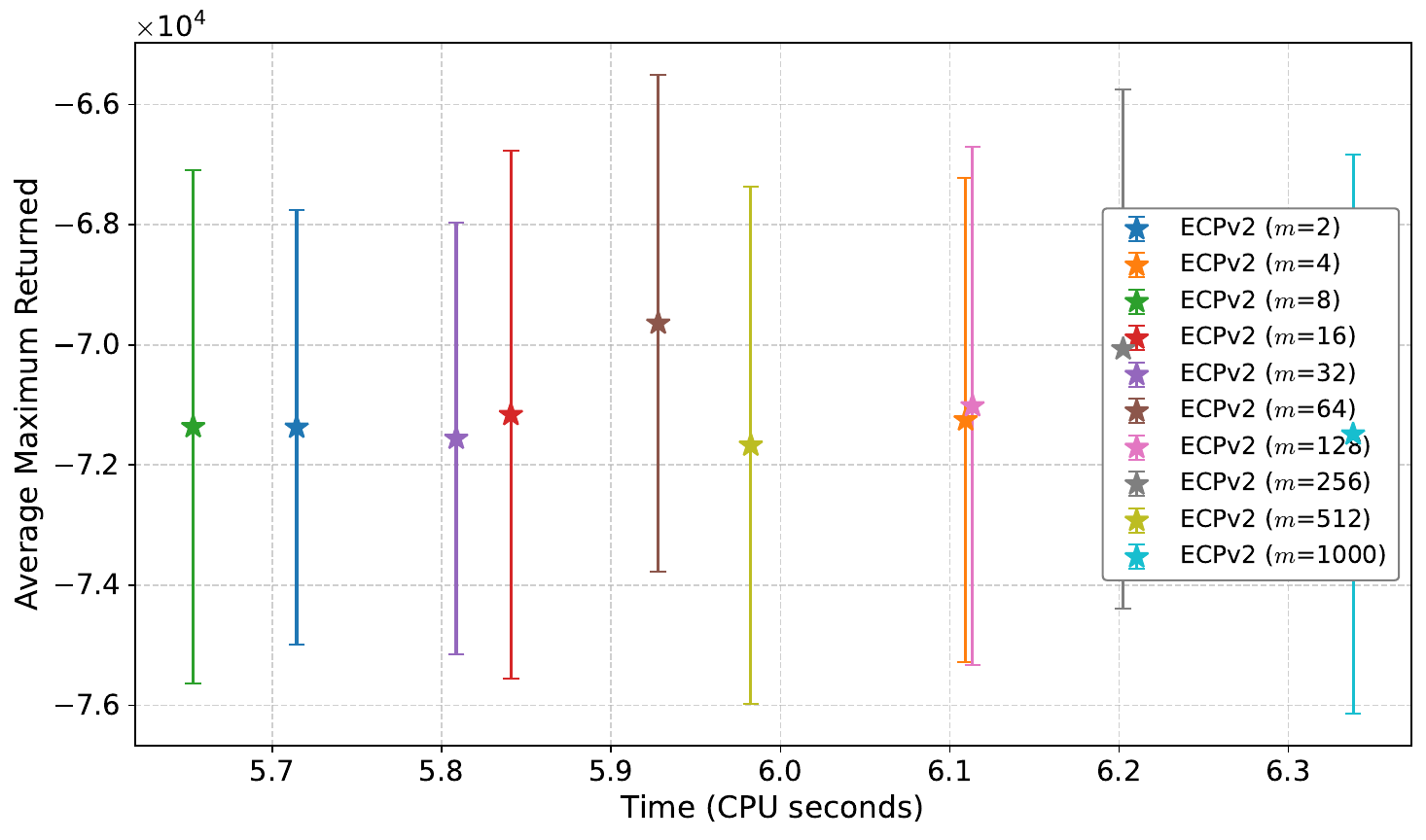}
        \caption{Powell (100D)}
    \end{subfigure}
    \hfill
    \begin{subfigure}[t]{0.48\textwidth}
        \centering
        \includegraphics[width=\linewidth]{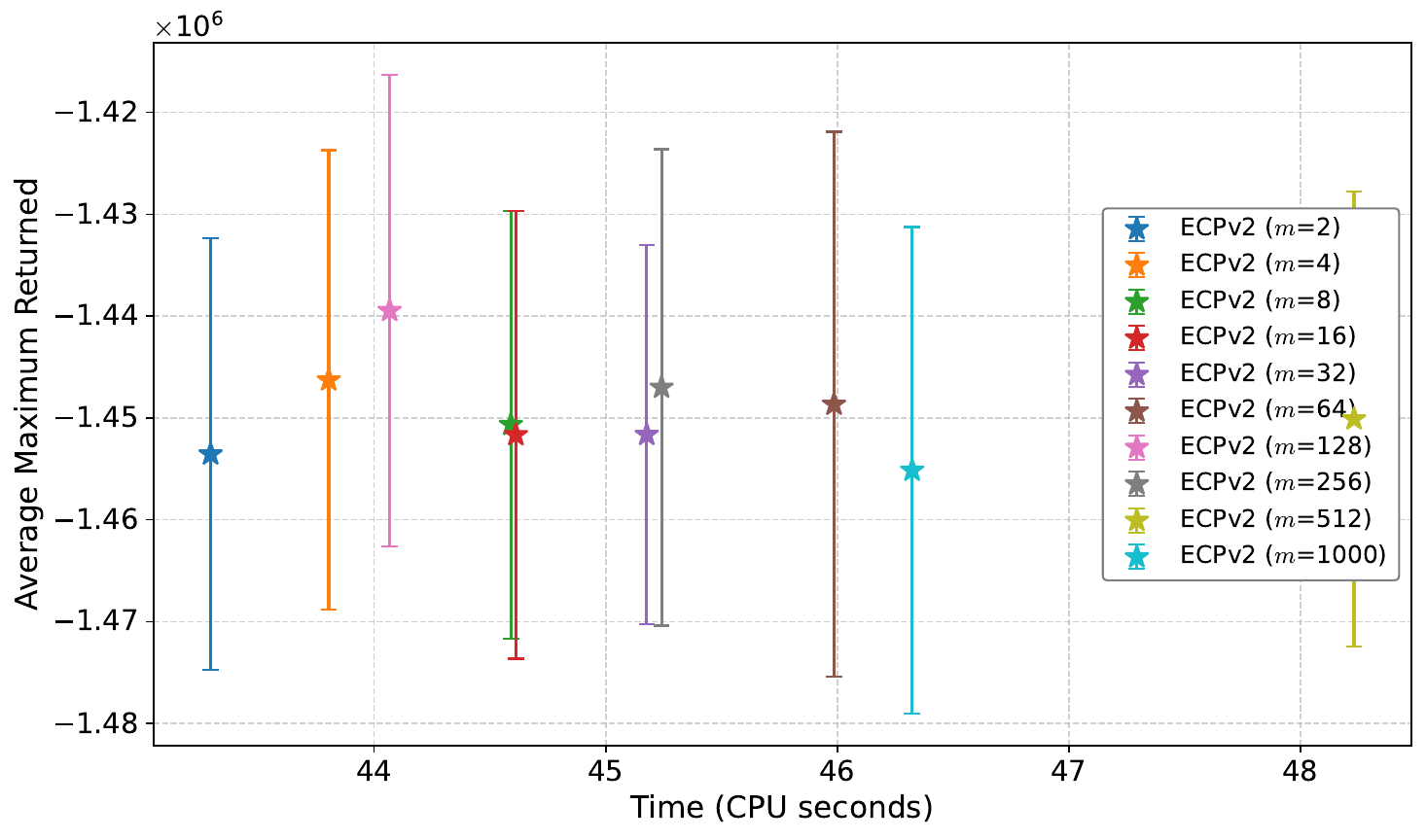}
        \caption{Powell (1000D)}
    \end{subfigure}
    \vspace{1em}

    \begin{subfigure}[t]{0.48\textwidth}
        \centering
        \includegraphics[width=\linewidth]{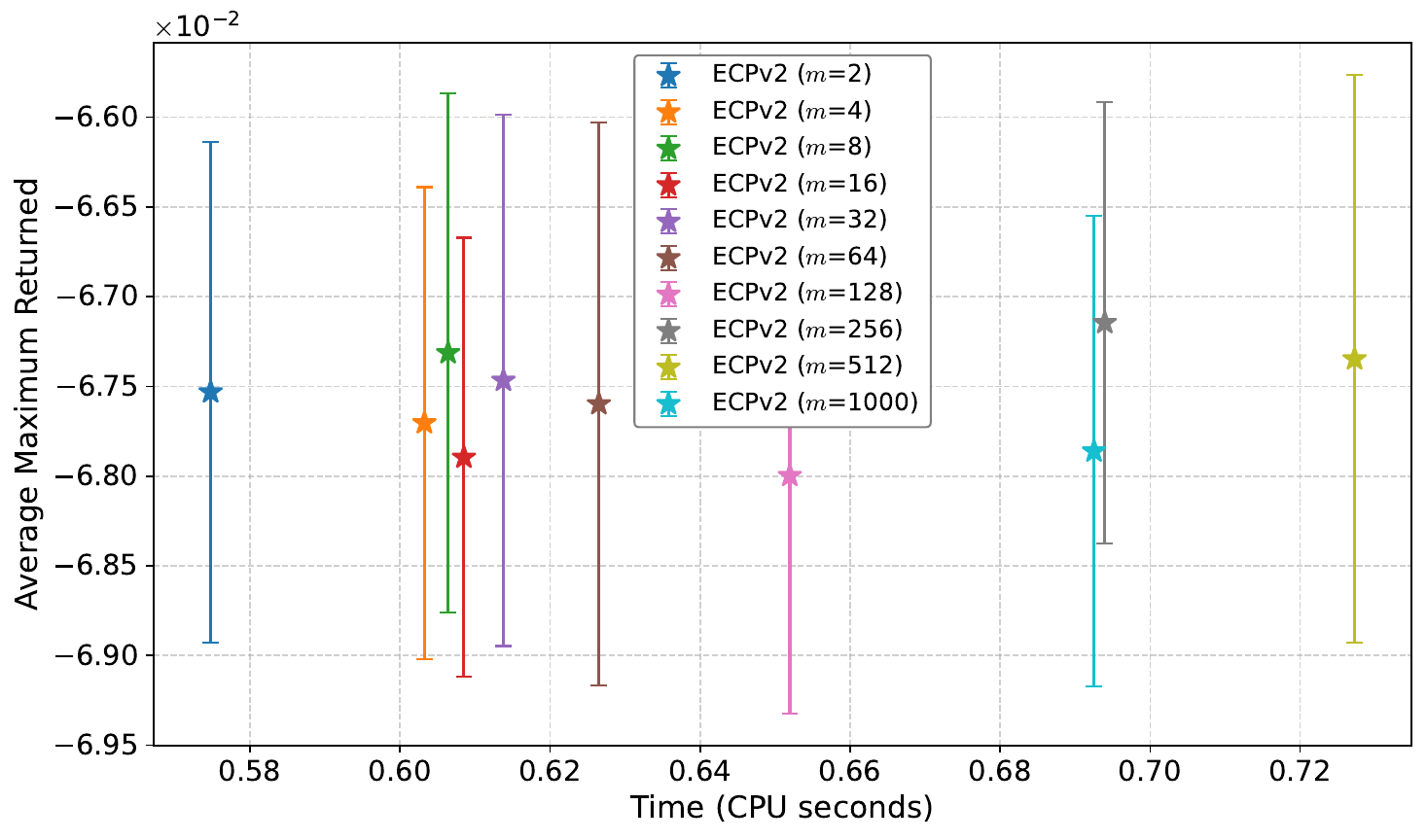}
        \caption{Rosenbrock (100D)}
    \end{subfigure}
    \hfill
    \begin{subfigure}[t]{0.48\textwidth}
        \centering
        \includegraphics[width=\linewidth]{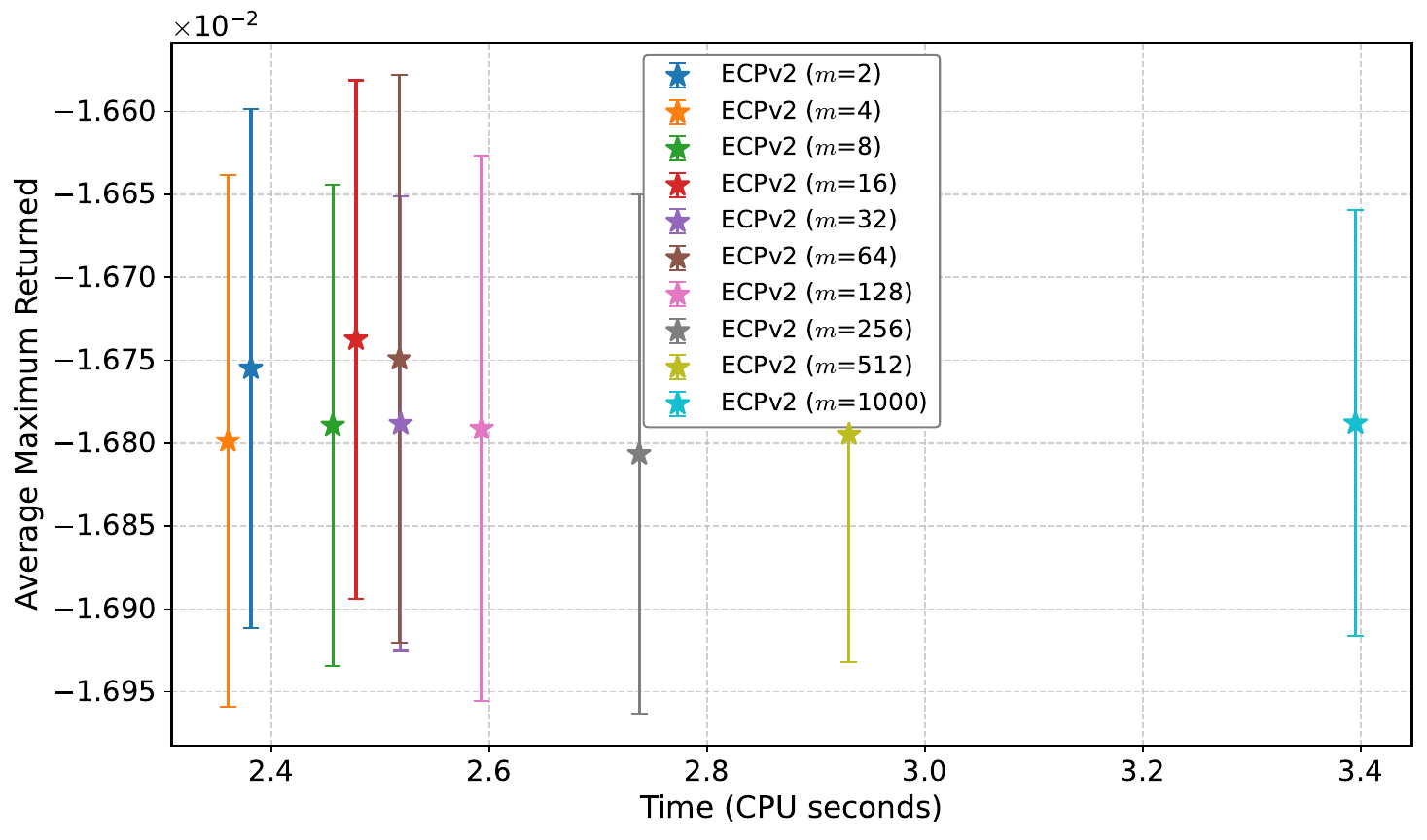}
        \caption{Rosenbrock (500D)}
    \end{subfigure}

    \vspace{1em}

    \caption{Ablation study on the projection dimension \( m \) in ECPv2 (Part 2 of 3). Each point represents the final average of the best score after \( n = 1000 \) evaluations, averaged over 100 runs, with \( \pm \) half the standard deviation.}
    \label{fig:ecpv2_ablation_m_part3}
\end{figure*}

\begin{figure*}[t]
    \centering

    \begin{subfigure}[t]{0.47\textwidth}
        \centering
        \includegraphics[width=\linewidth]{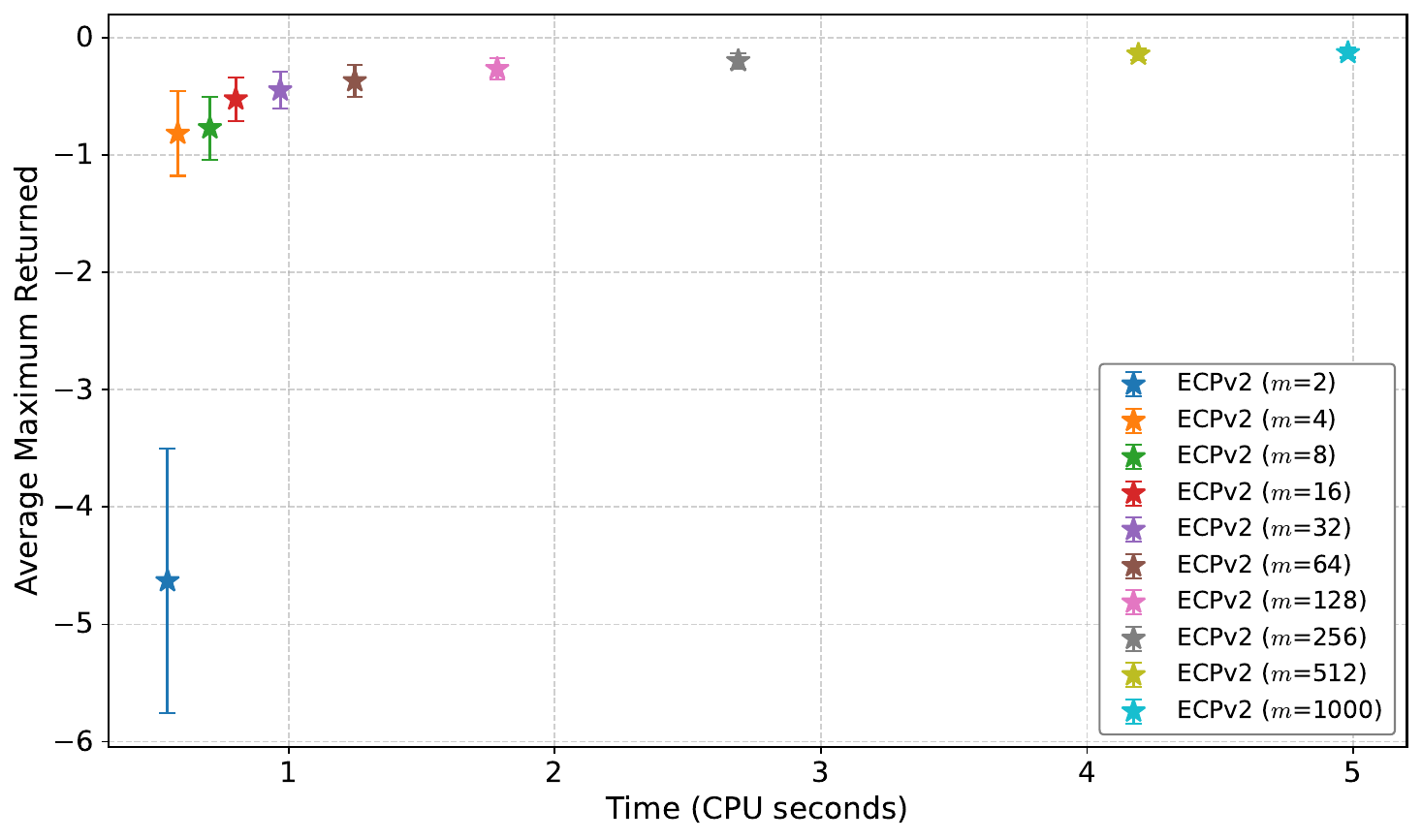}
        \caption{Ackley}
    \end{subfigure}
    \hfill
    \begin{subfigure}[t]{0.47\textwidth}
        \centering
        \includegraphics[width=\linewidth]{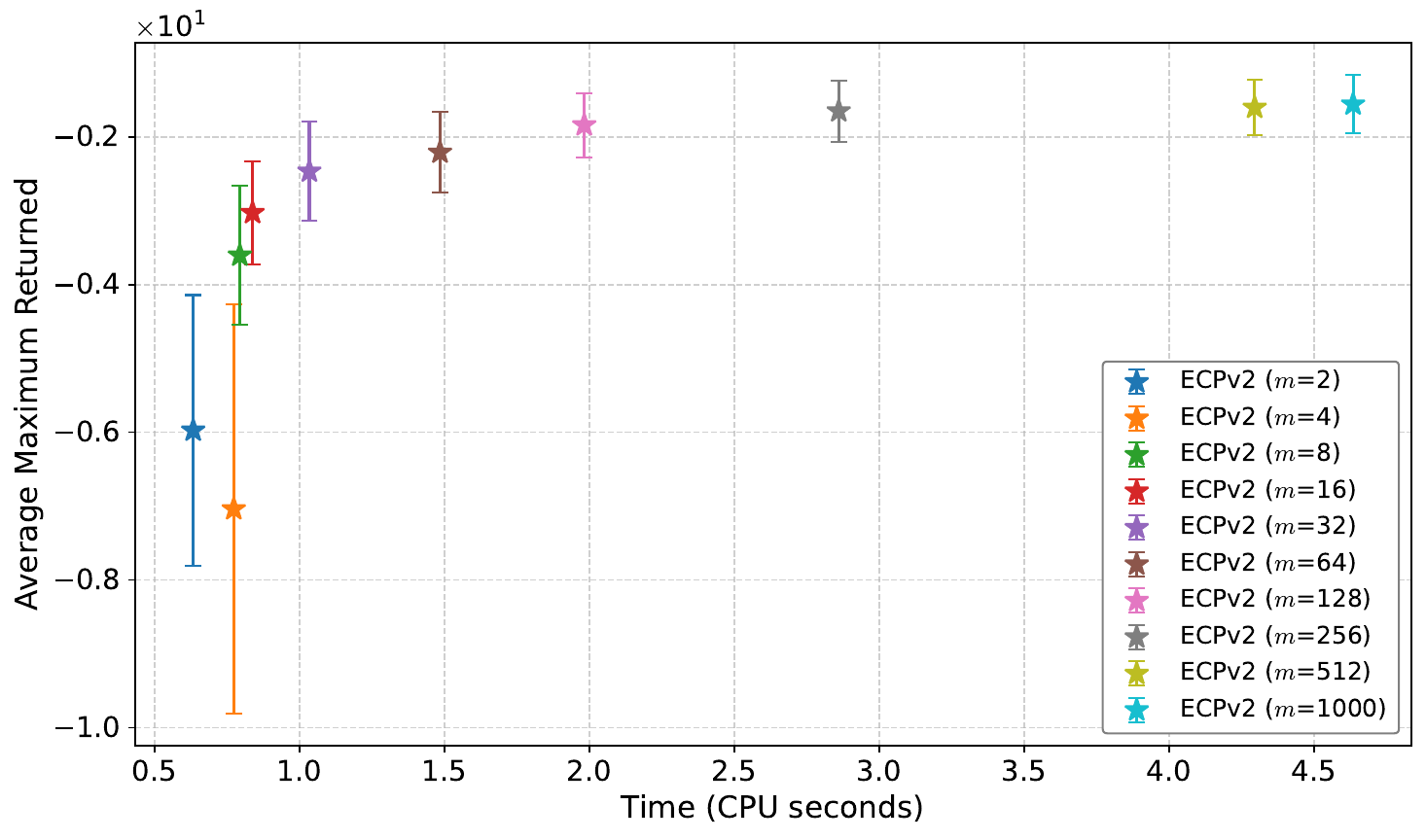}
        \caption{Bukin}
    \end{subfigure}

    \vspace{1em}

    \begin{subfigure}[t]{0.47\textwidth}
        \centering
        \includegraphics[width=\linewidth]{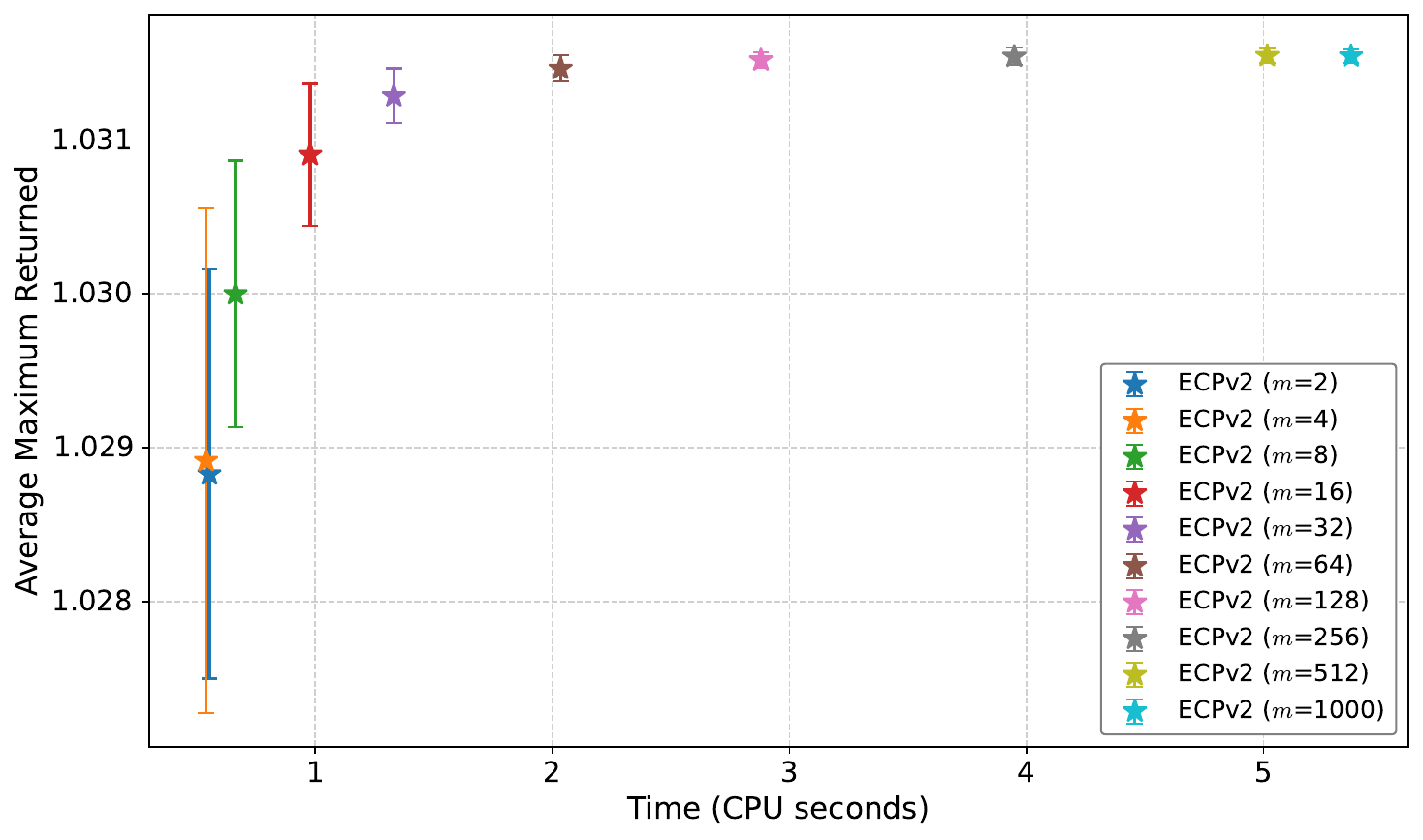}
        \caption{Camel}
    \end{subfigure}
    \hfill
    \begin{subfigure}[t]{0.47\textwidth}
        \centering
        \includegraphics[width=\linewidth]{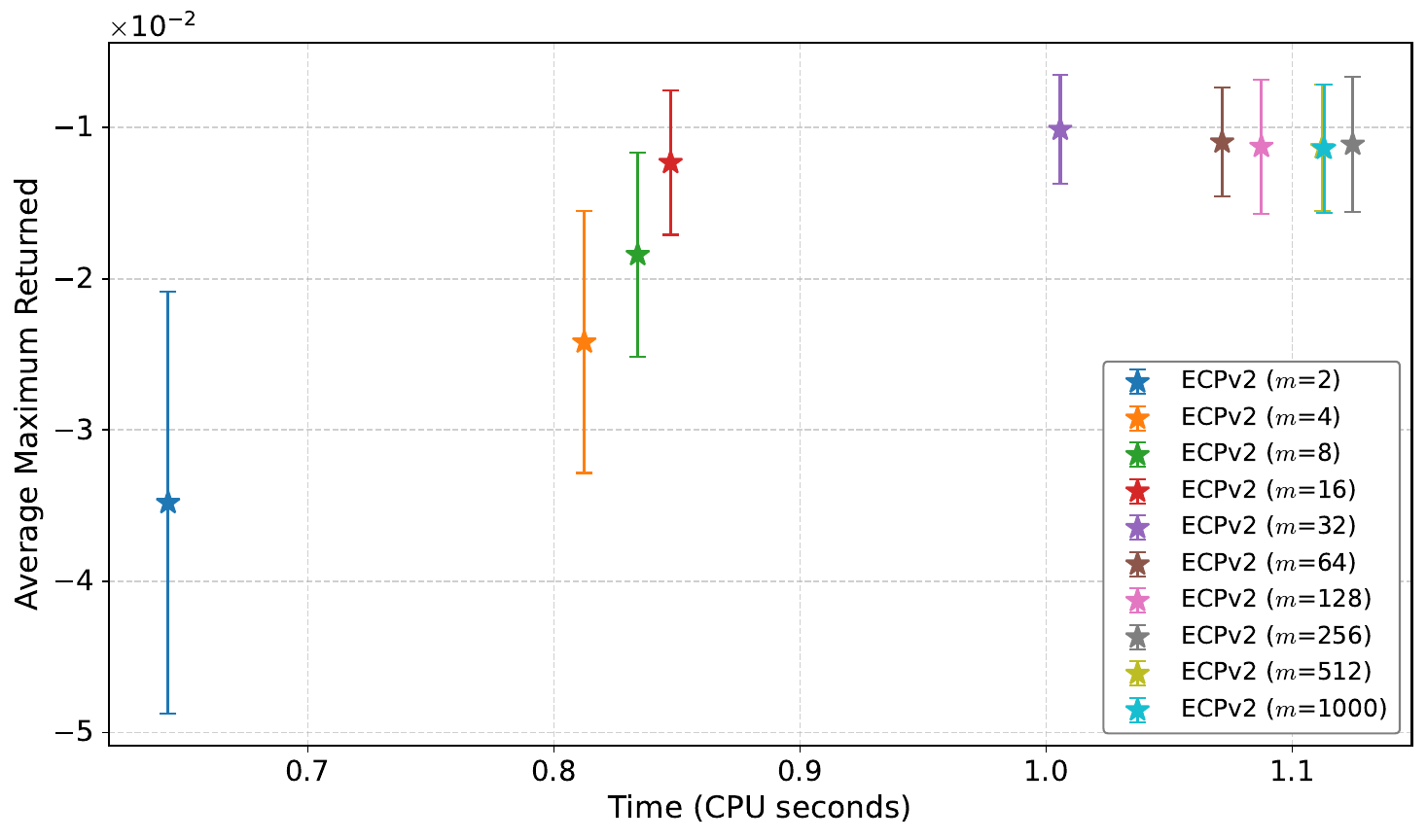}
        \caption{Colville}
    \end{subfigure}

    \vspace{1em}

    \begin{subfigure}[t]{0.47\textwidth}
        \centering
        \includegraphics[width=\linewidth]{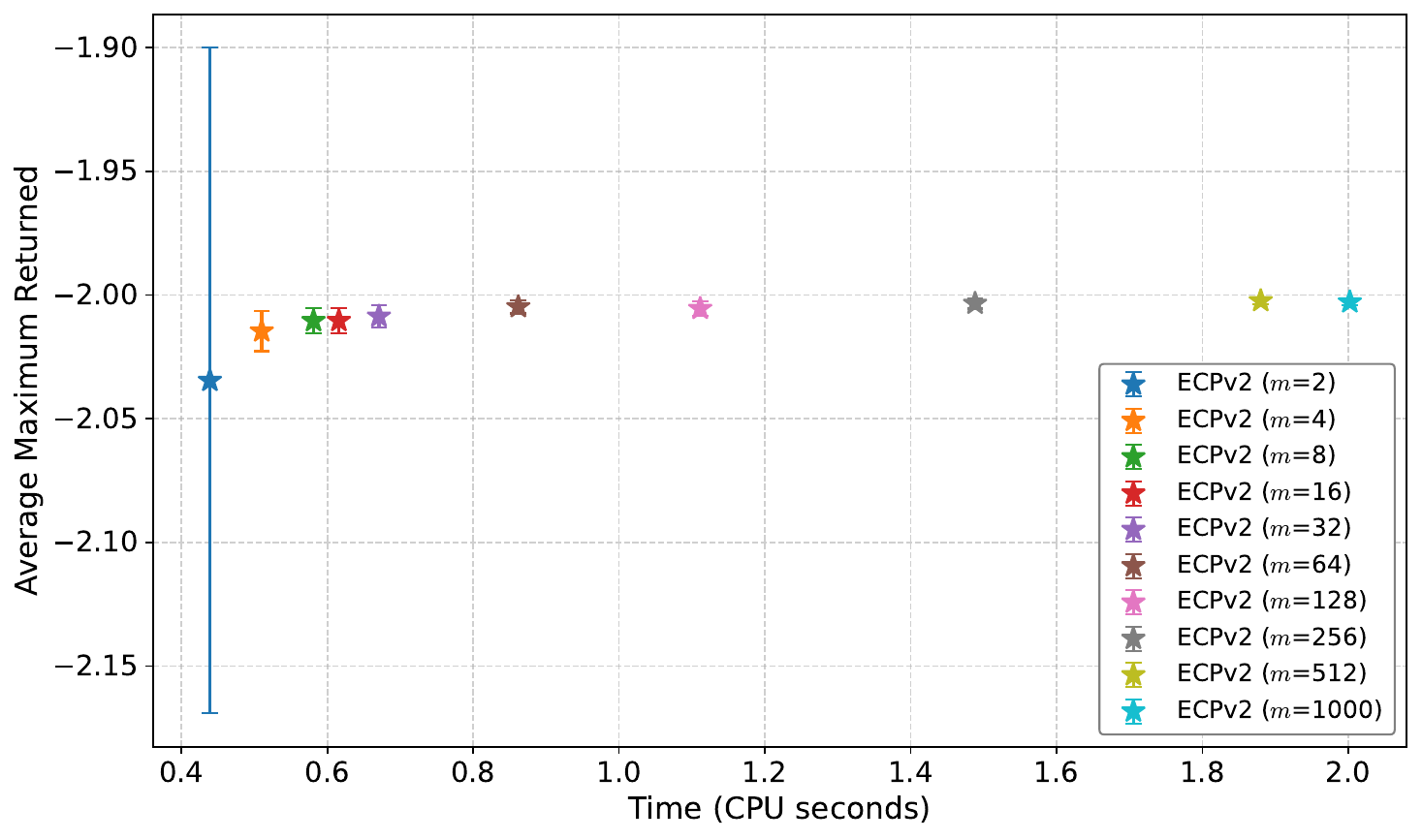}
        \caption{Damavandi}
    \end{subfigure}
    \hfill
    \begin{subfigure}[t]{0.47\textwidth}
        \centering
        \includegraphics[width=\linewidth]{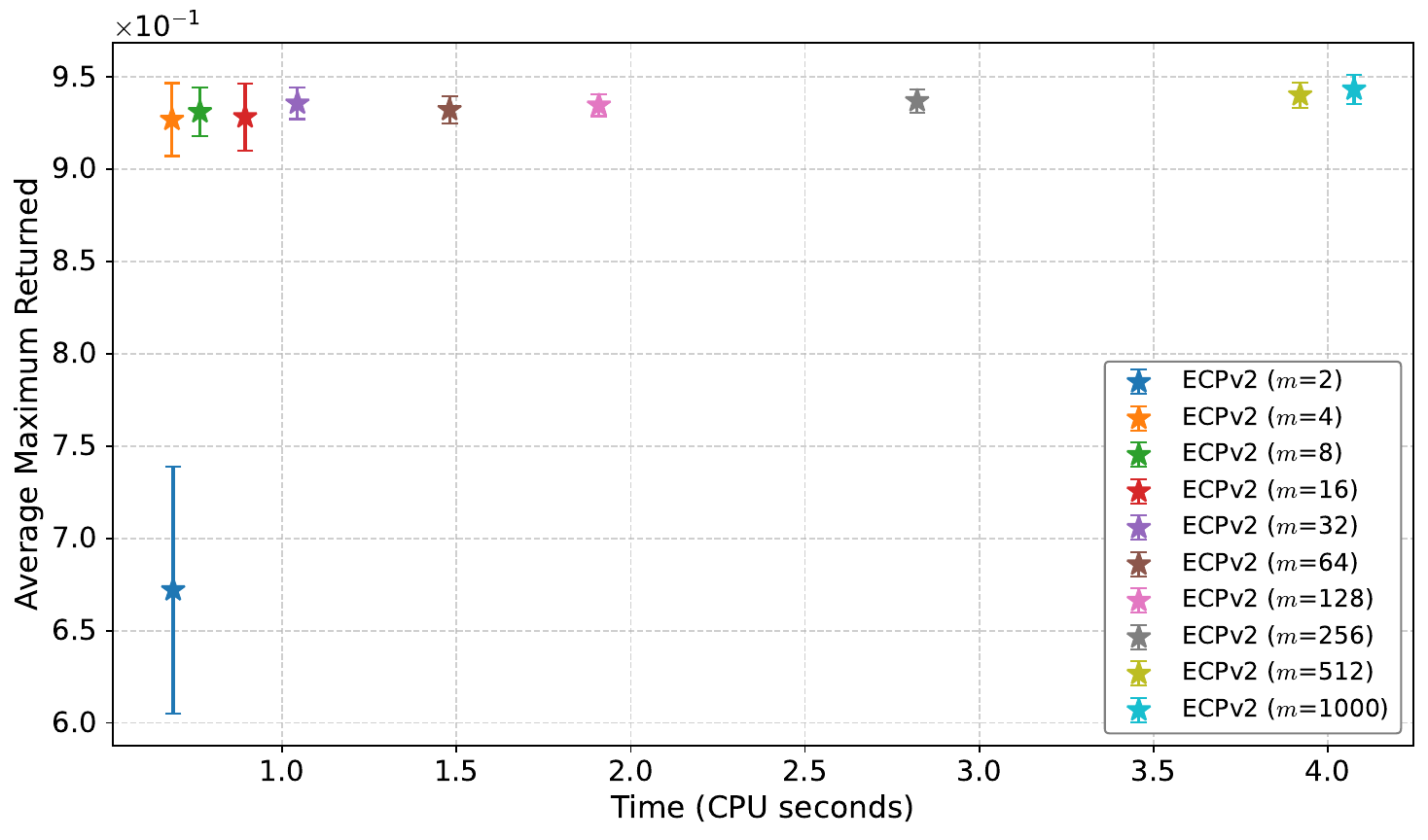}
        \caption{Drop-Wave}
    \end{subfigure}

    \caption{Ablation study on the projection dimension \( m \) in ECPv2 (Part 3 of 3). Each point represents the final average of the best score after \( n = 1000 \) evaluations, averaged over 100 runs, with \( \pm \) half the standard deviation. Projection is skipped as the function dimension \( d < d' = \left\lceil \frac{8 \log(5n)}{\delta^2 - \delta^3} \right\rceil \).}
    \label{fig:ecpv2_ablation_m_part1}
\end{figure*}

\section{Appendix F: Empirical Validation of the Adaptive Lower Bound $\varepsilon_t^\oslash$}

\begin{figure*}[t]
    \centering

    \begin{subfigure}[t]{0.47\textwidth}
        \centering
        \includegraphics[width=\linewidth]{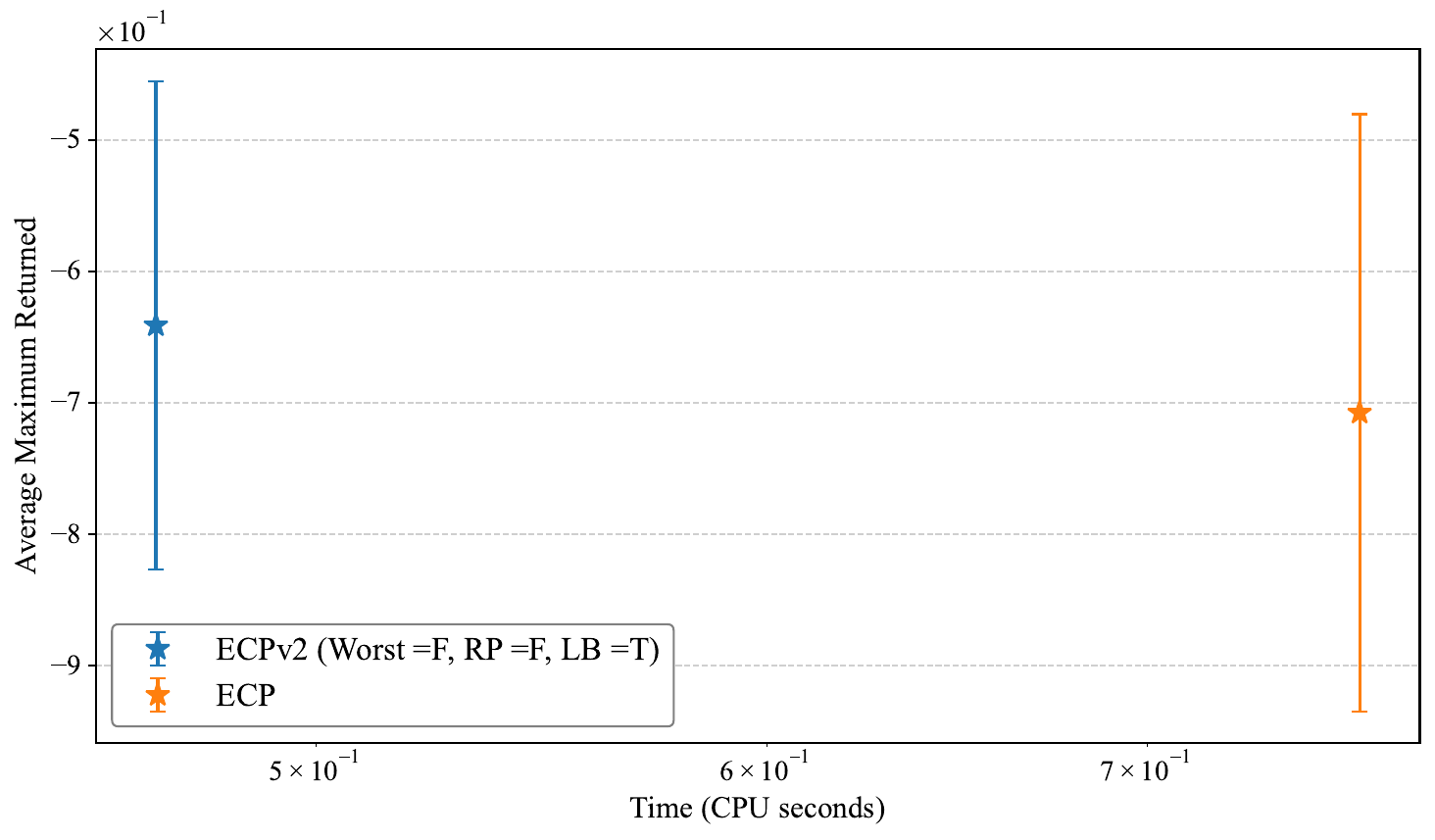}
        \caption{Ackley}
    \end{subfigure}
    \hfill
    \begin{subfigure}[t]{0.47\textwidth}
        \centering
        \includegraphics[width=\linewidth]{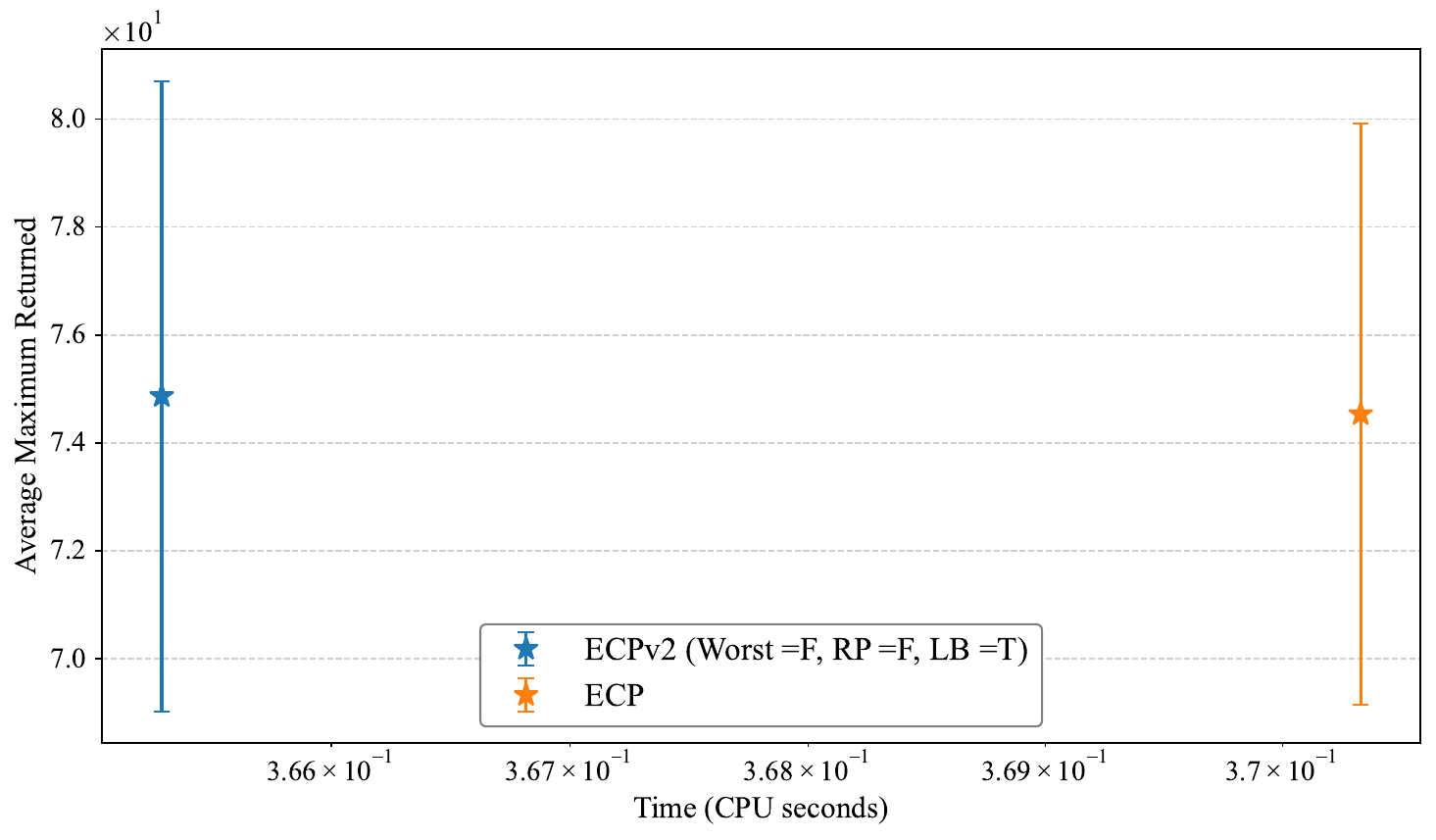}
        \caption{Eggholder}
    \end{subfigure}

    \vspace{1em}

    \begin{subfigure}[t]{0.47\textwidth}
        \centering
        \includegraphics[width=\linewidth]{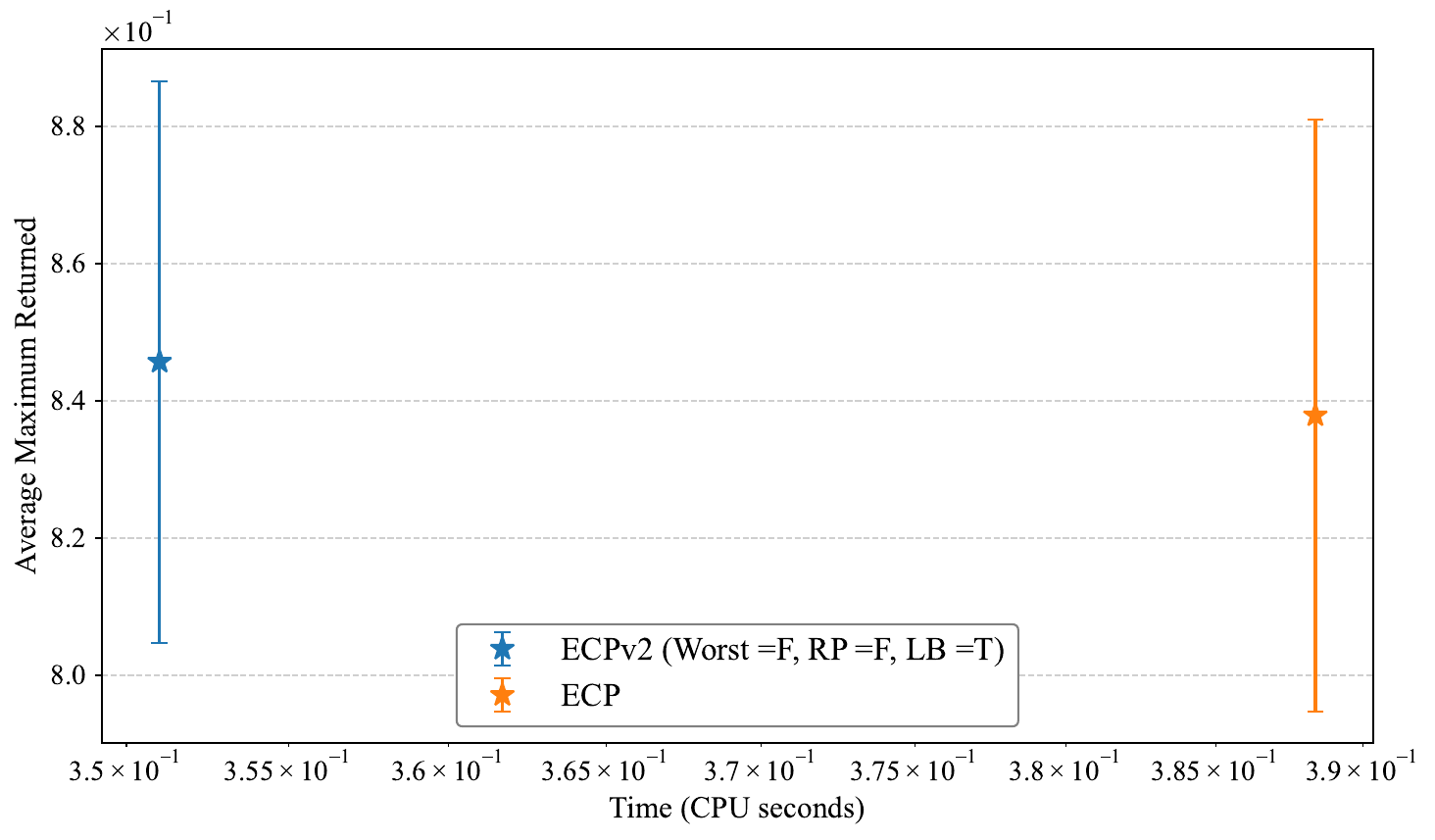}
        \caption{Dropwave}
    \end{subfigure}
    \hfill
    \begin{subfigure}[t]{0.47\textwidth}
        \centering
        \includegraphics[width=\linewidth]{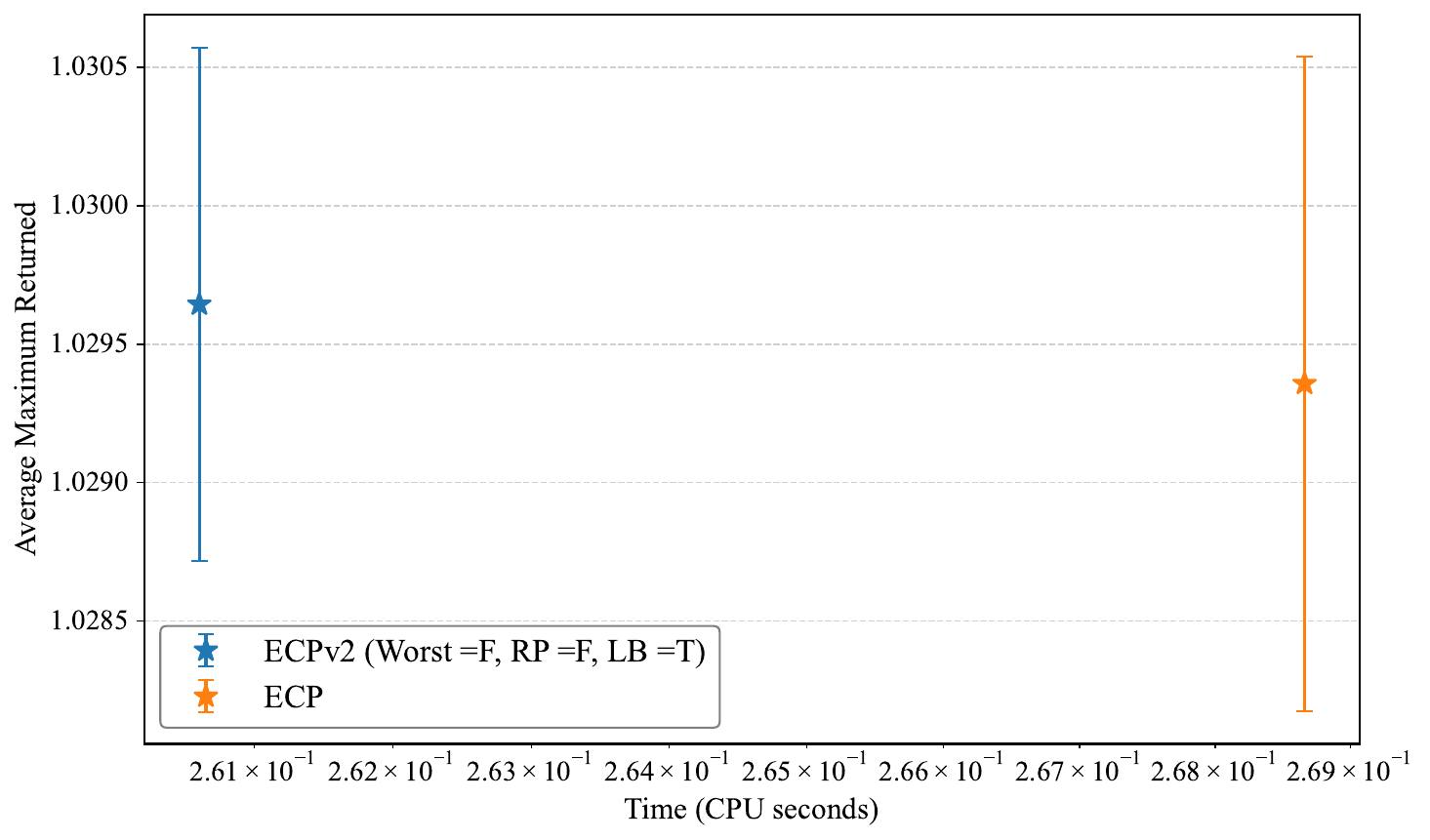}
        \caption{Three-Hump Camel}
    \end{subfigure}

    \caption{Ablation comparison between ECP and the \texttt{ECPv2 (LB only)} variant on four benchmark functions. Each point shows the final average of the best score after 100 evaluations, averaged over 100 runs, with $\pm \frac{1}{2}$ standard deviation. The lower-bound enhancement alone provides up to $2\times$ speedups with consistent solution quality.}
    \label{fig:ecpv2_ablation_lb_multi}
\end{figure*}

\subsection{Validation of Necessity}

To empirically validate the necessity and effectiveness of the adaptive lower bound $\varepsilon_t^\oslash$ introduced in ECPv2, we conduct a focused experiment using the Ackley function.

We fix the algorithm at iteration $t = 5$ and simulate a scenario where five previously accepted points $\{x_1, \dots, x_5\}$ are sampled uniformly from the domain $\mathcal{X}$. We evaluate these points to obtain their function values $\{f(x_1), \dots, f(x_5)\}$ and denote the current maximum as $f_{\max}$.

For a grid of $\varepsilon$ values in the range $[0.01, 1.5]$, we draw $1000$ independent candidate points $\tilde{x} \sim \mathcal{U}(\mathcal{X})$ and compute the empirical \emph{acceptance ratio}, i.e., the fraction of points satisfying the ECP acceptance condition:
\[
\min_{i=1,\dots,t} \left( f(x_i) + \varepsilon \cdot \|\tilde{x} - x_i\|_2 \right) \geq f_{\max}.
\]

As shown in Lemma~1, a necessary condition for any point to be accepted is that $\varepsilon \geq \varepsilon_t^\oslash$, where:
\[
\varepsilon_t^\oslash = \frac{f_{\max_t} - f_{\min_t}}{\mathrm{diam}(\mathcal{X})}.
\]
This value ensures that the acceptance region is not trivially empty.

Figure~\ref{fig:acceptance-ratio} illustrates the empirical acceptance ratio as a function of $\varepsilon$. We observe a sharp phase transition: for $\varepsilon < \varepsilon_t^\oslash$, the acceptance ratio is virtually zero, indicating that all candidate points are rejected. Once $\varepsilon$ surpasses the threshold, the acceptance ratio increases rapidly, confirming that $\varepsilon_t^\oslash$ effectively marks the boundary of a non-trivial search region.

\begin{figure}[H]
    \centering
    \includegraphics[width=0.43\textwidth]{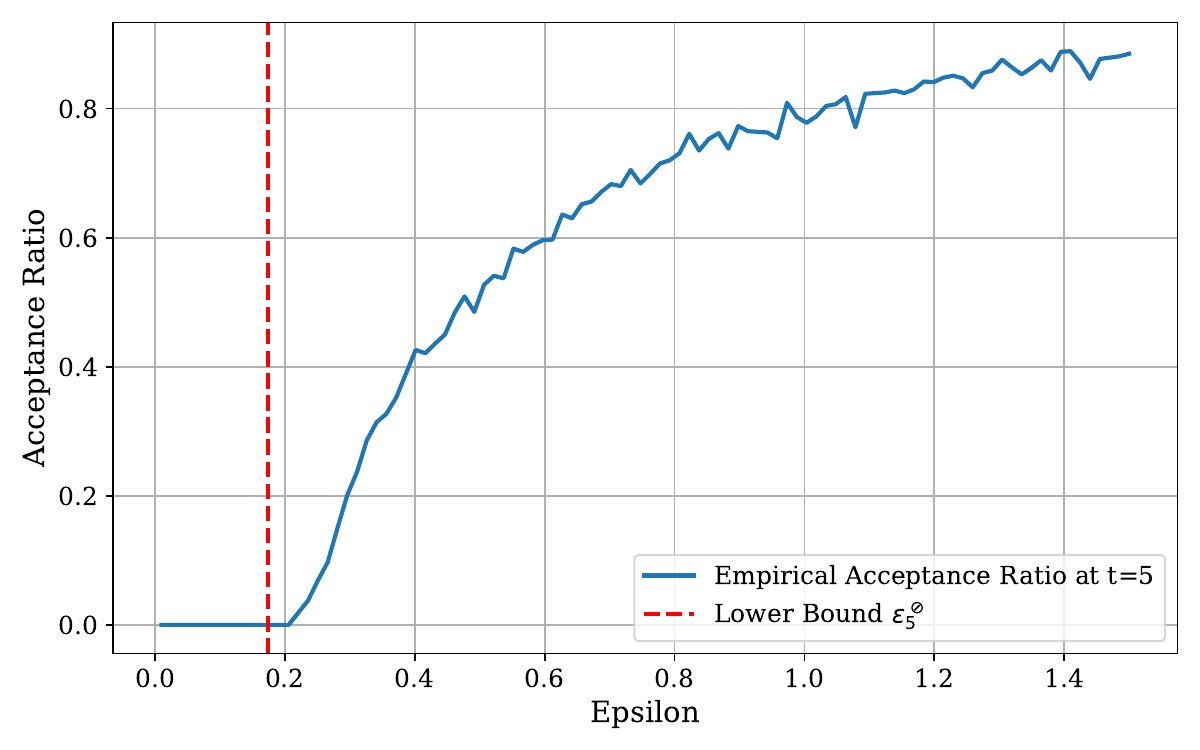}
    \caption{Empirical acceptance ratio at iteration $t=5$ on the 10D Ackley function. The red dashed line denotes the theoretical lower bound $\varepsilon_5^\oslash$. This confirms the bound’s role in preventing empty acceptance regions.}
    \label{fig:acceptance-ratio}
\end{figure}

This experiment offers empirical support for the theoretical lower bound $\varepsilon_t^\oslash$, demonstrating its critical role in maintaining optimization progress. Without it, the algorithm may experience excessive early rejections, stalling exploration. ECPv2’s adaptive enforcement of this threshold ensures that each iteration maintains a viable and meaningful acceptance region.

\subsection{Effect of the Lower Bound on ECPv2}

To better understand the contribution of the lower bound enhancement in ECPv2, we performed an ablation study comparing two variants: ECP and ECPv2 (LB only): a variant where only the lower bound enhancement is enabled (Worst-$m$ = \textbf{False}, Random Projections = \textbf{False}, Lower Bound = \textbf{True}).

Both algorithms were benchmarked on four benchmark functions over 100 independent runs, each with a budget of 100 evaluations. Figure~\ref{fig:ecpv2_ablation_lb_multi} reports the final average performance and runtime.

We observe that enabling the lower bound mechanism alone yields a significant runtime improvement, approximately a $2\times$ speedup and even outperforming the optimization performance. This demonstrates that even without projection-based exploration or Worst-$m$ sampling, incorporating the lower bound check provides a strong computational advantage, especially with lower-budgets (such as $n=100$).

\begin{figure*}[h]
    \centering

    \begin{subfigure}[t]{0.47\linewidth}
        \centering
        \includegraphics[width=\linewidth]{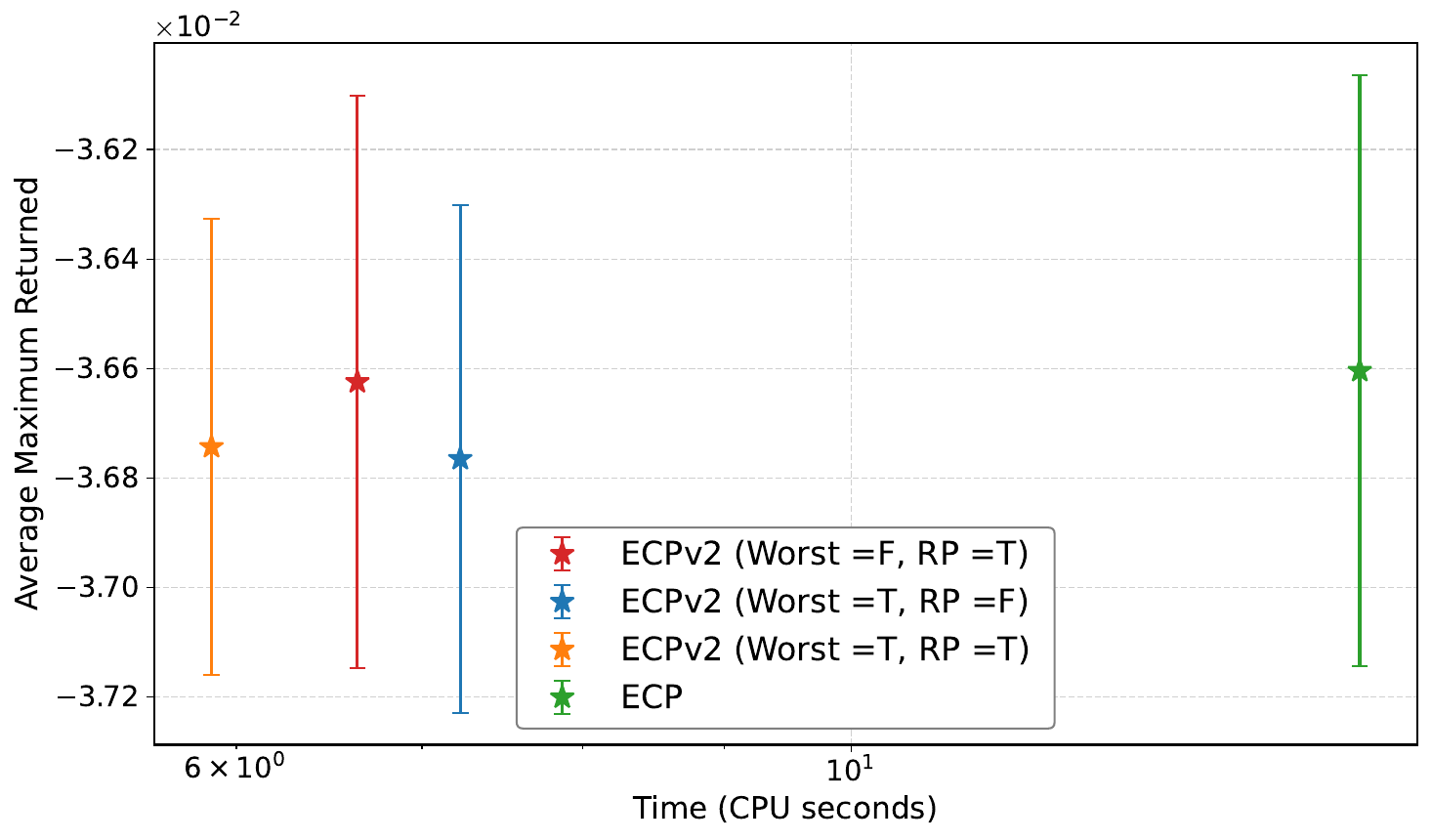}
        \caption{Rosenbrock (200D)}
    \end{subfigure}
    \hfill
    \begin{subfigure}[t]{0.47\linewidth}
        \centering
        \includegraphics[width=\linewidth]{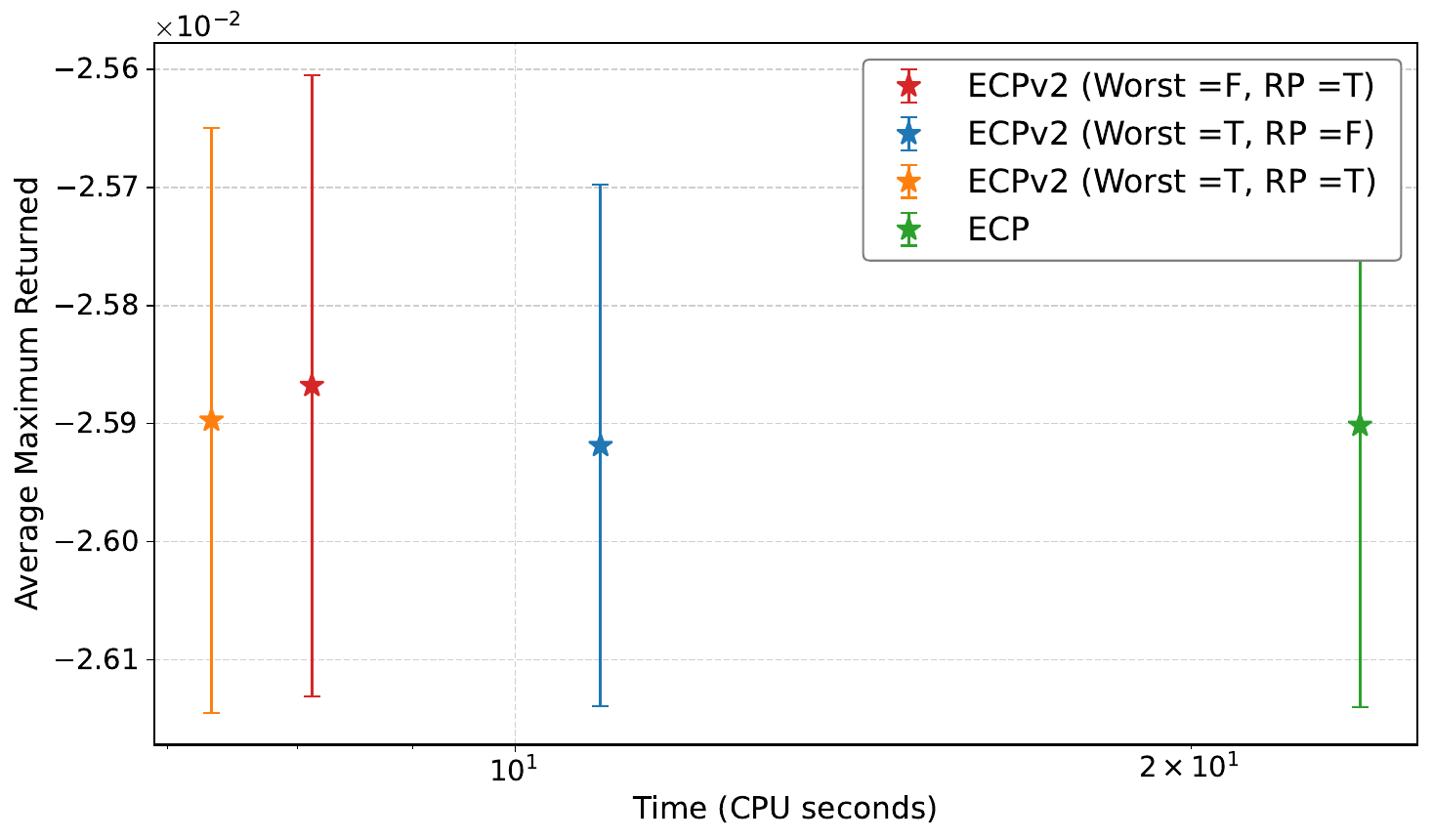}
        \caption{Rosenbrock (300D)}
    \end{subfigure}

    \caption{Final performance vs. CPU time for ECPv2 ablations on Rosenbrock in higher dimensions. Error bars represent $\pm \frac{1}{2}$ standard deviation across 100 runs.}
    \label{fig:ecpv2_ablation_rosenbrock_multi}
\end{figure*}

\section{Appendix G: On the Combination of Random Projection and Worst-$m$}

To understand the impact of combining the proposed Random Projection and Worst-$m$ components in \textbf{ECPv2}, we perform an ablation study by enabling their different combinations.

Each configuration is evaluated on the rosenbrock function (with 500 dimensions), with $100$ independent runs and a budget of $n = 5000$ function evaluations. The figure below shows the final performance (average maximum value returned) plotted against CPU time (seconds), with error bars denoting $\pm \frac{1}{2}$ standard deviation.

\paragraph{Findings.}
\begin{itemize}
    \item All ECPv2 variants are significantly faster than the ECP, while maintaining comparable or better solution quality.
    \item \textbf{ECPv2 (full)} (all true) is the \emph{fastest} across runs, showing that combining all components leads to the fastest performance.
    \item The \emph{Worst-$m$ selection} is especially effective in high-dimensional problems like Rosenbrock (300D and 400D) and substantially reducing computational cost.
\end{itemize}

These results confirm that ECPv2 is a robust, scalable, and computationally efficient extension of ECP, with strong empirical performance.

\section{Appendix H: Additional Results on ECPv2 vs. ECP}

\subsection{Statistical Significance}

To assess whether the performance improvements in ECPv2 over ECP are statistically significant, we conducted a Wilcoxon signed-rank test on the per-objective rankings of average runtimes across 13 benchmark functions (\texttt{michalewicz}, \texttt{perm}, \texttt{perm10}, \texttt{powell100}, \texttt{powell1000}, \texttt{rastrigin}, \texttt{rosenbrock}, \texttt{rosenbrock100}, \texttt{rosenbrock200}, \texttt{rosenbrock300}, \texttt{rosenbrock500}, \texttt{schaffer}, and \texttt{schubert}). This non-parametric test is suitable for comparing paired samples when the assumption of normality may not hold.

The test yielded a statistic of $W = 7.0$ with a p-value of $0.0034$, indicating a statistically significant difference ($p < 0.01$) in favor of ECPv2. The improved method consistently achieved faster average runtimes across most objectives. These results suggest that the enhancements introduced in ECPv2 contribute to a meaningful reduction in computational cost.

\subsection{High-dimensional Problems}

We evaluated ECPv2 against the ECP on two challenging high-dimensional benchmark problems: Rosenbrock (500 dimensions) and Powell (1000 dimensions). In both cases, ECPv2 not only demonstrated approximately 2× faster convergence compared to ECP, but also achieved higher optimization scores. These results highlight ECPv2's scalability and efficiency in very high-dimensional settings. Detailed results are shown in Figure~\ref{fig:high_dim_ecp}.

\begin{figure*}[t]
    \centering

    \begin{subfigure}[t]{0.48\linewidth}
        \centering
        \includegraphics[width=\linewidth]{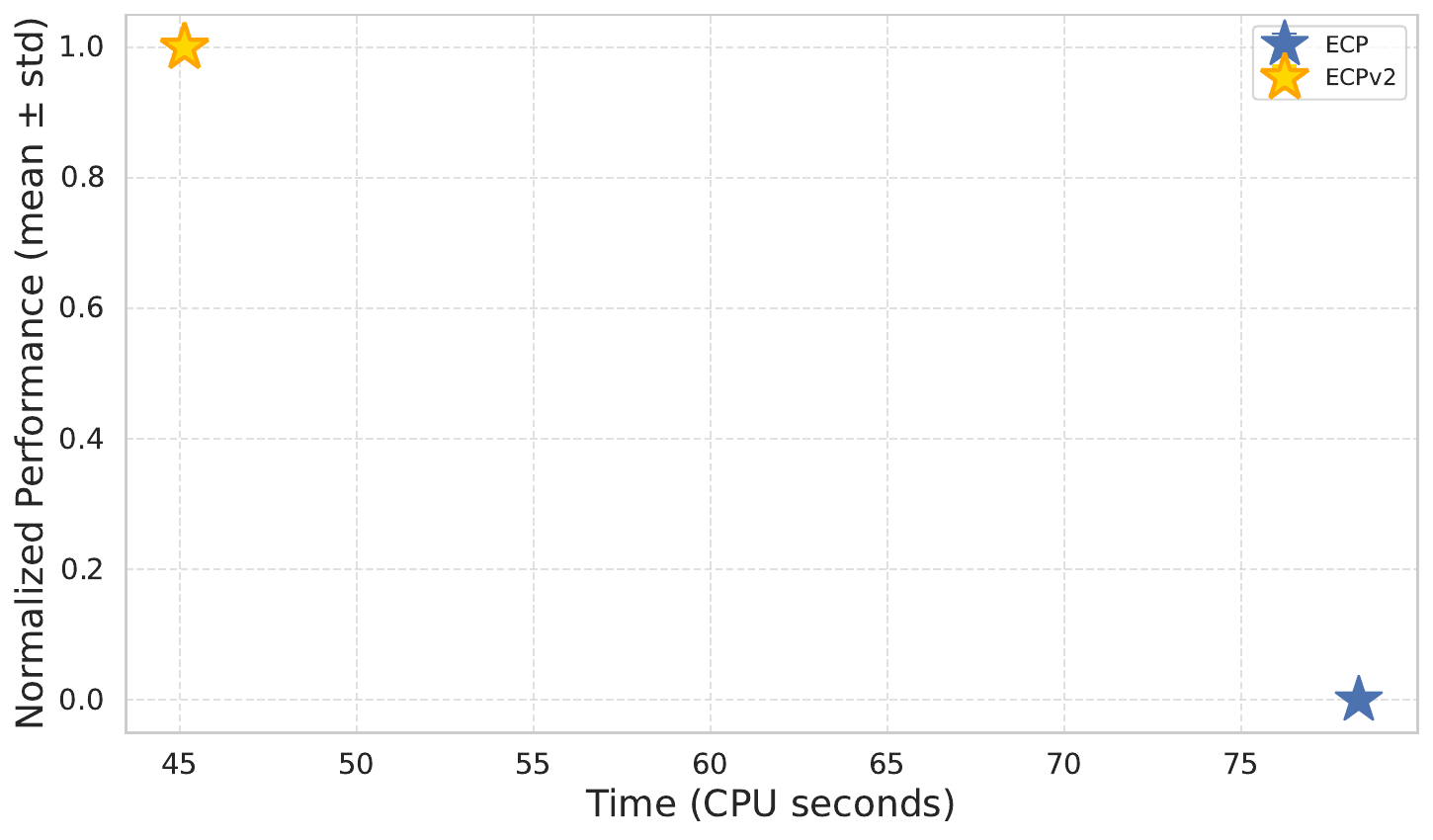}
        \caption{Powell function, $d = 1000$}
        \label{fig:powell}
    \end{subfigure}
    \hfill
    \begin{subfigure}[t]{0.48\linewidth}
        \centering
        \includegraphics[width=\linewidth]{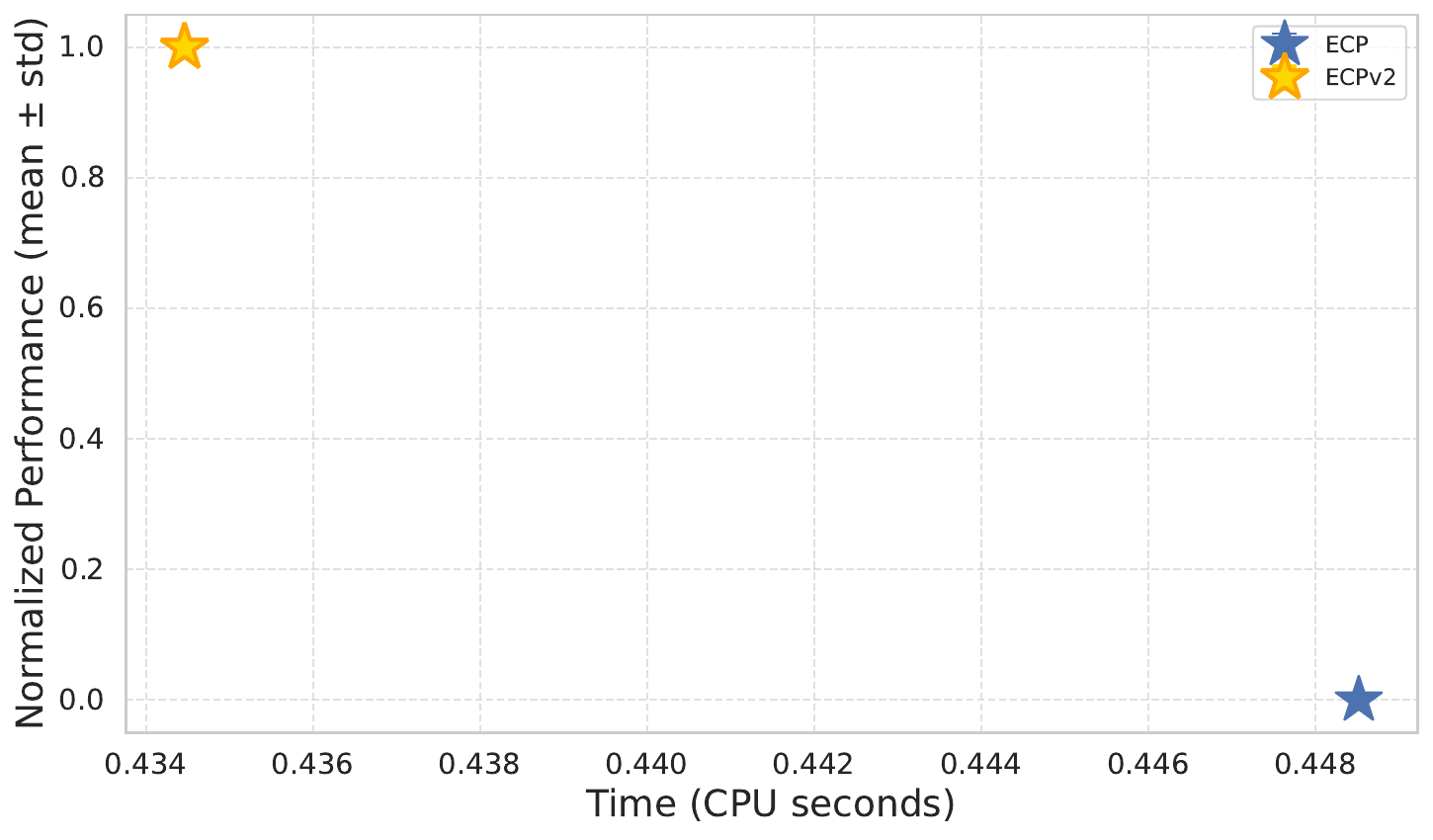}
        \caption{Rosenbrock function, $d = 500$}
        \label{fig:rosenbrock}
    \end{subfigure}
    \caption{Performance comparison of ECP and ECPv2 on high-dimensional optimization problems with a budget $n = 200$, averaged over 100 repetitions.}
    \label{fig:high_dim_ecp}
\end{figure*}

\section{Appendix I: Additional Comparison with Other Global Optimization Algorithms}

While the primary focus of this work is on algorithms tailored for Lipschitz optimization, along with AdaLIPO, AdaLIPO+, PRS, and DIRECT, we also evaluate several widely used global optimization baselines including SMAC3 and Dual Annealing under the same experimental setup as described in the main paper.

\begin{figure*}[t]
    \centering

    \begin{subfigure}[t]{0.48\linewidth}
        \centering
        \includegraphics[width=\linewidth]{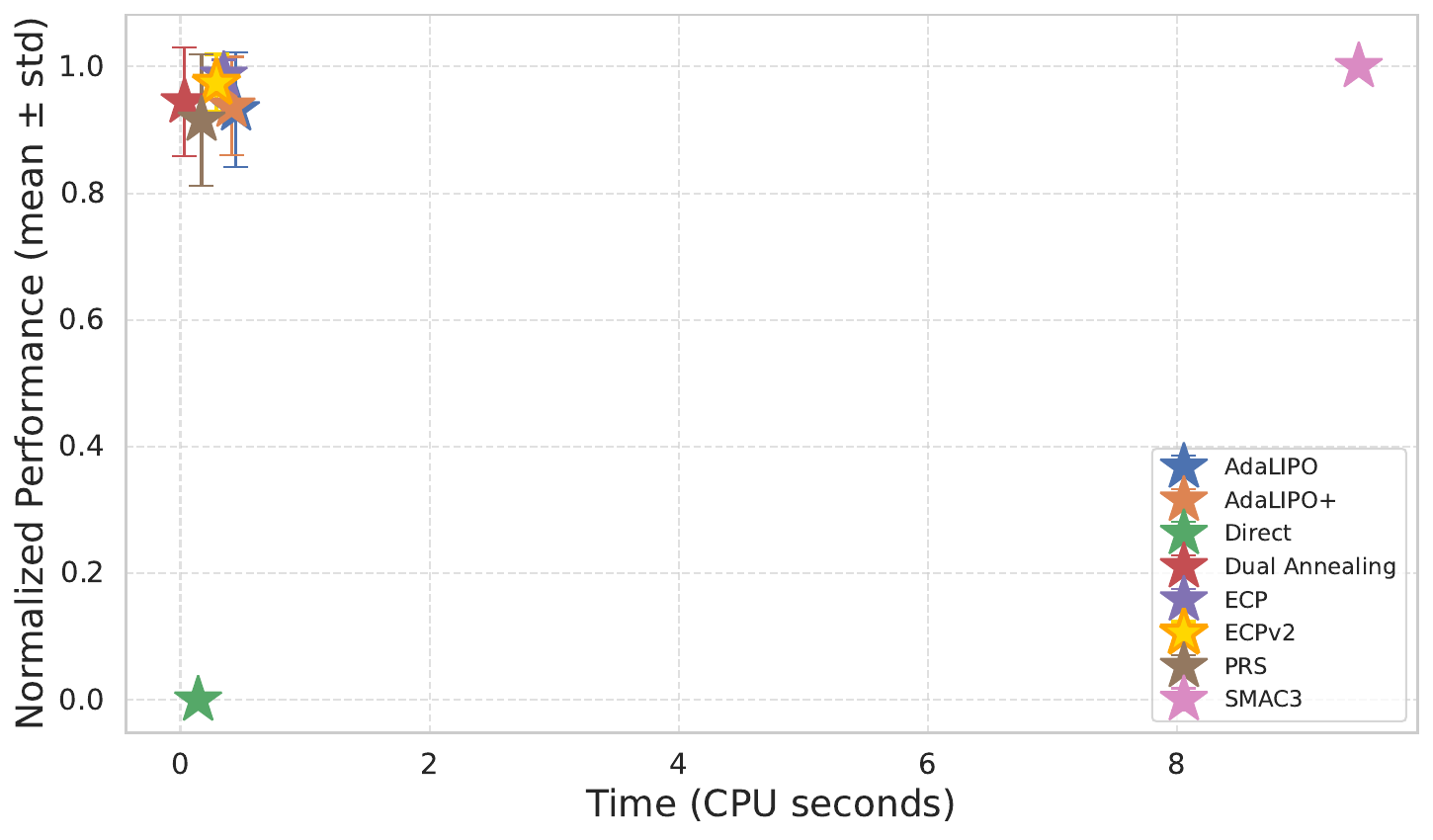}
        \caption{Rosenbrock-$d$, $d \in \{3, 100, 200, 300, 500\}$}
        \label{fig:rosenbrock-baselines}
    \end{subfigure}
    \hfill
    \begin{subfigure}[t]{0.48\linewidth}
        \centering
        \includegraphics[width=\linewidth]{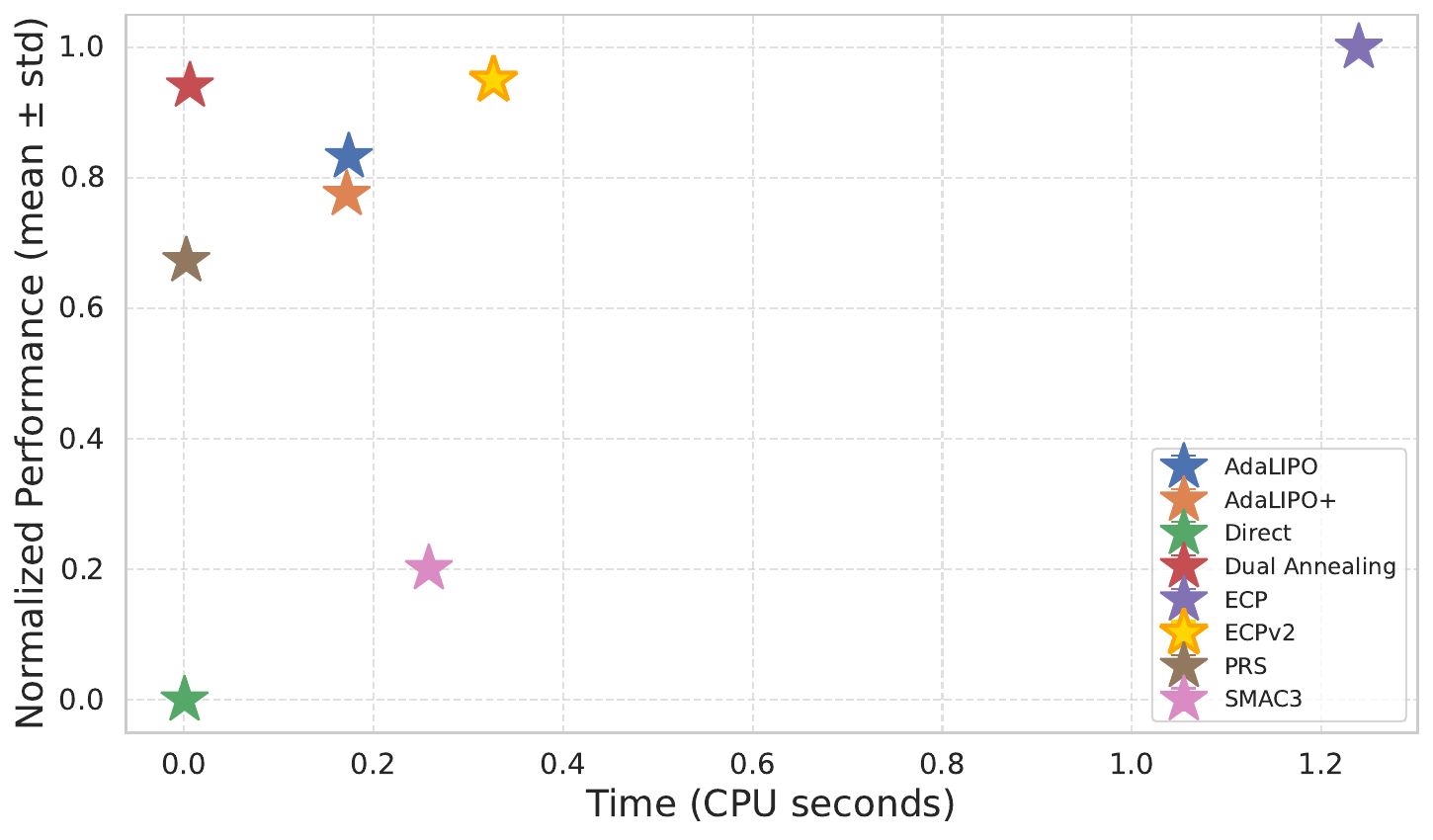}
        \caption{Rastrigin, $d = 2$}
        \label{fig:rastrigin-2D}
    \end{subfigure}

    \caption{
    Comparison of additional global optimization baselines on two benchmark functions: 
    (a) Rosenbrock across increasing dimensionalities, where dimensionality reduction improves performance; 
    (b) Rastrigin in 2D, where dimensionality reduction is skipped since \( d = 2 < d' = \left\lceil \frac{8 \log(5n)}{\delta^2 - \delta^3} \right\rceil \). 
    In this case, acceleration primarily comes from the use of Worst-$m$ and the lower-bound term \( \varepsilon_t^\oslash \).
    Results are aggregated over 100 runs per optimizer per problem. Each point represents the average final best objective value after a modest budget of \( n = 200 \) evaluations.
    }
    \label{fig:appendix-baselines}
\end{figure*}

Among the evaluated methods, SMAC3 demonstrates strong optimization capability but suffers from significant overhead due to its model-based nature, making it much slower than other approaches. DIRECT and PRS by contrast, are extremely fast but tends to perform poorly. DIRECT suffers with high-dimensional settings. 

Dual Annealing, ECP, and ECPv2 all strike a good balance between computational efficiency and optimization performance. Notably, ECPv2 achieves the best overall trade-off, consistently identifying high-quality solutions with minimal wall-clock time. ECPv2 is remarkably much faster than ECP in the rastrigin case, while providing a much competitive wall-clock time.

Both AdaLIPO and AdaLIPO+ also perform competitively, especially on the Rosenbrock family, with a slightly lower performance on rastrigin. Full quantitative results for these comparisons are provided in Figure~\ref{fig:appendix-baselines}.

\section{Appendix J: Experimental Details}

\subsection{Default Hyper-parameters}

We provide recommended default values for the main hyper-parameters of ECPv2, based on both theoretical analysis and empirical validation. These choices are robust across a wide range of optimization problems and were used in all our experiments unless otherwise stated.

To ensure a fair comparison, we avoid fine-tuning the hyper-parameters that are shared with the ECP algorithm. Instead, we retain the same values as those studied in ECP~\cite{fourati25ecp}, thereby preventing any performance advantage that could arise from tuning them.

\begin{itemize}
    \item \textbf{Initial precision} $\varepsilon_1$: Set to $0.01$ (same as ECP). This defines the initial threshold for the acceptance region. A smaller value yields a more conservative start.
    
    \item \textbf{Growth factor} $\tau_{n,d}$: Set as $\tau_{n,d} = \max(1 + \frac{1}{nd}, 1.001)$ (same as ECP). This controls how quickly the acceptance threshold expands after a series of rejections.
    
    \item \textbf{Exploration patience} $C$: Set to $1000$ (same as ECP). This parameter determines the number of consecutive rejections allowed before increasing $\varepsilon_t$.
    
    \item \textbf{Worst-$m$ memory} $m$: Set to $m = 8$. 
    
    \item \textbf{Distortion parameter} $\delta$: Set to $\delta = \frac{2}{3}$. This value minimizes the required projection dimension $d'$ for a given distortion tolerance, as discussed in Appendix D.
    
    \item \textbf{Projection confidence parameter} $\beta$: Set to $\beta = 5$, ensuring that pairwise distances are preserved under random projection with at least 96\% probability.
\end{itemize}

These settings were chosen to provide strong performance without requiring problem-specific tuning. Sensitivity analyses for $\delta$ and $\beta$ are presented in Appendix~D, and analysis for $m$ are presented in Appendix~E and Appendix F.

\subsection{Optimization Algorithms}

For ECP, we use the implementation provided by \cite{fourati25ecp} with the default parameters (https://github.com/fouratifares/ECP).

For DIRECT and Dual Annealing, we use the implementations from SciPy \citep{virtanen2020scipy}, with standard hyperparameters and necessary modifications to adhere to the specified budgets.

For AdaLIPO and AdaLIPO+, we use the implementation provided by \citep{serre2024lipo+}. To ensure fairness, we run AdaLIPO+ without stopping, thereby maintaining the same budget across all methods. Furthermore, since AdaLIPO requires an exploration probability \( p \), we fix it at \( 0.1 \), as done by the authors \citep{malherbe2017global}.

\subsection{Optimization Objectives}

We evaluate the proposed method on various global optimization problems. The problems were designed to challenge global optimization methods due to their highly non-convex curvatures \citep{molga2005test, simulationlib, fourati25ecp}.

\subsection{Comparison Protocol}

We use the same hyperparameters across all optimization tasks without fine-tuning them for each task, as this may not be practical when dealing with expensive functions.

We allocate a fixed budget of function evaluations, denoted by $n$, for all methods. The maximum value over the $n$ iterations is recorded for each algorithm. This maximum is then averaged over 100 repetitions, and both the mean and standard deviation are tracked. Furthermore, the wall-clock time (seconds) spent to finsh the budget $n$ is further tracked and averaged over the runs. We then report the average time spent to achieve the average maximum. 

More evaluations increase the likelihood of finding better points. To ensure that all methods fully utilize the budget, we eliminate any unnecessary stopping conditions, such as waiting times. For example, in AdaLIPO+, we use the variant AdaLIPO+(ns), which continues running even when large rejections occur.

\subsection{Compute and Implementations}

We implemented our method using open-source libraries in \texttt{Python 3.9} and \texttt{NumPy 1.23.4}. Experiments were conducted on two systems: a local machine with an 11th Gen Intel\textsuperscript{\textregistered} Core\texttrademark{} i7-1165G7 CPU @ 2.80\,GHz and 16\,GB RAM, and a high-performance computing (HPC) system with an Intel\textsuperscript{\textregistered} Xeon\textregistered{} Gold 6148 CPU @ 2.40\,GHz and 16\,GB RAM. All experiments were run on CPUs without the use of GPUs. In each plot, all methods are tested under exactly the same settings to ensure fairness.

\end{document}